\documentclass[lettersize,journal]{cls/IEEEtran} % Comment this line out if you need a4paper
\usepackage{enumitem}
\usepackage[table,usenames,dvipsnames]{xcolor}      % color
\usepackage{amsmath,amssymb,amsfonts,amsthm,dsfont,bbm} % math
\usepackage[linesnumbered,ruled,noend]{algorithm2e}% algorithms
\usepackage{graphicx,tabularx,adjustbox}
\usepackage{booktabs}
\usepackage{multirow}
\usepackage[font={footnotesize}]{caption}   %onehalfspacing
\usepackage{subcaption}
\usepackage{thmtools,thm-restate}
\usepackage[breaklinks=true, colorlinks, bookmarks=true, citecolor=Black, urlcolor=Violet,linkcolor=Black]{hyperref}
\usepackage[table]{xcolor}
\usepackage{siunitx}
\usepackage[nameinlink,capitalise]{cleveref}
\crefname{line}{line}{lines}
\crefname{figure}{Fig.}{Figs.}
\Crefname{figure}{Fig.}{Figs.}
\crefname{equation}{Eq.}{Eqs.}
\Crefname{equation}{Eq.}{Eqs.}
\crefname{section}{Sec.}{Secs.}
\Crefname{section}{Sec.}{Secs.}
\crefname{definition}{Def.}{Defs.}
\Crefname{definition}{Def.}{Defs.}
\crefname{algorithm}{Alg.}{Algs.}
\Crefname{algorithm}{Alg.}{Algs.}
\crefname{subassumption}{Asm.}{Asms.}
\Crefname{subassumption}{Asm.}{Asms.}
\Crefname{problem}{Problem}{Problems}
\crefname{problem}{Problem}{Problems}

\usepackage[activate={true,nocompatibility},final,kerning=true,spacing=true,factor=1500,stretch=10,shrink=10]{microtype}

\usepackage{csquotes}
\usepackage[
	maxbibnames=99,
	maxcitenames=2,
	natbib=true,
	style=numeric-comp,
	backend=biber,
	sorting=none,
	giveninits=true,
	url=false,
	doi=false,
	eprint=false,
	isbn=false,
]{biblatex}

\usepackage{pbalance}

\newtheorem{theorem}{Theorem}

\newtheorem*{proposition*}{Proposition}

\newtheorem*{corollary*}{Corollary}

\theoremstyle{definition}
\newtheorem{definition}{Definition}[]
\newtheorem{assumption}{Assumption}
\newtheorem*{assumption*}{Assumption}
\newtheorem*{problem*}{Problem}
\newtheorem{problem}{Problem}
\theoremstyle{remark}

\newtheorem*{solution*}{Solution}
\newtheorem{example}{Example}
\newtheorem*{example*}{Example}

\newcommand{\scaleMathLine}[2][1]{\resizebox{#1\linewidth}{!}{$\displaystyle{#2}$}}
\SetKwComment{Comment}{/* }{ */}

\DeclareMathOperator{\argmin}{argmin}

\DeclareMathOperator{\rank}{rank}

\newcommand{\dof}{\textsc{d\scalebox{0.8}{o}f}\xspace}

\newcommand{\eg}{\emph{e.g.},\xspace}
\newcommand{\ie}{\emph{i.e.},\xspace}

\newcommand{\NEW}[1]{{#1}}
\newcommand{\REV}[1]{{#1}}
\newcommand{\REVV}[1]{{#1}}
\newcommand{\REVVV}[1]{{#1}}
\newcommand{\MATHREV}[1]{{{#1}}}
\newcommand{\MATHREVV}[1]{{{#1}}}
\newcommand{\BLUE}[1]{{\color{magenta}#1}}
\newcommand{\BETTER}[1]{{\boldsymbol{#1}}}
\newcommand{\RED}[1]{{\underline{\color{red}\boldsymbol{#1}}}}

\newcommand{\calA}{{\cal A}}
\newcommand{\calB}{{\cal B}}
\newcommand{\calC}{{\cal C}}

\newcommand{\calE}{{\cal E}}

\newcommand{\calG}{{\cal G}}

\newcommand{\calS}{{\cal S}}
\newcommand{\calT}{{\cal T}}
\newcommand{\calU}{{\cal U}}

\newcommand{\calW}{{\cal W}}
\newcommand{\calX}{{\cal X}}

\newcommand{\calZ}{{\cal Z}}

\newcommand{\bfa}{\mathbf{a}}
\newcommand{\bfb}{\mathbf{b}}
\newcommand{\bfc}{\mathbf{c}}
\newcommand{\bfd}{\mathbf{d}}
\newcommand{\bfe}{\mathbf{e}}
\newcommand{\bff}{\mathbf{f}}

\newcommand{\bfh}{\mathbf{h}}

\newcommand{\bfp}{\mathbf{p}}
\newcommand{\bfq}{\mathbf{q}}
\newcommand{\bfr}{\mathbf{r}}

\newcommand{\bft}{\mathbf{t}}
\newcommand{\bfu}{\mathbf{u}}
\newcommand{\bfv}{\mathbf{v}}
\newcommand{\bfw}{\mathbf{w}}
\newcommand{\bfx}{\mathbf{x}}
\newcommand{\bfy}{\mathbf{y}}
\newcommand{\bfz}{\mathbf{z}}

\newcommand{\bfalpha}{\boldsymbol{\alpha}}
\newcommand{\bfbeta}{\boldsymbol{\beta}}
\newcommand{\bfgamma}{\boldsymbol{\gamma}}

\newcommand{\bfpi}{\boldsymbol{\pi}}

\newcommand{\bfsigma}{\boldsymbol{\sigma}}
\newcommand{\bftau}{\boldsymbol{\tau}}

\newcommand{\bfomega}{\boldsymbol{\omega}}

\newcommand{\bfA}{\mathbf{A}}
\newcommand{\bfB}{\mathbf{B}}
\newcommand{\bfC}{\mathbf{C}}

\newcommand{\bfG}{\mathbf{G}}

\newcommand{\bfI}{\mathbf{I}}
\newcommand{\bfJ}{\mathbf{J}}

\newcommand{\bfM}{\mathbf{M}}

\newcommand{\bfR}{\mathbf{R}}

\newcommand{\bfZ}{\mathbf{Z}}

\newcommand{\bbE}{\mathbb{E}}

\newcommand{\bbG}{\mathbb{G}}

\newcommand{\bbR}{\mathbb{R}}

\newcommand{\bbV}{\mathbb{V}}

\newcommand{\scFK}{\textsc{fk}\xspace}
\newcommand{\scCC}{\textsc{cc}\xspace}
\newcommand{\scNN}{\textsc{nn}\xspace}
\newcommand{\scSIMD}{\textsc{simd}\xspace}
\newcommand{\FLASK}{\textbf{\texttt{FLASK}}\xspace}
\newcommand{\FLASKRRTC}{{\texttt{FLASK-RRTConnect}}\xspace}
\newcommand{\FLASKRRT}{{\texttt{FLASK-RRT}}\xspace}
\newcommand{\FLASKSST}{{\texttt{FLASK-SST}$^*$}\xspace}

\begin{document}
\title{Ultrafast Sampling-based Kinodynamic Planning via Differential Flatness} 
% \title{Ultrafast SIMD-Accelerated Sampling-based Kinodynamic Planning via Differential Flatness} % or CPU-Accelerated, we ant CPU + fast...
%\title{CPU-Accelerated Sampling-based Kinodynamic Planning in Milliseconds via Differential Flatness}

% \author{Author Names Omitted for Anonymous Review}
\author{Thai~Duong$^1$, Clayton W. Ramsey$^1$, Zachary Kingston$^{1*}$, Wil Thomason$^{1*}$, and Lydia E. Kavraki$^1$% <-this % stops a space
	%\thanks{We gratefully acknowledge support from ...}%
	\thanks{$^1$Thai Duong, Clayton W. Ramsey, Zachary Kingston and Wil Thomason are with the Department of Computer Science, Rice University, 
	Houston, TX, 77005, USA. Lydia E. Kavraki is with the Department of Computer Science, Rice University, Houston, TX, 77005 USA, and the Ken Kennedy Institute, Houston, TX, 77005 USA. Correspondence: thaiduong@rice.edu, kavraki@rice.edu.}
\thanks{$^*$Work by Zachary Kingston and Wil Thomason was done while they were at Rice University.}
    \thanks{Videos and software will be made available at \url{https://kavrakilab.org/flask}.}
        \thanks{Thai Duong and Lydia E. Kavraki were supported in part by NSF 2336612, NSF 2411219 and ERDC W912HZ2320003. Clayton W. Ramsey was supported by NASA NSTGRO 80NSSC24K1391.}
}

\markboth{IEEE Transactions on Robotics}%
{}

\maketitle
% \thispagestyle{empty}
% \pagestyle{empty}

%%%%%%%%%%%%%%%%%%%%%%%%%%%%%%%%%%%%%%%%%%%%%%%%%%%%%%%%%%%%%%%%%%%%%%%%%%%%%%%%
\begin{abstract}
Motion planning under dynamics constraints, \ie kinodynamic planning, enables safe robot operation by generating dynamically feasible trajectories that the robot can accurately track. For high-\dof robots such as manipulators, sampling-based motion planners are commonly used, especially for complex tasks in cluttered environments. However, enforcing constraints on robot dynamics in such planners requires solving either challenging two-point boundary value problems (BVPs) or propagating robot dynamics, both of which \REV{cause} computational~bottlenecks that drastically increase planning times. Meanwhile, recent efforts have shown that sampling-based motion planners can generate plans in microseconds using parallelization, but are limited to geometric paths. This paper develops \REVV{\normalsize{\FLASK}}, a fast parallelized sampling-based kinodynamic motion planning \REVV{framework} for a broad class of differentially flat robot systems, including manipulators, ground and aerial vehicles, and more. Differential flatness allows us to transform the motion planning problem from the original state space to a flat output space, where an analytical time-parameterized solution of \REV{the BVP problem} can be obtained. A trajectory in the flat output space is then converted back to a closed-form dynamically feasible trajectory in the original state space, enabling fast validation via ``single instruction, multiple data" parallelism. Our framework is fast, exact, and compatible with any sampling-based motion planner, \REV{while offering theoretical guarantees on probabilistic \REVVV{exhaustivity} and asymptotic optimality based on the closed-form BVP~solutions}. We extensively verify the effectiveness of our approach in both simulated benchmarks and real experiments with cluttered and dynamic environments, requiring mere microseconds to milliseconds of planning time.%
\end{abstract}
\begin{IEEEkeywords}
Sampling-based Kinodynamic Planning, Differential Flatness, Hardware Acceleration
\end{IEEEkeywords}
%%%%%%%%%%%%%%%%%%%%%%%%%%%%%%%%%%%%%%%%%%%%%%%%%%%%%%%%%%%%%%%%%%%%%%%%%%%%%%%%
\section{Introduction}

Motion planning is critical for safe and accurate robot operation in many applications, \emph{e.g.}, transportation~\cite{claussmann2020review}, environment monitoring~\cite{honig2018trajectory, zhou2022swarm}, warehouses~\cite{eppner2016lessons}, healthcare~\cite{riek2017healthcare}, and home assistance~\cite{jenamani2025feast}. In such applications, the robot has to react quickly to changes in its environment, promptly (re)plan a collision-free trajectory, and safely track the trajectory to reach a goal state, using a controller. This task requires fast motion planning, subject to both collision avoidance and robot dynamics constraints, to generate a dynamically feasible trajectory from a start to a goal that the robot is able to follow. While recent advances have significantly reduced motion planning times for geometric planning via parallelization techniques~\cite{thomason2024vamp, ramsey2024-capt}, sampling-based motion planning under dynamics constraints, \emph{i.e.}, kinodynamic planning, remains a challenge for real-time applications especially for high degree-of-freedom (\dof) robots such as manipulators. In this paper, we address this problem by leveraging the \emph{differential flatness} property of many common robot systems, such as ground and aerial vehicles, manipulators, and more, to enable ultrafast sampling-based kinodynamic motion planning via parallelization techniques.

\begin{figure}
\centering
\begin{subfigure}[t]{0.243\textwidth}
        \centering
\includegraphics[width=\textwidth]{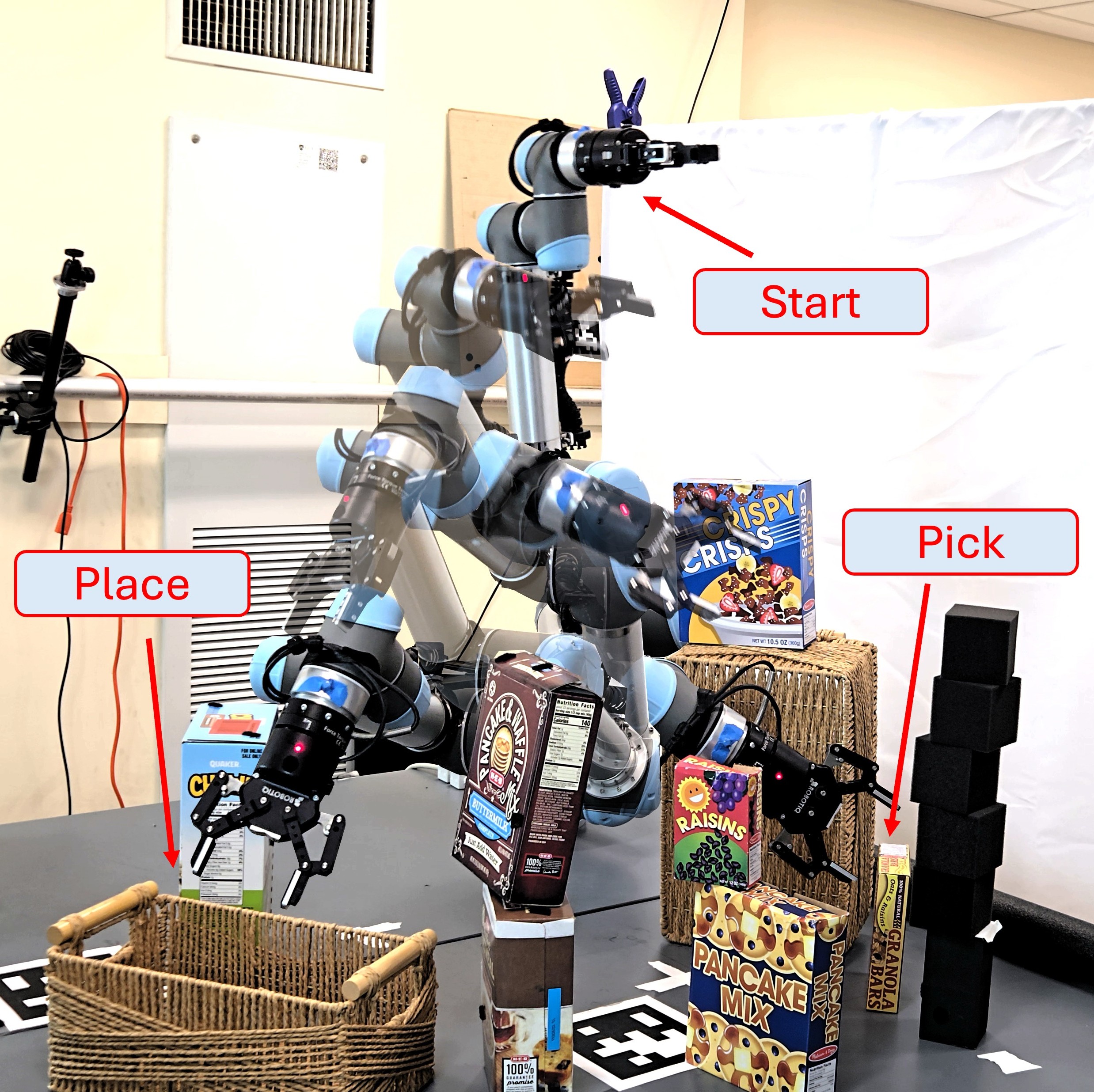}
\caption{Tracking our trajectory}
        \label{fig:demo_kino_plan}
\end{subfigure}%
\hfill
\begin{subfigure}[t]{0.243\textwidth}
        \centering
\includegraphics[width=\textwidth]{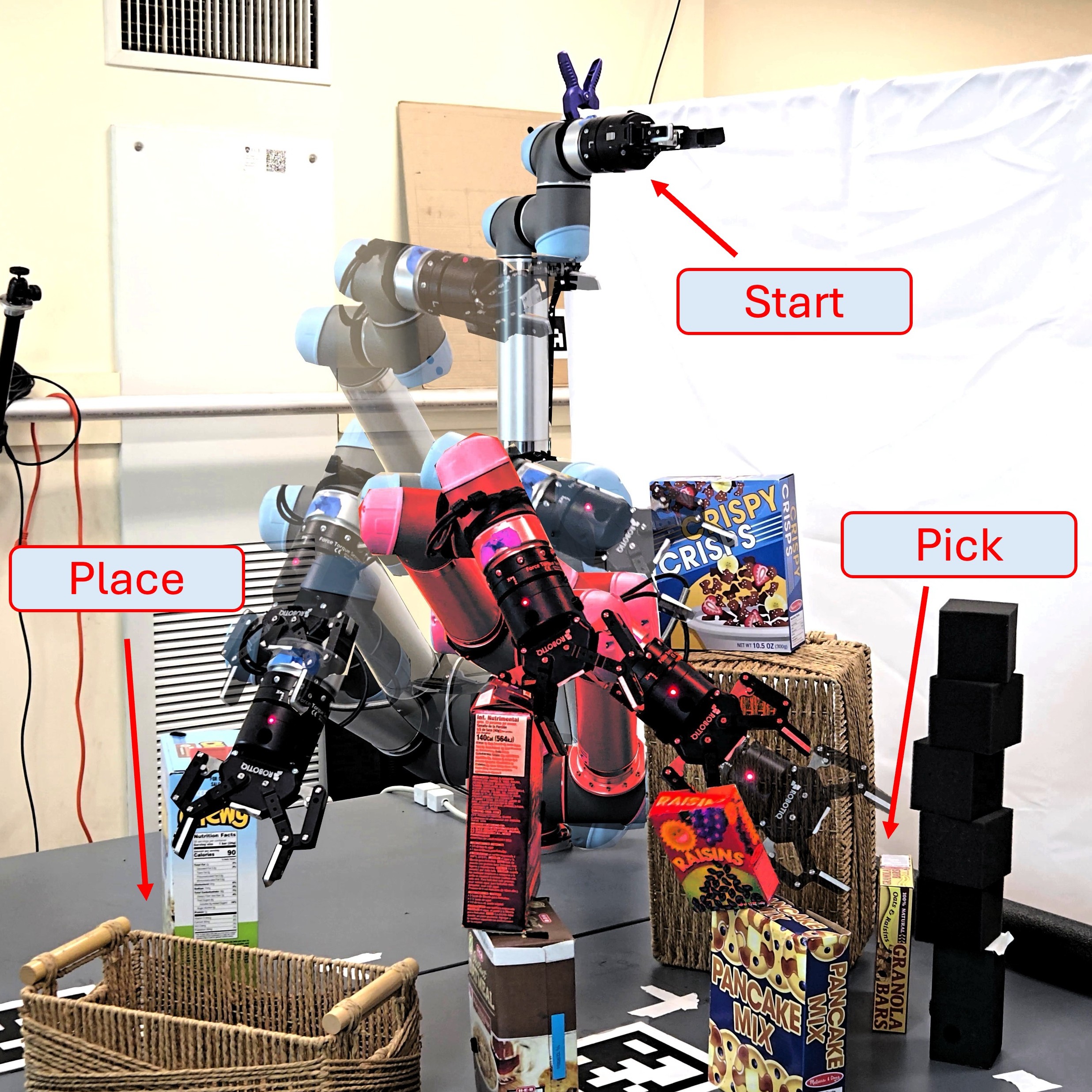}
\caption{Tracking a geometric path}
        \label{fig:demo_geo_plan}
\end{subfigure}%

\caption{
Motion planning for a ``\emph{pick and place}" task in a cluttered environment: a dynamically feasible trajectory (a) generated from our \FLASK framework can be accurately tracked by a UR5 robot. Meanwhile, tracking a geometric path (b) leads to collisions (shown in red) that topple the nearby boxes. Multiple intermediate states are overlaid to illustrate the robot's motion. Our planning framework is real-time and generates trajectories in $\sim 90\mu s$ by leveraging differential flatness and ``single instruction, multiple data" (\scSIMD) parallelism. 
}

\label{fig:ur5_pickandplace_annotated}
\end{figure}

Kinodynamic planning~\cite{lavalle2001randomized, hsu2022randomizekinodynamic, lavalle2006planning} generates dynamically feasible trajectories by directly enforcing dynamics constraints in the planner. Without such constraints, controllers often struggle to accurately track motion plans, especially with high-speed maneuvers, potentially leading to unsafe behavior. Optimization-based approaches, \emph{e.g.},~\cite{augugliaro2012generation, schulman2014trajopt, bonalli2019gusto, tassa2012synthesis, Mastalli2020Crocoddyl, ortizharo2025iDbAstar}, formulate kinodynamic planning as a nonlinear optimization problem and solve for the solution by minimizing a trajectory cost subject to constraints such as collision avoidance, robot dynamics, and joint angle and velocity limits. While these approaches can work well in high-dimensional state spaces, they are often susceptible to local minima and are sensitive to initial solutions. Meanwhile, search-based approaches, \emph{e.g.},~\cite{pivtoraiko2005efficient, pivtoraiko2011kinodynamic, cohen2010search, liu2017search, ajanovic2018searchbased}, build a graph in the state space on a grid or lattice and search for a sequence of motion primitives that connects the start and the goal. Search-based methods provide optimality guarantees but suffer from the curse of dimensionality, thus requiring complex heuristics or domain knowledge to guide the search. On the other hand, sampling-based approaches randomly sample to expand from the start to the goal, constructing a tree \cite{lavalle2001randomized, webb2013kinodynamicRRT, karaman2010kinorrt, hauser2016aorrt, li2016sst, verginis2023kdf} or sometimes a graph for simple systems~\cite{vandenberg2007kinoroadmap}.

A key challenge in enforcing dynamics constraints in sampling-based planning is to solve challenging two-point boundary value problems (BVPs) for dynamically feasible \emph{local paths} between robot states. 
Many planners~\cite{lavalle2001randomized, hsu2022randomizekinodynamic, li2016sst} avoid this by instead sampling an \REVV{often sub-optimal control} input and then propagating the dynamics forward using numerical integration~\cite{butcher2016numerical}. \REVV{Dynamic propagation with randomly sampled control often causes the planner to ``wander" to irrelevant parts of the state space and might lead to longer paths and longer planning times}. Other works solve the BVP problem for simple systems~\cite{karaman2010kinorrt} and linearized robot dynamics~\cite{webb2013kinodynamicRRT, Perez2012LQRRRT}, or approximate the solution using a neural network~\cite{wolfslag2018rrt, chiang2019rl, zheng2021sampling}. Instead, we obtain an \emph{exact analytical solution} of the BVP, thanks to the \emph{differential flatness} of many robot platforms such as mobile robots and manipulators~\cite{murray1995differential, mellinger2011minimumsnap}.%

Differential flatness~\cite{murray1995differential} is a powerful system property that, similar to feedback linearization~\cite{khalil2002nonlinear}, simplifies motion planning and control designs by converting the nonlinear robot dynamics to an equivalent linear system. For differentially flat systems, the robot states and control inputs can be described by a set of carefully chosen flat outputs and their derivatives (see \cref{def:diff_flat} for more details). A trajectory in the flat output space can be converted to a dynamically feasible trajectory in the robot's original state space, allowing us to simplify motion planning problems via a change of variables. This technique has been used mainly to plan trajectories for mobile robots, most notably quadrotors~\cite{mellinger2011minimumsnap, mellinger2012_trajgen, liu2017search}. However, the planning time is still hindered by computationally expensive subroutines such as forward kinematics and collision checking, which are particularly worse with high-\dof robots such as manipulators.

Recently, advances in parallelization~\cite{sundaralingam2023curobo, thomason2024vamp} have improved planning time to the range of microseconds and milliseconds. However, these works focus on either geometric~\cite{thomason2024vamp} or kinematic~\cite{sundaralingam2023curobo} planning problems. One family of these techniques uses ``fine-grained" parallelism via ``single instruction, multiple data" (\scSIMD) instructions,  available on consumer CPUs, to perform collision checking on multiple samples at the same time~\cite{thomason2024vamp, ramsey2024-capt}. However, directly enforcing dynamics constraints in parallel is non-trivial, as it requires sampling a trajectory at different time steps in advance while obeying robot dynamics. We address this problem by transforming the kinodynamic planning problem from the original state space to a flat output space, where closed-form \emph{local paths} are time-parameterized polynomials and hence, amenable to \scSIMD parallelization for fast collision checking. The resulting flat output trajectory is converted to a dynamically feasible collision-free trajectory in the original state space for the robot to execute. Our approach is fast, exact and general for most sampling-based planners.

In summary, we develop \FLASK\footnote{\REV{The code will be made publicly available at \url{https://kavrakilab.org/flask}.}}, a ``Differential \textbf{Fl}atness-based \textbf{A}ccelerated \textbf{S}ampling-based \textbf{K}inodynamic Planning" framework that:
\begin{itemize}
    \item generates dynamically feasible trajectories in the range of milliseconds for nonlinear differentially-flat robot systems, such as  manipulators, and mobile robots,
    \item constructs a planning tree or graph by solving the boundary value problem or dynamics propagation for continuous closed-form trajectories in the flat output space,
    \item performs fast collision checking of the closed-form trajectories using fine-grained \scSIMD parallelism.
\end{itemize}
\REVV{We provide a theoretical analysis on our approach's probabilistic \REVVV{exhaustivity}, a concept stronger than completeness introduced in \cite{schmerling2015optimal_driftless, kavraki1998analysis}, and optimality guarantees.}
We extensively verify our approach with challenging motion planning problems on low- and high-\dof, fully- and under-actuated robot systems in both simulated and real experiments. 
\REV{\FLASK is shown to achieve planning times of merely a few milliseconds, even on high-\dof robot platforms in cluttered environments, while respecting dynamics constraints by design.
Our framework is general and can be integrated in the context of many common sampling-based motion planners, such as RRT-Connect~\cite{kuffner2000rrtconnect}, SST*~\cite{li2016sst} and others, effective turning geometric planners into kinodynamic planners with theoretical guarantees.}%

\section{Related Work}
\subsection{Motion Planning with Dynamics Constraints}
\label{subsec:kino_mp}
Kinodynamic motion planning~\cite{schmerling2019kinodynamic}, \emph{i.e.}, motion planning under dynamics constraints, is a challenging task where the robot plans a dynamically feasible trajectory from a start, \emph{e.g.},~a robot state, containing its joint angles and derivatives, to a goal region while satisfying the robot dynamics and avoiding collision with obstacles in the environment. There are three main approaches: optimization-based, search-based, and sampling-based kinodynamic planning.

\emph{Optimization-based kinodynamic planning}, such as \mbox{TrajOpt~\cite{schulman2014trajopt}}, GuSTO~\cite{bonalli2019gusto}, and Crocoddyl~\cite{Mastalli2020Crocoddyl}, generates robot trajectories by formulating and solving an optimization problem, subject to dynamics and collision-avoidance constraints, with an objective function measuring the cost of the trajectory. This optimization problem is often nonlinear and can be solved via sequential convex programming~\cite{augugliaro2012generation, schulman2014trajopt, chen2015decoupled, bonalli2019gusto}, iterative linear quadratic regulators~\cite{tassa2012synthesis}, differential dynamic programming~\cite{Howell2019Altro, Mastalli2020Crocoddyl}, augmented Lagrangian methods~\cite{toussaint2017tutorial}, or general solvers~\cite{l2022whole,beck2025vitro}. In general, optimization-based trajectory planners provide smooth trajectories, but often get stuck in a local minimum and require good initialization. While trajectory optimization with a graph of convex sets~\cite{marcucci2023motion, vonwrangel2024gcs, graesdal2024_contactplanning} can avoid local minima, it requires expensive precomputation of the graph in the state space.

\emph{Search-based kinodynamic planning} ~\cite{pivtoraiko2005efficient, pivtoraiko2011kinodynamic, cohen2010search, liu2017search, ajanovic2018searchbased, mishani2025srmp} instead constructs a graph, often on a predefined grid or lattice, where each edge is chosen from a precomputed, discrete set of motion primitives, generated by propagating the robot dynamics for a short period of time under a set of control inputs. A motion primitive is valid if it does not collide with an obstacle. A search algorithm, such as $A^*$~\cite{hart1968formal}, can be used to find the shortest path on the graph, providing a trajectory as a sequence of motion primitives connecting the start with the goal. A major challenge of search-based approaches is the need to precompute dynamics propagation, where a numerical approximation with fine lattice resolution is required for high accuracy. They also suffer from the curse of dimensionality and require a good heuristic to guide the search.

\emph{Sampling-based kinodynamic planning}~\cite{orthey2024-review-sampling, webb2013kinodynamicRRT, karaman2010kinorrt, lavalle2001randomized, hsu2022randomizekinodynamic, hauser2016aorrt, li2016sst, verginis2023kdf} uses sampling to discretize the high-dimensional state space and build a tree (or, less often, a graph for simple systems such as car-like robots~\cite{vandenberg2007kinoroadmap}), growing from the start towards the goal region. To find a feasible trajectory connecting two samples, a difficult \emph{two-point boundary value problem} (BVP) has to be solved~\cite{lavalle2006planning}, posing a major challenge for sampling-based kinodynamic planning. A common approach to avoid solving a BVP problem for tree expansion is to sample the control input space, and propagate the robot dynamics, \emph{e.g.}, using a numerical integrator~\cite{lavalle2001randomized, hsu2022randomizekinodynamic, li2016sst} or physics-based models~\cite{gao2025parallel}, for a short period of time, with asymptotic optimality guarantees analyzed in~\cite{li2016sst}. 
Other approaches only solve the BVP problem for simple robot dynamics with low-dimensional state space~\cite{karaman2010kinorrt, webb2013kinodynamicRRT}, or linearized dynamics~\cite{webb2013kinodynamicRRT, Perez2012LQRRRT}. Meanwhile, learning-based kinodynamic motion planning uses neural networks to approximate the control input and the steering cost of expanding the tree towards a new node~\cite{wolfslag2018rrt, chiang2019rl, zheng2021sampling, ichter2019latentspacemp, li2021mpc-mpnet}.

BVPs and dynamics propagation \REV{cause} major computational bottlenecks for kinodynamic planning. While BVPs can be analytically solved for linear or simple systems~\cite{webb2013kinodynamicRRT, Perez2012LQRRRT}, this is not true in general for most nonlinear systems and it is computationally expensive to obtain an approximate solution.
Kinodynamic RRT$^*$~\cite{webb2013kinodynamicRRT, Perez2012LQRRRT} linearizes the dynamics around an operating point and demonstrates that BVPs can be solved in closed form for certain robots, \emph{e.g.}, quadrotors around a hovering position. However, the approximated solution only works well around the operating point, limiting aggressive maneuvers.  \REV{To avoid solving difficult BVPs, most existing kinodynamic planners~\cite{li2016sst} resort to dynamics propagation, where a constant value of the control is sampled instead. A numerical integrator such as Euler's or Runge-Kutta methods~\cite{butcher2016numerical} is then used to sequentially calculate the next robot state over multiple small time steps to maintain high accuracy. \REVV{Typically, the sampled control is suboptimal, and therefore, causes the planner to wander in the state space before reaching the goal}. Existing planners often mitigate this ``wandering" effect via best-state selection and tree pruning~\cite{li2016sst}, but still require long planning times to find a dynamically feasible trajectory.}

Optimization-based, search-based and sampling-based approaches can be combined to improve trajectory generation~\cite{ortizharo2024iDbRRT, ortizharo2025iDbAstar, natarajan2024pinsat, natarajan2023torque, natarajan2021interleaving, sakcak2019sampling, shome2021asymptotically, kamat2022bitkomo, choudhury2016regionally, alwala2021joint}. For example, a path or trajectory from sampling-based or search-based planners can be used as an initial solution for optimization-based ones~\cite{ortizharo2024iDbRRT, ortizharo2025iDbAstar}. INSAT planners~\cite{natarajan2024pinsat, natarajan2023torque, natarajan2021interleaving} interleave between search-based planning on a low-dimensional subspace and optimization-based planning on the full-dimensional space to improve planning times and success rates. Meanwhile, a library of precomputed motion primitives from search-based planning can be sampled to expand the planning tree in sampling-based motion planning~\cite{sakcak2019sampling, shome2021asymptotically}. Furthermore, optimized local paths can be used to generate collision-free edges and bias the sampling regions in a sampling-based planner~\cite{kamat2022bitkomo, choudhury2016regionally}. Another approach is to fit a geometric path with a time-parameterized trajectory using trajectory optimization such as TOPP-RA~\cite{pham2018toppra} or Ruckig~\cite{berscheid2021ruckig}, typically without considering collision avoidance constraints.

Our method performs fast sampling-based kinodynamic planning by leveraging \scSIMD parallelism (\cref{subsec:prelim_simd}) and the differential flatness of the dynamics of common robot platforms~\cite{allen2019real, liu2017search, bascetta2017flat, welde2021dynamically} to \REV{directly tackle the BVP problems}. A system is called \emph{differentially flat} if there exist variables, called flat outputs, whose values and derivatives dictate the robot state and control inputs (see \cref{def:diff_flat} for details). This approach has been applied to sampling-based~\cite{bascetta2017flat, ye2022efficient, wang2024differential, seemann2014exact}, search-based~\cite{liu2017search}, and optimization-based motion planning~\cite{mellinger2011minimumsnap, welde2021dynamically, han2023efficient, hao2005differential, beaver2024optimal}, but primarily for low-dimensional robot platforms \REV{(often with simple geometry shapes such as spheres or boxes for collision checking)}, \emph{e.g.}, quadrotors~\cite{mellinger2011minimumsnap}, \REV{unicycles and 2-link arms~\cite{beaver2024optimal}}, or for a specific system, \emph{e.g.}, gantry cranes~\cite{vu2022sampling}. Instead, we integrate differential flatness with sampling-based kinodynamic planning for generic high-\dof robots, where \REV{forward kinematics and collision checking are complex and time-consuming} besides enforcing dynamics constraints. While most existing methods approximate \REV{a solution to the BVP problems}, differential flatness allows us to obtain a closed-form fixed- or minimum-time polynomial BVP solution. Such~closed-form trajectories are amenable to parallelized collision checking using \emph{fine-grained} parallelization based on \scSIMD instructions (\cref{subsec:prelim_simd}), enabling trajectory generation in microseconds to milliseconds. \REVV{It is also important to note that existing flatness-based sampling-based planners either do not prove completeness and optimality~\cite{wang2024differential, seemann2014exact, ye2022efficient} or loosely mention the existing guarantees of geometric planners, \eg those of RRT*~\cite{bascetta2017flat, vu2022sampling}. Achieving such theoretical guarantees with closed-form BVP solutions is nontrivial as it requires careful consideration of nonlinear local paths rather than linear edges on the graph. We address this by offering a theoretical analysis of probabilistic exhaustivity, which is stronger than completeness, and asymptotic optimality of our approach in \cref{sec:theoretical_analysis}.}
\subsection{Hardware-accelerated Motion Planning}
\label{subsec:prelim_simd}
Motion planning can be time-consuming as it relies on multiple computationally expensive subroutines, such as forward kinematics (\scFK), collision checking (\scCC) and nearest neighbor (\scNN) search. With recent advances in parallel computing, much progress has been made to improve these subroutines and enhance planning performances via both \emph{coarse-grained} and \emph{fine-grained} parallelization.

\emph{Coarse-grained} parallelization techniques typically run multiple subroutines or even multiple instances of the planners at the thread or process levels. Early work focuses on improving motion plans by merging and averaging out the paths from different instances of the planners~\cite{raveh2011little}, or by adapting existing planners to run their subroutines in parallel~\cite{amato1999probabilistic, ichnowski2012parallel}. Parallelized motion planning can also be achieved by partitioning the configuration space~\cite{jacobs2012scalable,werner2025gcs} or the planning tree construction~\cite{plaku2005sampling, vu2022sampling, perrault2025kino}. Closely related to our work,~\cite{vu2022sampling} also employs differential flatness to generate dynamically feasible trajectories, however, by coarsely growing multiple planning subtrees in parallel for a specific gantry crane system. Recently, the prevalence of GPUs enables impressive improvements in \REV{motion planning~\cite{bhardwaj2022storm, sundaralingam2023curobo, fishman2023motion, le2025global, le2025model}} but suffers from costly GPU resources and communication overhead between CPUs and GPUs. GPU-based planning methods often grow a planning tree in parallel by propagating the robot dynamics, \mbox{\emph{e.g.}, Kino-PAX~\cite{perrault2025kino}}, or via approximate dynamic
programming recursion, \emph{e.g.}, GMT*~\cite{ichter2017gmt}, and are shown to generate a robot trajectory in milliseconds for low-\dof systems with simple forward kinematics. For high-\dof robots, cuRobo~\cite{sundaralingam2023curobo} generates geometric paths as seeds for a parallelized trajectory optimization solver under kinematics constraints such as velocity, acceleration and~jerk limits.

\emph{Fine-grained} parallelization techniques focus on parallelizing primitive operations in a motion planning algorithm, \emph{e.g.},  via ``single instruction/multiple data" (\scSIMD) instructions on consumer-grade CPUs. This has been shown to provide extremely fast geometric motion planning subject to collision avoidance constraints, with planning times ranging from microseconds to milliseconds~\cite{thomason2024vamp, ramsey2024-capt, wilson2024nearest}. To check an edge (a line segment) for collision, VAMP~\cite{thomason2024vamp} uses \scSIMD instructions to efficiently perform parallelized forward kinematics and collision checking on multiple configurations, generated via linear interpolation. While this approach is promising, it is challenging to enforce additional requirements via parallelized primitive operations such as dynamics and non-holonomic constraints. \REV{Particularly, enforcing nonlinear dynamics constraints with fine-grained parallelism is non-trivial due to the intractability of BVP problems, as mentioned in \cref{subsec:kino_mp}}. We instead leverage the differential flatness property to obtain time-parameterized solutions that can be discretized at arbitrary times and hence, amenable to SIMD-based primitive operations. \REV{Our deliberate fusion of differential flatness and SIMD parallelism effectively brings the benefits of ``fine-grained" parallelized forward kinematics and collision checking to kinodynamic planning. \REVV{Thanks to the closed-form BVP solution in the flat output space, we will later show that our approach achieves planning times of merely a few milliseconds.}}%,
\section{Problem Formulation}
Consider a robot with state $\bfx \in \calX$ and control $\bfu \in \calU$. For example, the state of a manipulator can include the joint configuration $\bfq$ and possibly its derivatives, while the control input can be the torques being applied on the robot joints. Let $\calX_{free}$ and $\calX_{obs} = \calX \setminus \calX_{free}$ be the free and occupied spaces, respectively, which can be generated from robot constraints such as collision avoidance or joints limits. The robot motion is governed by a  nonlinear dynamics function $\bff$ of the state $\bfx$ and the control $\bfu$ as follows:
\begin{equation} \label{eq:dynamics}
    \dot{\bfx} = \bff (\bfx, \bfu).
\end{equation}
A time-parameterized trajectory $\bfsigma: [0,1] \rightarrow \calX$ for time $t \in [0,1]$ is called dynamically feasible if there exists a \mbox{time-parameterized} control input $\bfu: [0,1] \rightarrow \calU$ such that the robot dynamics is satisfied by the trajectory: 
\begin{equation}
    \dot{\bfsigma}\REV{(t)} = \bff(\bfsigma\REV{(t)}, \bfu\REV{(t)}), \quad \REV{\forall t \in [0,1]}.
\end{equation}
Given an initial robot state $\bfx_s$ and a goal region $\calG\REV{\subset \calX}$, \REV{as a subset of $\calX$}, the kinodynamic motion planning problem aims to find a dynamically feasible trajectory $\bfsigma(t)$ with control input $\bfu(t)$, connecting the initial state $\bfx_s$ to the goal region $\calG$ in the free space $\calX_{free}$  as described in Problem \ref{problem:kinodyn_planning}. \REVV{The goal region $\calG$ commonly represents a desired state or a region that a goal state can be sampled from.}

\begin{problem} \label{problem:kinodyn_planning}
    Given an initial robot state $\bfx_s$ and a goal region $\calG \REV{\subset \calX}$, find a control input $\bfu(t)$ that generates a dynamically feasible trajectory $\bfsigma(t)$ such that:
    \begin{equation} \label{eq:problem_formulation}
        \begin{aligned}
            &\dot{\bfsigma}\REV{(t)} = \bff(\bfsigma\REV{(t)}, \bfu\REV{(t)}), \\
            &\bfsigma(0) = \bfx_s, \bfsigma(1) \in \calG, \\
            &\bfsigma(t) \in \calX_{free}, \bfu(t) \in \calU \quad \forall t \in [0,1].
        \end{aligned}
    \end{equation}
\end{problem}

In the remainder of the paper, we solve \cref{problem:kinodyn_planning} by developing \textbf{\FLASK}, a parallelized kinodynamic motion planning \REVV{framework} for differential flat robot systems, including common platforms such as ground and aerial vehicles, and manipulators. Our approach offers ultrafast planning time and exact dynamically feasible trajectory solution without the need to approximate the robot dynamics. Occasionally, we will drop the notation of time dependence for readability.

\section{Preliminaries} \label{sec:prelim}
In this section, we provide a brief review and necessary background on sampling-based kinodynamic planning, differential flatness and fine-grained parallelization that will be useful for the derivation of our approach in \cref{sec:technical_approach}.

\subsection{Sampling-based Kinodynamic Motion Planning} \label{subsec:prelim_sbkmp}
Most sampling-based geometric planners, such as RRT-connect~\cite{kuffner2000rrtconnect} and PRM~\cite{kavraki2002probabilistic} consider the robot configuration $\bfq$ as the state $\bfx$ and approximate the robot's configuration space with a tree or graph $\bbG$ with a set of nodes $\bbV$ and a set of edges $\bbE$.
Though individual sampling-based planners differ wildly in their exact approach, they all have roughly the same structure for their main search loop.
At each iteration, such planners attempt to add a new sample $\bfx_f$ to $\bbV$, then connect $\bfx_f$ to some existing $\bfx_0 \in \bbV$, and if successful, add the resulting edge to~$\bbE$. This is called a \textbf{\textsc{Connect}} or \textbf{\textsc{Extend} subroutine}.
To construct edges, all geometric sampling-based planners require a \emph{local planner} to produce a local path between configurations.

However, in kinodynamic motion planning, where robots are subjected to dynamics constraints, finding \textbf{a local path} $\bfx_{loc}(t)$ to reach $\bfx_f$ with control input $\bfu_{loc}(t)$ in duration $T$, requires solving a \emph{boundary value problem} (BVP), subject to the robot dynamics with initial state $\bfx_0$ and terminal state $\bfx_f$ as follows:
\begin{equation}
\begin{aligned}
   &\dot{\bfx}_{loc} = \bff(\bfx_{loc}, \bfu_{loc}),\quad 
    \bfx_{loc}(0) = \bfx_0, \bfx_{loc}(T) = \bfx_f.
\end{aligned}
\end{equation}

Solving the BVP in practice is often computationally intractable, so most kinodynamic planners (\emph{e.g.},~\cite{lavalle2001randomized, hsu2022randomizekinodynamic, li2016sst}) instead take a propagation-based approach: they integrate a sampled control input $\bfu_0$ from a reachable state for a short time $T$ to generate a new state $\bfx_f$ rather than connecting sampled states. In this case, the \textbf{local path} $\bfx_{loc}(t)$ and the new state $\bfx_f$ are defined as:
\begin{equation}
\begin{aligned}
   &\bfx_{loc}(t) = \bfx_0 + \int_{0}^t \bff(\bfx(\tau), \bfu_0)d\tau, \quad \bfx_f = \bfx_{loc}(T).
\end{aligned}
\end{equation}

\subsection{Differential Flatness} \label{subsec:diff_flat}
\begin{definition} [Differential Flatness] \label{def:diff_flat}
    A dynamical system \eqref{eq:dynamics} is called ``\emph{differentially flat}"~\cite{murray1995differential} if there exists an $n$-dimensional output:
\begin{equation} \label{eq:flat_output_def}
    \bfy = \bfh (\bfx, \bfu, \dot{\bfu}, \ldots, \bfu^{(k)}) \in \bbR^{n},
\end{equation}
% $$\bfx_T = \int_{0}^T \bff(\bfx, \bfu)dt$$
such that the robot state $\bfx$ and control input $\bfu$ can be described in terms of the output $\bfy$ and its derivatives $\bfy^{(\cdot)}$:
\begin{equation} \label{eq:state_control_from_flat_output}
    \begin{aligned}
        \bfx &= \bfalpha(\bfy, \dot{\bfy}, \ldots, \bfy^{(l)}), \\
        \bfu &= \bfbeta(\bfy, \dot{\bfy}, \ldots, \bfy^{(m)}),
    \end{aligned}
\end{equation}
for non-negative derivative orders $k,l$ and $m$. The output $\bfy$ must be differentially independent, \emph{i.e.}, there does not exist any differential relationship among the components of $\bfy$.
The exact form of the functions $\bfalpha$ and $\bfbeta$ depends on the robot system, several of which can be found in~\cite{murray1995differential, mellinger2011minimumsnap}. 
\end{definition}

We provide three common examples of differentially flat fully-actuated and under-actuated robot platforms, as follows.

\begin{example} \label{example:manipulator}
    Consider a \emph{fully-actuated manipulator} with joint angles $\bfq$, and control input $\bfu$, \emph{e.g.}, the joint torques. This is typically the case for common manipulators such as Franka or KUKA platforms. The robot dynamics is described by the Euler-Lagrange equation of motions:
    \begin{equation}
        \bfM(\bfq)\ddot{\bfq} + \bfC(\bfq, \dot{\bfq}) = \bfB(\bfq)\bfu,
    \end{equation}
    where the control gain matrix $\bfB(\bfq)$ is typically invertible, \emph{i.e.}, the system is fully actuated. The robot dynamics can be expressed in the form of Eq. \eqref{eq:dynamics} with the robot state $\bfx = (\bfq, \dot{\bfq})$.

    For this system, the flat output is the same as the configuration: $\bfy = \bfq$. The state $\bfx$ and the control input $\bfu$ can be derived from the flat output $\bfy$ as follows,
    \begin{equation} \label{eq:flat_manipulator}
        \begin{aligned}
            \bfx &= \bfalpha(\bfy, \dot{\bfy}) =  (\bfq, \dot{\bfq}) \\ 
            \bfu &= \bfbeta(\bfy, \dot{\bfy}, \ddot{\bfy}) = \bfB^{-1}(\bfq)\left(\bfM(\bfq)\ddot{\bfq} + \bfC(\bfq, \dot{\bfq})\right).
        \end{aligned}
    \end{equation}
As the control gain $\bfB(\bfq)$ is invertible, the control input $\bfu$ in \eqref{eq:flat_manipulator} is guaranteed to exist.
\end{example}
\begin{example} \label{example:unicycle}
    Consider a \emph{unicyle} robot whose state $\bfx$ is defined as $\bfx = (x, y, \theta)$ where $(x,y)$ is the position and $\theta$ is the heading angle of the vehicle. The control input $\bfu = (v, \omega)$ includes the speed $v$ and angular velocity $\omega$. The flat output $\bfy$ is defined as the position: $\bfy = (x, y)$. The yaw angle $\theta$ and the control input $\bfu$ can be determined from $\bfy$ as follows:
    \begin{equation}
    \begin{aligned}
                \bfx &= \bfalpha(\bfy, \dot{\bfy}) = (x, y, \text{arctan2}(\dot{y}, \dot{x}) + \kappa\pi), \\
                \bfu &= \bfbeta(\bfy, \dot{\bfy}, \ddot{\bfy}) = \left((-1)^\kappa \sqrt{\dot{x}^2 + \dot{y}^2 }, \frac{\dot{x}\ddot{y} - \ddot{x}\dot{y}}{\dot{x}^2 + \dot{y}^2}\right),
    \end{aligned}
    \end{equation}
where $\kappa \in \{0,1\}$ depends on whether the vehicle is moving forward or backward, respectively.
\end{example}

\begin{example} \label{example:quadrotor}
    Consider an \emph{under-actuated quadrotor} whose state $\bfx$ is defined as $\bfx = (\bfp, \bfR, \bfv, \bfomega)$, where $\bfp = (x, y, z) \in \bbR^3$ is the position of the center of mass, $\bfR \in SO(3)$ is the rotation matrix, $\bfv$ is the linear velocity, and $\bfomega$ is the angular velocity. The control input $\bfu = (f, \bftau)$ consists of a thrust $f\in \bbR_{\geq 0}$ and a torque $\bftau \in \bbR^3$, generated from the motors. The flat output for quadrotor systems is $\bfy = (\bfp, \psi) \in \bbR^4$, where $\psi$ is the yaw angle of the robot~\cite{mellinger2011minimumsnap}. 
    
    The state $\bfx = \bfalpha(\bfp, \dot{\bfp}, \ddot{\bfp}, \bfp^{(3)}, \psi, \dot{\psi})$ can be expressed in terms of the flat outputs and their derivatives as follows. 
    Clearly, the position $\bfp$ is already part of the flat output $\bfy$, and hence the linear velocity is calculated as $\bfv = \dot{\bfp}$.
    Let $m$ and $\bfJ$ be the mass and inertia matrix of the quadrotor, respectively. Let $\bft = m(\ddot{\bfp} + g\bfe_z)$ be the thrust vector applied on the quadrotor's center of mass, which coincides with the $z-$axis of the body frame where $g$ is the gravitational acceleration, and $\bfe_z = \begin{bmatrix}
        0 & 0 & 1
    \end{bmatrix}^\top$ is the $z-$axis unit vector in the world frame. The rotation matrix $\bfR~=~\begin{bmatrix}
        \bfr_x & \bfr_y & \bfr_z
    \end{bmatrix}$ can be calculated~as:
    \begin{equation}
            \bfr_z = \frac{\bft}{\Vert \bft\Vert}, \quad 
            \bfr_x = \frac{\bfr_{\psi} \times \bfr_z}{\Vert \bfr_{\psi} \times \bfr_z \Vert}, \quad            \bfr_y = \bfr_z \times \bfr_x,
    \end{equation}
    where $\bfr_\psi = [-\sin{\psi}, \cos{\psi}, 0]$. The derivative of the rotation matrix is $\dot{\bfR}~=~\begin{bmatrix}
        \dot{\bfr}_x & \dot{\bfr}_y & \dot{\bfr}_z
    \end{bmatrix}$ with:
    \begin{equation}
    \begin{aligned}
            \dot{\bfr}_x &= \bfr_x \times \frac{\dot{\bfr}_{\psi} \times \bfr_z + \bfr_{\psi} \times \dot{\bfr}_z}{\Vert \bfr_{\psi} \times \bfr_z \Vert} \times \bfr_x, \\
            \dot{\bfr}_y &= \dot{\bfr}_z \times \bfr_x + \bfr_z \times \dot{\bfr}_x,             \quad \dot{\bfr}_z = \bfr_z \times \frac{\dot{\bft}}{\Vert \bft\Vert} \times \bfr_z.
    \end{aligned}
    \end{equation}
    The angular velocity $\bfomega$ is calculated as: $\bfomega = (\bfR^\top \dot{\bfR})^\vee$,
    where the $(\cdot)^\vee$ operator maps a skew-symmetric vector \mbox{$\hat{\bfomega} \in \frak{so}(3)$} to a vector $\bfomega\in \bbR^3$. 
    Meanwhile, the control input $\bfu~=~\bfbeta(\bfp, \dot{\bfp}, \ddot{\bfp}, \bfp^{(3)}, \bfp^{(4)}, \psi, \dot{\psi}, \ddot{\psi})$ is described as:
    \begin{equation}
        \bfu = (f, \bftau)  = \left(\Vert \bft \Vert, \bfJ (\dot{\bfR}^\top \dot{\bfR} + \bfR^\top \ddot{\bfR})^\vee + \bfR^\top \dot{\bfR} \bfJ \bfomega\right).
    \end{equation}
    We refer the readers to~\cite{mellinger2011minimumsnap, zhou2014vectorfield, liu2018search} for the detailed derivation. 
\end{example}

\begin{figure*}[t]
\centering

\begin{subfigure}[t]{0.32\textwidth}
        \centering
\includegraphics[width=0.75\textwidth]{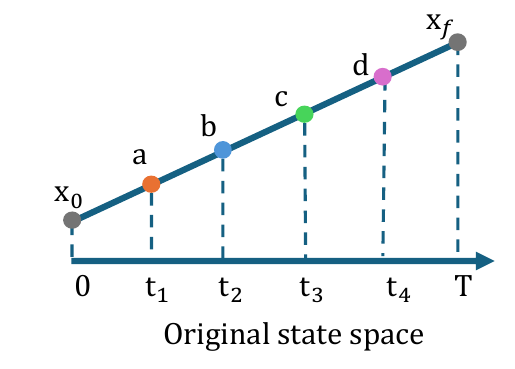}%
 \caption{Samples from a linear path~\cite{thomason2024vamp}.}
        \label{fig:cc_vamp}
\end{subfigure}%
\begin{subfigure}[t]{0.55\textwidth}
        \centering
\includegraphics[width=\textwidth]{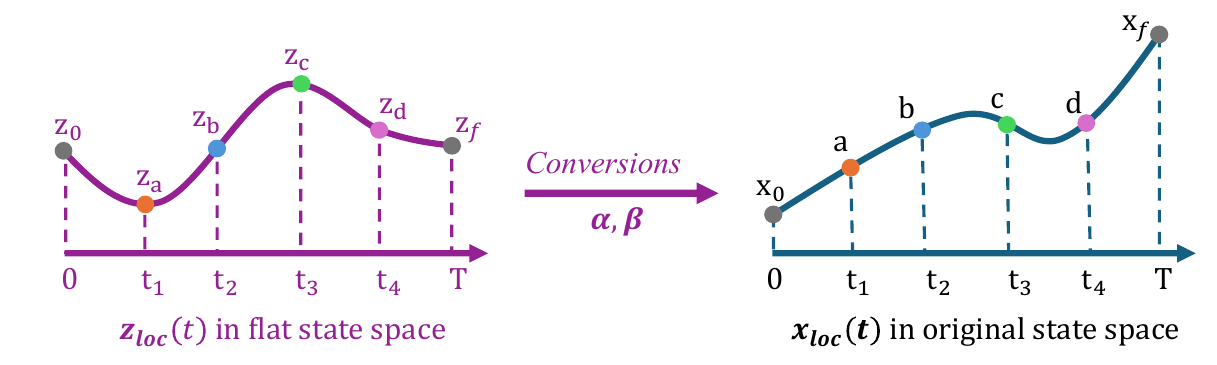}%
 \caption{Samples from a nonlinear local path $\bfx_{loc}(t)$.}
        \label{fig:cc_flat}
\end{subfigure}%
\caption{Configuration samples $\bfa, \bfb, \bfc$ and $\bfd$, discretized from a linear path (a), as in VAMP~\cite{thomason2024vamp}, and from our closed-form time-parameterized motions (b), can be efficiently checked for collision using \scSIMD parallelism.}
\label{fig:simd_cc}
\end{figure*}

\subsection{CPU fine-grained parallelism} \label{subsec:fine_grained_parallelism}

Fine-grained parallelization techniques such as ``single instruction, multiple data" (\scSIMD) are ubiquitous on consumer CPUs and have recently shown significant improvement in sampling-based geometric motion planning by performing forward kinematics and collision checking on multiple configurations in parallel. Vectorization via \scSIMD allows simultaneously applying the same primitive operations on multiple variables. VAMP~\cite{thomason2024vamp}, a \scSIMD-accelerated planning method, uses this technique to perform parallelized collision checking of multiple robot configurations, evenly sampled along a line segment~(\cref{fig:cc_vamp}) connecting two nodes on a planning tree or graph. Given the set of configurations, the robot's geometric shape, described by a set of \REV{SIMD-compatible shapes such as capsules or spheres} in the workspace, is calculated via branchless forward kinematics and checked for collision against the obstacle geometry.

\scSIMD-based collision checking requires access to all the states along a local path, typically through an analytical form of the motion. However, under dynamics constraints, the nonlinear local path $\bfx_{loc}(t)$ that \REV{solves the BVP problem (\cref{subsec:prelim_sbkmp}) is computationally intractable, prohibiting configuration sampling at multiple arbitrary times in advance}. In the next section, we address this problem by deriving a closed-form time-parameterized local path $\bfx_{loc}(t)$ based on the differential flatness property and sampling multiple states at different times, as illustrated in \cref{fig:cc_flat}.

\section{Technical Approach}
\label{sec:technical_approach} 
In this section, we present our \REVV{kinodynamic planning framework, \textbf{\FLASK},} by showing that the flat output evolves as a linear system (\cref{subsec:flat_linear}), and hence, allows us to convert \cref{problem:kinodyn_planning} from the original state space $\calX$ to a flat state space (\cref{problem:flatspace_planning} in \cref{subsec:flat_planning_problem}).
\REVV{In the flat state space, we develop primitive subroutines, \textsc{FlaskExtend} for planning graph construction in \cref{subsec:flat_sbmp} and \textsc{FlaskCC} for parallelized forward kinematics and collision checking in \cref{subsec:vectorized_cc}, that work with any sampling-based planning method. These subroutines are the core of our framework in \cref{alg:main_proposed_alg}, which effectively turns any geometric motion planner into a kinodynamic version. Instead of being tied to a specific planner, our general framework gives rise to a new class of ultrafast kinodynamic planners for a broad class of differentially flat robot systems.} 
Finally, we discuss trajectory postprocessing in our approach in \cref{subsec:traj_simplification}.

%%%%%%%%%%%%%%%%%%%%%%%%%%%%%%%%%%%%%%%%%%%%%%%%%%%%%%%%%%%%%%%%%%%%%%%%%%%%%%%%%%
\subsection{Flat Output Dynamics as a Linear System} \label{subsec:flat_linear}
Consider the $n$-dimensional flat output $\bfy$ in \cref{def:diff_flat}, that can be used to recover the state $\bfx$ and control input $\bfu$ via Eq.~\eqref{eq:state_control_from_flat_output}. Let $\bfw \in \REV{\calW \subset \bbR^n}$ be a pseudo-control input, defined as the $r$th derivative of the output $\bfy$:
\begin{equation} \label{eq:pseudo_control}
    \bfw = \bfy^{(r)},\quad r\MATHREV{\;\geq\;} \max(l,m),
\end{equation}
where the derivative orders $l,m$ are defined in~\cref{eq:state_control_from_flat_output}. Let~$\bfz~\in~\calZ \subset \bbR^{rn}$ be the flat state, defined as:
\begin{equation} \label{eq:z_def}
    \bfz = (\bfy, \dot{\bfy}, \ldots, \bfy^{(r-1)}).
\end{equation}
The state $\bfz$ satisfies linear dynamics with the pseudo-control input $\bfw$ as follows:
\begin{equation}\label{eq:linear_dynamics}
    \dot{\bfz} = \bfA \bfz + \bfB \bfw \in \bbR^{rn},
\end{equation}
\begin{equation*}
\bfA =\begin{bmatrix}
    \bf0 & \bfI_n & \bf0 & \cdots & \bf0 \\
    \bf0 & \bf0 & \bfI_n & \cdots & \bf0 \\
    \vdots & \vdots & \vdots & \ddots & \vdots \\
    \bf0 & \bf0 & \bf0 & \cdots & \bfI_n \\
    \bf0 & \bf0 & \bf0 & \cdots & \bf0
\end{bmatrix},
\qquad
\bfB=\begin{bmatrix}
    \bf0 \\ \bf0 \\ \bf0 \\ \vdots\\ \bfI_n
\end{bmatrix},
\end{equation*}
with $\bfA \in \bbR ^{rn\times rn}$, $\bfB \in \bbR ^{rn\times n}$, and the identity matrix $\bfI_n~\in~\bbR^{n\times n}$. Instead of enforcing the nonlinear dynamics constraint~\eqref{eq:dynamics} in a motion planning problem, we can implicitly enforce a much simpler linear dynamics \eqref{eq:linear_dynamics} in the flat state space via the conversion \eqref{eq:state_control_from_flat_output}. Note that the matrix $\bfA$ is nilpotent with index $r$, \emph{i.e.},~$\bfA^r = 0$.

%%%%%%%%%%%%%%%%%%%%%%%%%%%%%%%%%%%%%%%%%%%%%%%%%%%%%%%%%%%%%%%%%%%%%%%%%%%%%%%%%%
\subsection{Sampling-based Kinodynamic Planning in Flat State Space} \label{subsec:flat_planning_problem}

Due to the non-linearity of the robot dynamics \eqref{eq:dynamics}, it is challenging to plan robot motions in the original state space $\calX$, as most sampling-based motion planning algorithms require a challenging \textsc{Extend} subroutine that either solves a boundary value problem to connect two states $\bfx_0$ and $\bfx_f$ or propagates the dynamics to predict the next state $\bfx_f$ given a constant control input $\bfu_0$. As shown in \cref{subsec:flat_linear}, the flat state $\bfz$ satisfies a linear dynamics in \eqref{eq:linear_dynamics} with a nilpotent matrix $\bfA$, potentially leading to much simpler BVP problem and dynamics propagation. This motivates us to perform motion planning in the flat state space $\calZ$ instead of the original state space $\calX$, thanks to the conversions \eqref{eq:flat_output_def} and \eqref{eq:state_control_from_flat_output}. Our approach is illustrated in \cref{fig:problem_conversion}.

\begin{figure}[t]
\centering
\includegraphics[width=0.9\linewidth]{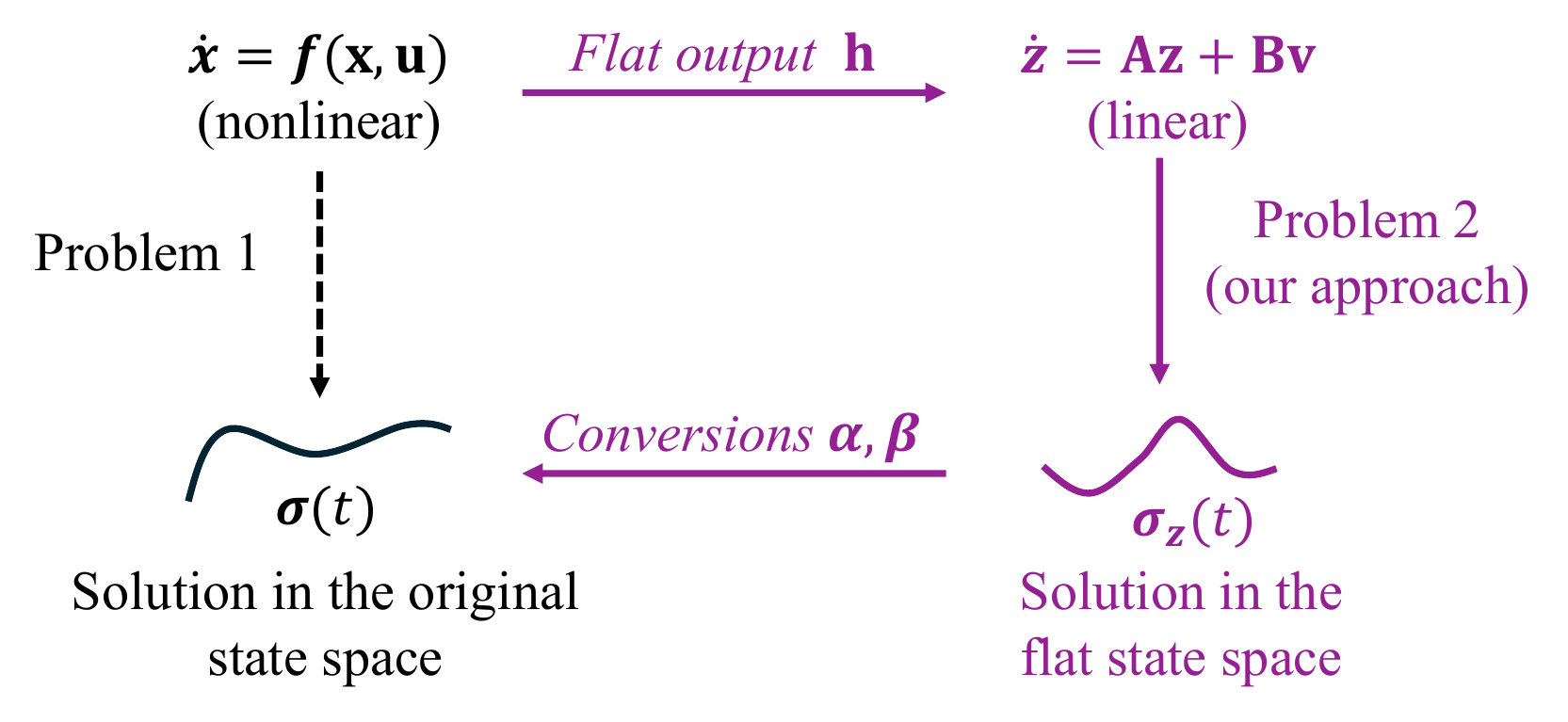}
\caption{Formulating the kinodynamic planning problem in the flat state space with linear dynamics.}
\label{fig:problem_conversion}
\end{figure}

Given an initial state $\bfx_s$ with an initial control input $\bfu_s$, and a goal region $\calG$ with a desired control $\bfu_\calG$, the initial flat output and goal region are calculated as $\bfy_s = \bfh (\bfx_s, \bfu_s, \dot{\bfu}_s, \ldots, \bfu^{(k)}_s)$ and $\calG_\bfy = \{\bfh(\bfx, \bfu_\calG, \dot{\bfu}_\calG, \ldots, \bfu^{(k)}_\calG) \;|\; \bfx \in \calG\}$, respectively. Note that in many common robot systems such as \cref{example:manipulator,example:unicycle,example:quadrotor}, the flat output $\bfy$ does not depend on $\bfu$ and therefore, the values of $\bfu_s$ and $\bfu_\calG$ need not be specified.
The original kinodynamic motion planning (\cref{problem:kinodyn_planning}) becomes finding a dynamically feasible trajectory $\bfsigma_\bfz(t)$ in the flat state space $\calZ$ that connects the start $\bfz_s$ to the goal region $\calG_\bfz$, and satisfies the linear dynamics \eqref{eq:linear_dynamics}, as summarized in \cref{problem:flatspace_planning}.
\begin{problem} \label{problem:flatspace_planning}
    Given an initial flat state $\bfz_s$ and a goal region $\calG_\bfz$, calculated from $\bfy_s$ and $\calG_\bfy$ via \eqref{eq:z_def}, the original kinodynamic motion planning (\cref{problem:kinodyn_planning}) is equivalent to finding a control input function $\bfw(t)$ that generates a dynamically feasible trajectory $\bfsigma_\bfz(t)$:
    \begin{equation} \label{eq:problem_formulation2}
        \begin{aligned}
            %&\min_{\bfw} c(\bfalpha(\bfsigma_\bfz, \bfw), \bfbeta(\bfsigma_\bfz, \bfw)) \\
           & \dot{\bfsigma}_\bfz = \bfA \bfsigma_\bfz + \bfB \bfw, \\
           &\bfsigma_\bfz(0) = \bfz_s, \bfsigma_\bfz(1) \in \calG_\bfz, \\
            & \bfu(t) = \bfbeta \left(\bfsigma_\bfz(t), \bfw(t)\right) \in \calU, \\
            &\bfx(t) = \bfalpha(\bfsigma_\bfz(t), \bfw(t)) \in \calX_{free}, \forall t \in [0,1],
        \end{aligned}
    \end{equation}
    where the matrices $\bfA$ and $\bfB$ are defined in \eqref{eq:linear_dynamics}.
\end{problem}

\begin{algorithm}[t]
    \caption{\textsc{\textbf{\REVV{\FLASK}}: \normalsize Sampling-based Kinodynamic Motion Planning via Differential Flatness}}
    \label{alg:main_proposed_alg}
     \DontPrintSemicolon
    \KwIn{Initial state $\bfx_s$, initial control $\bfu_s$, goal region $\calG$ with control input $\bfu_\calG$, maximum iterations \REV{$I$}} % 
    \KwOut{A collision-free dynamically feasible trajectory $\bfsigma(t)$ with control input $\bfu(t)$.}

   \Comment*[l]{\small \BLUE{Convert to the flat state space}}
   $\bfy_0 \gets \bfh (\bfx_0, \bfu_0, \dot{\bfu}_0, \ldots, \bfu^{(k)}_0)$\\
   $\calG_\bfy \gets \{\bfh(\bfx, \bfu_\calG, \dot{\bfu}_\calG, \ldots, \bfu^{(k)}_\calG): \bfx \in \calG\}$\\
    $\bfz_0 \gets (\bfy_0, \dot{\bfy}_0, \ldots, \bfy^{(r-1)}_0)$\\
    $\calG_z = \{(\bfy, \dot{\bfy}, \ldots, \bfy^{(r-1)}): \bfy \in \calG_\bfy\}$\\

    Create a planning graph or tree $\bbG_\bfz = (\bbV_\bfz, \bbE_\bfz)$ with set of vertices $\bbV_\bfz$ and set of edges $\bbE_\bfz$.

    $\bbV_\bfz \gets \{ \bfz_0 \}$
    
    \While{Iteration $i < I$}
    {   \Comment*[l]{\small \BLUE{Grow $\bbG_\bfz$ on the flat state space}}
        $\bbG_\bfz \gets$ \textsc{FlaskExtend}($\bbG_\bfz$) 
        
        \Comment*[l]{\small \BLUE{Return $\bfsigma(t)$ if reaching \REV{$\calG_\bfz$}}}
        \If{Goal region $\calG_\bfz$ is reached}
        {
            Find the trajectory $\bfsigma_\bfz(t)$ with pseudo-input $\bfw(t)$ from $\bbG_\bfz$.

            \Comment*[l]{\small \BLUE{Convert the trajectory $\bfsigma_z(t)$ to the original state space}}
            $\bfsigma(t) \gets \bfalpha(\bfsigma_\bfz(t), \bfw(t))$\\
            $\bfu(t) \gets \bfbeta(\bfsigma_\bfz(t), \bfw(t))$\\
            
            \Return Trajectory $\bfsigma(t)$ with control $\bfu(t)$.
        }
    }
    \Return Trajectory not found.
\end{algorithm}

To solve \cref{problem:flatspace_planning}, \REVV{we develop primitive subroutines in the flat state space $\calZ$ for a sampling-based kinodynamic planning \REVV{framework proposed} in \cref{alg:main_proposed_alg}}, which is consistent with most existing sampling-based motion planners~\cite{karaman2010kinorrt, webb2013kinodynamicRRT, li2016sst}. Leveraging the linear dynamics~\eqref{eq:linear_dynamics}, we develop an efficient \textsc{FlaskExtend} subroutine (see \cref{subsec:flat_sbmp}) to construct a graph or tree $\bbG_\bfz = (\bbV_\bfz, \bbE_\bfz)$ in the flat state space $\calZ$, where $\bbV_\bfz$ and $\bbE_\bfz$ denote the sets of nodes and edges, respectively, by solving the BVP and dynamics propagation problems for a polynomial \textbf{local ``flat" path} $\bfz_{loc}(t)$. The resulting trajectory $\bfsigma_\bfz(t)$ is a continuous piecewise-polynominal, consisting of $M$ local ``flat" paths found on $\bbG_\bfz$:
\begin{equation} \label{eq:piecewise_poly_traj}
\begin{aligned}
        \bfsigma_\bfz(t) &= \left\{(\bfz_i(t
), t_i)\right\}^{M}_{i=1}, \\
    \bfw_\bfz(t) &= \left\{ \bfw_i(t), t_i)\right\}^{M}_{i=1},
\end{aligned}
\end{equation}
with $0 = t_0 < t_1 < \ldots < t_{M}$ where the pseudo-control $\bfw_i(t)$ is applied in the time interval $[t_{i-1}, t_i]$, for $i = 1, \ldots, M$.
The trajectory $\bfsigma_\bfz(t)$ is converted back to a trajectory $\bfsigma(t)$ with control $\bfu(t)$ as follows:
\begin{equation}\label{eq:sigma_conversion}
\begin{aligned}
    \bfsigma(t) &= \bfalpha(\bfsigma_\bfz(t), \bfw(t)), \\
    \bfu(t) &= \bfbeta(\bfsigma_\bfz(t), \bfw(t)),
\end{aligned}    
\end{equation} 
which is possible because the pseudo-control input $\bfw$ is the $r$-th order derivative of $\bfy$ with $r \MATHREV{\;\geq\;} \max(l,m)$ in Eq. \eqref{eq:pseudo_control}.

%%%%%%%%%%%%%%%%%%%%%%%%%%%%%%%%%%%%%%%%%%%%%%%%%%%%%%%%%%%%%%%%%%%%%%%%%%%%%%%%%%
\subsection{\textsc{FlaskExtend} Subroutine on Flat Output Space}
\label{subsec:flat_sbmp}
The goal of the \textsc{FlaskExtend} subroutine, described in \cref{alg:steering}, is to find a collision-free dynamically feasible \textbf{local ``flat" path} $\bfz_{loc}(t), t\in [0,T]$ with a time duration $T$, that connects an existing node $\bfz_0 \in \bbV$ to a new node $\bfz_f$. A motion $\bfz_{loc}(t)$ can be generated by either sampling $\bfz_f$ and solving a BVP problem (\cref{subsubsec:solving_flat_bvp}) or sampling a pseudo-control input $\bfw_0$ and integrating the flat state dynamics (\cref{subsubsec:flat_dyn_prop}). For either case, we show that a closed-form expression of $\bfz_{loc}(t)$ can be obtained for parallelized collision checking in \textsc{FlaskCC} subroutine in \cref{subsec:vectorized_cc}. If $\bfz_{loc}(t)$ is valid, the node $\bfz_f$ is added to $\bbV_\bfz$ while the edge $(\bfx_0, \bfx_f)$ associated with the local ``flat" path $\bfz_{loc}(t)$ is added to $\bbE_\bfz$.

\begin{algorithm}[t]
    \caption{\textsc{FlaskExtend}}
    \label{alg:steering}
    \KwIn{The planning graph/tree $\bbG_\bfz = (\bbV_\bfz, \bbE_\bfz)$}
    \KwOut{Updated $\bbG_\bfz$}
    \DontPrintSemicolon
    
    \If{Sample $\bfz_f \in \calZ$}
        {
            Pick an existing node $\bfZ_{0} \in \bbV_\bfz$\\
            \Comment*[l]{\small \BLUE{Solve BVP in closed form}}
            $\bfz_{loc}(t) \gets$ Eq. \eqref{eq:optimal_traj} with a sampled $T$ or an optimal \REV{$T = T^*$} from \eqref{eq:optimal_time}.

            $\bfw_{loc}(t) \gets$ Eq. \eqref{eq:optimal_control}.
            
        }
    \If{Sample $(\bfz_0, \bfw_0, T) \in \bbV_\bfz \times \bbR^n \times \bbR_{+}$}
        {
            \Comment*[l]{\small \BLUE{Analytically propagate dynamics}} %Dynamics integration in closed form
            $\bfz_{loc}(t) \gets$ Eq. \eqref{eq:poly_traj_sampled_control}.
    
            $\bfw_{loc}(t) \gets \bfw_0$.
        }

    \If{\REVVV{\textbf{\textup{not }}}\textsc{FlaskCC}($\bfz_{loc}(t), \bfw_{loc}(t)$)}
    {
        $\bbV_\bfz \gets \bbV_\bfz \cup \{\bfz_f \}$
        
        $\bbE_\bfz \gets \bbE_\bfz \cup \{(\bfz_0, \bfz_f, (\bfz_{loc}(t), \bfw_{loc}(t), T)) \}$
    }
    \Return $\bbG$
\end{algorithm}

%%%%%%%%%%%%%%%%%%%%%%%%%%%%%%%%%%%%%%%%%%%%%%%%%%%%%%%%%%%%%%%%%%
\subsubsection{Solving the BVP Problem in Closed Forms} \label{subsubsec:solving_flat_bvp}
Given an existing node $\bfz_0$ and a sampled $\bfz_f$, we find a local ``flat" path $\bfz_{loc}(t), t\in [0,T]$ that connects $\bfz_0$ and $\bfz_f$ and satisfies the flat state dynamics \eqref{eq:linear_dynamics}. Consider a cost function of a motion $\bfz(t), t\in [0,T]$ with pseudo-control $\bfw(t)$ that accounts for the total control effort and the time duration~$T$ as follows:
\begin{equation} \label{eq:flat_cost}
    \calC(\bfz(t), \bfw(t), T) = \int_0^T \bfw^\top \bfR \bfw dt + \rho T,
\end{equation}
where $\bfR \succ 0$ is a user-defined symmetric positive definite weight matrix, and $\rho$ controls the trade-off between the control effort and the time it takes to finish the trajectory. We formulate a Linear Quadratic Minimum Time (LQMT) problem~\cite{Verriest1991QuadMinTime} to solve for our local ``flat" path $\bfz_{loc}(t)$ and control~$\bfw_{loc}(t)$:%
\begin{equation}
\begin{aligned}
    &\min_{\bfw(t), T}{\calC(\bfz(t), \bfw(t), T)}    \\
\text{s.t.}\quad& \dot{\bfz} = \bfA \bfz + \bfB \bfw,\\
&\bfz(0) = \bfz_0, \bfz(T) = \bfz_f,
\end{aligned} \label{eq:min_time_problem}
\end{equation}
where the matrices $\bfA$ and $\bfB$ are defined in \eqref{eq:linear_dynamics}.
Let us define  $\bfd_T = \bfz_f - e^{\bfA T} \bfz_0$ and the Gramian matrix 
\begin{equation} \label{eq:grammian}
    \bfG_T = \int_0^T e^{\bfA t} \bfB \bfR^{-1}\bfB^T e^{\bfA^\top t}dt.
\end{equation}
The duration $T$ can be either a) set to a fixed or sampled value or b) optimized for a minimum-time trajectory, as shown~next. \REV{Given a sample $\bfz_f$, the cost $\calC_{loc}$ of the local path $\bfz_{loc}(t)$ is commonly used to choose an existing node $\bfz_0$ on the graph $\bbG_\bfz$, \eg those in the neighborhood of $\bfz_f$ with $\calC_{loc}$ smaller than a threshold~$\zeta$ (see \cref{sec:theoretical_analysis} for how to choose the value of $\zeta$ as the number of nodes $N = |\bbV_\bfz|$ increases).}

\paragraph{Fixed-time Optimal Local Paths} For a fixed time duration $T$, the LQMT problem \eqref{eq:min_time_problem} has a closed-form solution, by following~\cite{Verriest1991QuadMinTime}, for the optimal pseudo-control input:
\begin{equation} \label{eq:optimal_control}
    \bfw_{loc}(t) = \bfR^{-1}\bfB^\top e^{\bfA^\top (T-t)}\bfG^{-1}_T \bfd_T.
\end{equation}
with the optimal cost:
\begin{equation}\label{eq:optimal_cost}
    \calC_{loc}(T) = \bfd_T^\top \bfG^{-1}_T\bfd_T + \rho T,
\end{equation}
and the optimal local ``flat" path:
\begin{equation}\label{eq:optimal_traj}
    \bfz_{loc}(t) = e^{\bfA t}\bfz_0 + \bfG_t e^{A^\top (T-t)} \bfG_T^{-1}\bfd_T.
\end{equation}
Since the matrix $\bfA$ is nilpotent, \emph{i.e.,} $\bfA^r = 0$, we have  $e^{\bfA t}~=~\sum_{j=0}^{r-1} \frac{\bfA^j t^j}{j!}$. Therefore, the Grammian $\bfG_T$ and $\bfd_T$ become polynomials of the time duration $T$. The optimal control input $\bfw_{loc}(t)$ becomes a $(r-1)$th order polynomial while the optimal flat output trajectory $\bfy_{loc}(t)$ is a $(2r-1)$th order polynomial, which is compatible with \scSIMD-based collision checking in  \cref{subsec:vectorized_cc}.

\paragraph{Minimum-time Optimal Local Paths} If we have the freedom to choose the duration $T$, we can solve the following equation for a minimum time \REV{$T = T^*$}~\cite{Verriest1991QuadMinTime}:
\begin{equation} \label{eq:optimal_time}
    \frac{d}{dT} \calC_{loc}(T) = 0, \qquad T > 0.
\end{equation}

For an arbitrary value of the pseudo-control order $r$, the minimum-time condition \eqref{eq:optimal_time} can be solved for a positive $T^*$ using a numerical root-finding solver~\cite{nocedal1999numerical}, that is amenable to \scSIMD parallelization such as L-BFGS~\cite{zhu1997lbfgsb}. More notably, many common systems such as manipulators (\cref{example:manipulator}) or unicycles (\cref{example:unicycle}) have $r=2$ while systems with higher $r$, such as quadrotors (\cref{example:quadrotor}), can reduce the pseudo-control order to $r=2$ to simplify motion planning in practice, as shown in~\cite{liu2017search}. For such cases, \cref{example:min_acc_time} below shows the optimal local ``flat" path, pseudo-control input, and optimal duration, where the condition \eqref{eq:optimal_time} is equivalent to solving a $4$th-order polynomial in \eqref{eq:solve_T_r2} for a positive root, which can also be done in closed forms.
As a result, solving for an optimal time \REV{$T = T^*$} is suitable for \scSIMD parallelization.

\begin{example}[Fixed-time and minimum-time local paths] \label{example:min_acc_time}
We consider a \emph{special case with $r = 2$} and $\bfR = \bfI_n$, \emph{e.g.}, for a fully-actuated manipulator or a unicycle. Given a flat output $\bfy$, the flat state is $\bfz = (\bfy, \dot{\bfy})$ with the pseudo-control input~$\bfw$ defined as the acceleration: $\bfw = \ddot{\bfy}$. Similar to~\cite{liu2017search} (for $n=3$), the flat state $\bfz$ follows the linear dynamics \eqref{eq:linear_dynamics} with:
\begin{equation}
    \bfA =\begin{bmatrix}
    \bf0 & \bfI_n \\
    \bf0 & \bf0 
\end{bmatrix}, \quad \bfB = \begin{bmatrix}
    \bf0 \\ \bfI_n
\end{bmatrix}.
\end{equation}
Since $\bfA^j = 0$ for all $j\geq 2$, we have $e^{\bfA t} = \bfI_{2n} + \bfA t$ and $e^{\bfA^\top t} = \bfI_{2n} + \bfA^\top t$, leading to:
\begin{equation*}
    \begin{aligned}
        \bfd_T &= \bfz_f - (\bfI_{2n} + \bfA T) \bfz_0 =  \bfz_f - \bfz_0 - T\bfA \bfz_0,\\
        \bfG_T & = \int_0^T e^{\bfA t} \bfB \bfR^{-1} \bfB^T e^{\bfA^\top t}dt = \begin{bmatrix}
                    \frac{T^3}{3}\bfI_n & \frac{T^2}{2}\bfI_n \\
                    \frac{T^2}{2}\bfI_n & T\bfI_n
                \end{bmatrix}.
    \end{aligned}
\end{equation*}
The matrix inverse of $\bfG_T$ can be derived using the Schur complement~\cite{petersen2008matrix}: $\bfG^{-1}_T =   \begin{bmatrix}
                    \frac{12}{T^3}\bfI_n & -\frac{6}{T^2}\bfI_n \\
                    -\frac{6}{T^2}\bfI_n & \frac{4}{T}\bfI_n
                \end{bmatrix}.$
The optimal control input \eqref{eq:optimal_control} becomes:
\begin{equation*}
\begin{aligned}
    \bfw_{loc}(t) &= \bfR^{-1}\bfB^\top e^{\bfA^\top (T-t)}\bfG^{-1}_T \bfd_T, \\
    &=\begin{bmatrix}
        -\frac{12\bfI_n}{T^3}  &  \frac{6\bfI_n}{T^2}
    \end{bmatrix} \bfd_T t + \begin{bmatrix}
        \frac{6\bfI_n}{T^2} &  - \frac{2\bfI_n}{T}
    \end{bmatrix} \bfd_T.
\end{aligned}
\end{equation*}
This leads to a minimum-acceleration local ``flat" path $\bfz_{loc}(t)$ as follows: 
\begin{equation*}
\begin{aligned}
        \bfz_{loc}(t) &= \bfG_t e^{A^\top (T-t)} \bfG_T^{-1}\bfd_T + e^{\bfA t}\bfz_0 ,\\
& = \scaleMathLine[0.85]{\begin{bmatrix}
    \begin{bmatrix}
        -\frac{2\bfI_n}{T^3}  &  \frac{\bfI_n}{T^2}
    \end{bmatrix} \bfd_T t^3 + \begin{bmatrix}
        \frac{3\bfI_n}{T^2} &  - \frac{\bfI_n}{T}
    \end{bmatrix} \bfd_T t^2 + \dot{\bfy}_0 t + \bfy_0 \\
    \begin{bmatrix}
        -\frac{6\bfI_n}{T^3}  &  \frac{3\bfI_n}{T^2}
    \end{bmatrix} \bfd_T t^2 + \begin{bmatrix}
        \frac{6\bfI_n}{T^2} &  - \frac{2\bfI_n}{T}
    \end{bmatrix} \bfd_T t + \dot{\bfy}_0
\end{bmatrix}.}
\end{aligned}
\end{equation*}
In other words, the optimal flat output is: 
\begin{equation*} 
    \bfy_{loc}(t) = \begin{bmatrix}
        -\frac{2\bfI_n}{T^3}  &  \frac{\bfI_n}{T^2}
    \end{bmatrix} \bfd_T t^3 + \begin{bmatrix}
        \frac{3\bfI_n}{T^2} &  - \frac{\bfI_n}{T}
    \end{bmatrix} \bfd_T t^2 + \dot{\bfy}_0 t + \bfy_0.
\end{equation*}
The optimal cost function \eqref{eq:optimal_cost} is calculated as:
\begin{equation*}
\begin{aligned}
\calC_{loc}(T) &= \bfd_T^\top \bfG^{-1}_T\bfd_T + \rho T, \\
    &= \frac{12\Vert \bfy_f - \bfy_0 \Vert^2}{T^3} - \frac{12 (\dot{\bfy}_f + \dot{\bfy}_0)^\top (\bfy_f - \bfy_0)}{T^2} \\
    & \qquad + \frac{4(\Vert \dot{\bfy}_0 \Vert^2 + \dot{\bfy}_0^\top \dot{\bfy}_f + \Vert \dot{\bfy}_f \Vert^2)}{T} + \rho T.
\end{aligned}
\end{equation*}

To find a minimum time $T$, we solve $d\calC_{loc}/dT$ for a positive real root $T^*$:
\begin{equation*}
\begin{aligned}
    \frac{d\calC_{loc}(T)}{dT} &= -\frac{36\Vert \bfy_f - \bfy_0 \Vert^2}{T^4} + \frac{24 (\dot{\bfy}_f + \dot{\bfy}_0)^\top (\bfy_f - \bfy_0)}{T^3} \\
    & \qquad - \frac{4(\Vert \dot{\bfy}_0 \Vert^2 + \dot{\bfy}_0^\top \dot{\bfy}_f + \Vert \dot{\bfy}_f \Vert^2)}{T^2} + \rho,
\end{aligned}
\end{equation*}
which is equivalent to solving a $4$th-order polynomial with closed-form solutions:
\begin{equation} \label{eq:solve_T_r2}
    \begin{aligned}
        &-36\Vert \bfy_f - \bfy_0 \Vert^2 + 24 (\dot{\bfy}_f + \dot{\bfy}_0)^\top (\bfy_f - \bfy_0)T  \\
        &\quad - 4(\Vert \dot{\bfy}_0 \Vert^2 + \dot{\bfy}_0^\top \dot{\bfy}_f + \Vert \dot{\bfy}_f \Vert^2)T^2 +\rho T^4 = 0.
    \end{aligned}
\end{equation}
The detailed derivation can be found in \cref{subsec:min_time_derivation}.
\end{example}

\paragraph{Remarks} \REV{We note that the closed-form polynomial solution \eqref{eq:optimal_traj} of the LQMT problem \eqref{eq:min_time_problem} is derived without considering additional constraints such as collision avoidance, velocity and acceleration limits. In our approach, such constraints \REVVV{are described by an occupied regions in the flat state space (see \cref{def:Z_obs} in \cref{sec:theoretical_analysis})} and enforced via collision checking in \cref{subsec:vectorized_cc}. If the trajectory $\bfz_{loc}(t)$ in \eqref{eq:optimal_traj} from $\bfz_0$ to $\bfz_f$ violates a constraint, it will be considered invalid. However, it does not necessarily mean that $\bfz_f$ is unreachable from $\bfz_0$. If there exists a constrained optimal trajectory $\bfz^*_{loc}(t)$, our \cref{theorem:prob_exhausitivity}~in~\cref{sec:theoretical_analysis} shows that as the number of samples $N = |\bbV_\bfz|$ goes to infinity, the probability of having a piecewise-polynomial trajectory that is close to $\bfz^*_{loc}(t)$ (see \cref{def:delta_clearance}) tends to $1$, \ie $\bfz_f$ can still be reached via a sequence of nodes.% with high probability.

Interestingly, recent results from optimal control~\cite{beaver2024optimal} find an optimal piecewise-polynomial local path under constraints for differentially flat systems by carefully switching between modes, each of which corresponds to a polynomial motion~primitive. This approach solves an optimality condition on the~cost function and the constraints to find an optimal mode schedule and is shown to generate an optimal trajectory for low-DOF systems with simple closed-form constraints in a few milliseconds. 
However, it is challenging to design such a mode-switching mechanism for high-DOF robots with complex geometry due to complicated forward kinematics, especially when the local path generation time is often restricted to the nanosecond range. Intriguingly, from a sampling-based perspective, our \cref{theorem:prob_exhausitivity} (see \cref{sec:theoretical_analysis}) shows that such an optimal piecewise-polynomial local path can be found on our planning graph with high probability, \ie a sequence of nodes on our graph is equivalent to the mode-switching schedule from~\cite{beaver2024optimal}. Nevertheless, this is an exciting direction for generating a constrained local path $\bfz_{loc}(t)$ in our framework that we leave for future work.
}

\subsubsection{Dynamics Propagation in Closed-Forms}
\label{subsubsec:flat_dyn_prop}
\REV{As BVP problems are challenging to solve, most existing work~\cite{li2016sst} resorts to dynamics propagation, where the planning tree is extended by sampling a constant control input and integrating the robot dynamics over small time steps using numerical methods. While our \emph{BVP closed-form solution} in~\cref{subsubsec:solving_flat_bvp} \emph{is the main focus of our paper}, this section shows that transforming the kinodynamic planning problem to the flat state space (\cref{problem:flatspace_planning}) also leads to closed-form dynamics propagation, and hence gets rid of numerical approximations with potentially high accumulated errors. }
Instead of sampling the original control input, we sample the pseudo-control input $\bfw_{loc}(t) = \bfw_0$ and the duration $T$, leading to a closed-form local path:
\begin{equation} \label{eq:poly_traj_sampled_control}
\bfz_{loc}(t)=\left(\bfy_{loc}, \dot{\bfy}_{loc}, \ldots, \bfy^{(r-1)}_{loc}\right),
\end{equation}
where $\bfy_{loc}(t) = \frac{\bfw_0}{r!}t^r + \sum_{i = 0}^{r-1} \frac{\bfy^{(i)}_0}{i!} t^{i}$.
These local ``flat" paths can be used in tree-based kinodynamic planners such as Stable Sparse RRT~\cite{li2016sst} and are also suitable for parallelized forward kinematics and collision checking in \cref{subsec:vectorized_cc}. 
\REV{Propagation-based planners tend to ``wander" in the state space due to \REVV{the suboptimality of random control with} small time steps $T$. Interestingly, our BVP solutions can help by finding shortcuts to the goal rather than keep expanding the tree, and therefore reduce the planning time significantly as illustrated in~\cref{subsec:compare_lowdim}.}

\subsection{Vectorized Collision Checking} \label{subsec:vectorized_cc}
To perform collision checking on a local ``flat" path $\bfz_{loc}(t)$  with pseudo-control $\bfw_{loc}(t)$~(\cref{alg:collision_checking_poly}), we convert it back to the original state space $\calX$ and obtain the corresponding \mbox{closed-form} local path and control: 
\begin{equation} \label{eq:orig_local_path}
    \begin{aligned}
        \bfx_{loc}(t) &= \bfalpha(\bfz_{loc}(t), \bfw_{loc}(t)) \\
        \bfu_{loc}(t) &= \bfbeta(\bfz_{loc}(t), \bfw_{loc}(t)).
    \end{aligned}
\end{equation}
We next discretize $\bfx_{loc}(t)$ and $\bfu_{loc}(t)$ at $N$ times: $t_1, t_2, \ldots, t_N~\in~[0,T]$, grouped into several spatially distributed batches of size $K$: $$\bfb_i~=~\left\{t_i, t_{\left\lceil\frac{N}{K}\right\rceil + i}, t_{2\left\lceil\frac{N}{K}\right\rceil+i}, \ldots, t_{(K-1)\left\lceil\frac{N}{K}\right\rceil+i}\right\},$$ for $i = 0, \ldots, \left\lceil\frac{N}{K}\right\rceil$. For each batch $\bfb_i$, we obtain a set of samples $\{\bfx_{loc}(t_j)\}_{t_j\in \bfb_i}$ and perform fast parallelized collision checking via \scSIMD instructions, as illustrated in \cref{fig:cc_flat}. \NEW{The batch size $K$ is the number of floats that a \scSIMD register can store, \emph{e.g.}, $K = 8$ for the commonly used AVX2 instruction set}. The robot's geometry is represented by a set of \REV{SIMD-compatible primitive shapes such as spheres or capsules}, whose poses are calculated via forward kinematics for multiple states and checked for collisions with obstacles using \scSIMD parallelism. If any state in batch $\bfb_i$ leads to collisions with an obstacle, we terminate the collision checking subroutine early and move on to other local paths. 

\begin{algorithm}[t]
    \caption{\textsc{FlaskCC}}
    \label{alg:collision_checking_poly}
         \DontPrintSemicolon
    \KwIn{Flat state path $\bfz_{loc}(t)$ with pseudo-control~$\bfw(t)$}
    \KwOut{Whether $\bfz_{loc}(t)$ collides with obstacles}
    \Comment*[l]{\small \BLUE{Convert to the original state space}}
    $\bfx_{loc}(t) \gets \bfalpha(\bfz_{loc}(t), \bfw_{loc}(t))$, 
    
    $\bfu_{loc}(t) \gets \bfbeta(\bfz_{loc}(t), \bfw_{loc}(t))$ 
    
    \For{$i \in \{0, \ldots, \left\lceil\frac{N}{K}\right\rceil\}$}
    {\Comment*[l]{\small \BLUE{Parallel checking via \scSIMD~\cite{thomason2024vamp}}}
        \If{$\exists j\in \bfb_i: \bfx_{loc}(t_j) \in \calX_{obs}$}{
        \Return true 
    } 
    \Comment*[l]{\small \BLUE{checking other constraints such as state and control limits}}
            \If{$\exists j\in \bfb_i$: $\bfx_{loc}(t_j), \bfu_{loc}(t_j)$ violate other constraints }{
        \Return true 
        }
    }
    \Return false
\end{algorithm}

Similarly, other constraints such as state and control limits can be checked in parallel for each batch $\bfb_i$ to validate $\bfz_{loc}(t)$. 
\REV{
In general, the conversions \eqref{eq:flat_output_def} and \eqref{eq:state_control_from_flat_output} are nonlinear and might complicate the exact conversions of state and control limits on $(\bfx, \bfu)$ to corresponding flat state and pseudo-control limits on $(\bfz, \bfw)$. For many common robot systems, an upper bound on the conversions \eqref{eq:state_control_from_flat_output} can be obtained to estimate the limits in the flat state space as the flat output $\bfy$ is often part of the original state $\bfx$ (see Examples~\ref{example:manipulator},~\ref{example:unicycle} and \ref{example:quadrotor}). For example, any limits on the joint angles and velocities of a manipulator in~\cref{example:manipulator} can be directly translated to the flat state or any limits on the speed $v$ of a unicyle in~\cref{example:unicycle} can be used to bound the first derivative of the flat out put $(\dot{x}, \dot{y})$. Furthermore, since the flat output describes certain physical properties of the robot, \eg the position and yaw angle of a quadrotor in~\cref{example:quadrotor}, users can also directly specify the flat state and pseudo-control constraints, \eg maximum linear velocity and yaw rates.
For general nonlinear systems, this issue can be addressed by using large limits in the flat state space, and checking the local path $(\bfx_{loc}(t),  \bfu_{loc}(t))$ in \eqref{eq:orig_local_path} against the original state and control limits on $(\bfx, \bfu)$ via sampling. 
However, there is a trade-off between the size of the flat state space and the risk of missing valid paths, which can be tuned based on the planner's performance, \eg the larger flat state space may lead to longer planning time but reduce the risk. 
}

\begin{algorithm}[t]
    \caption{\textsc{Trajectory \REV{Postprocessing}}}
    \label{alg:traj_simplification}
     \DontPrintSemicolon
    \KwIn{A collision-free piecewise-polynomial trajectory $\bfsigma_\bfz(t) = \left\{(\bfz_i(t
), t_i)\right\}^{M}_{i=1}$, with control inputs $\bfw_\bfz(t) = \left\{ \bfw_i(t), t_i)\right\}^{M}_{i=1},$}
    
    \For{$i \in \{1, \ldots, M\}$}
    {
        \For{$j \in \{M, \ldots, 1\}$}
        {
            Calculate $\bfz_{ij}(t)$ from Eq. \eqref{eq:optimal_traj} with $\bfz_0 = \bfz_i(t_{i-1})$, $\bfz_f = \bfz_j(t_{j})$, and a time $T_{ij} = t_j - t_{i-1}$ or an optimal $T_{ij}$ from \eqref{eq:optimal_time}.

            Calculate $\bfw_{ij}(t)$ from Eq. \eqref{eq:optimal_control}.
            
            \Comment*[l]{\small \BLUE{Bypass unnecessary motions \REV{if the trajectory $\bfz_{ij}(t)$ is valid}}}
            \If{\REVVV{\textbf{\textup{not }}}\textsc{FlaskCC}($\bfz_{ij}(t), \bfw_{ij}(t)$)}
                {
                    Replace $\left\{(\bfz_i(t), t_i)\right\}^{j}_{k=i}$ by $(\bfz_{ij}(t), t_{i-1} + T_{ij})$.
                    
                    Replace $\left\{(\bfw_i(t), t_i)\right\}^{j}_{k=i}$ by $(\bfw_{ij}(t), t_{i-1} + T_{ij})$.
                }
        }
    }
    % \Return false
\end{algorithm}

\subsection{Trajectory \REV{Postprocessing}} \label{subsec:traj_simplification}
Our kinodynamic planning approach in \cref{subsec:flat_planning_problem} returns a piecewise-polynomial trajectory: $\bfsigma_\bfz(t) = \left\{(\bfz_i(t
), t_i)\right\}^{M}_{i=1}$, for $0 = t_0 < t_1 < \ldots < t_{M}$ where the polynomial $\bfz_i(t
), t\in [t_{i-1}, t_i]$ corresponds to the pseudo-control $\bfw_i(t)$ and time duration $T_i = t_i - t_{i-1}$. 
Due to the sampling-based nature of our approach, the trajectory might contain unnecessary local paths and often requires further \REV{postprocessing}, \emph{e.g.}, by seeking shortcuts between nodes. \NEW{For completeness, a simple \REV{trajectory postprocessing} scheme is provided in \cref{alg:traj_simplification} such that} if there exists a collision-free trajectory $\bfz_{ij}(t)$ that connects two nodes $\bfz_i(t_{i-1})$ and $\bfz_j(t_{j})$, the local paths $\bfz_{k}$, $i\leq k\leq j$ in between can simply be bypassed by $\bfz_{ij}(t)$. \NEW{We note that our approach is general and therefore, compatible with other complex trajectory shortening schemes} such as~\cite{geraerts2007creating, hauser2010trajsim}.

\section{\REV{Probabilistic Exhaustivity and Optimality Analysis}} \label{sec:theoretical_analysis}
\REV{
In this section, we will examine the \emph{probabilistic exhaustivity} of our \textbf{\FLASK} framework (\cref{subsec:prob_exhaus}), which is a key concept for proof of optimality common among asymptotically optimal planners~\cite{bekris2020AOSurvey, karaman2011optimalmp, janson2015fmt}. 
As our approach is general and compatible with most sampling-based motion planners, we provide an outline of the \emph{optimality analysis} in \cref{subsec:optimality} based on the proven probabilistic exhaustivity property.
}
\subsection{Probabilistic Exhaustivity} \label{subsec:prob_exhaus}
\REV{
The probabilistic exhaustivity property, introduced in~\cite{kavraki1998analysis,schmerling2015optimal_driftless}, is stronger than probabilistic completeness~\cite{lavalle2006planning}. 
While probabilistic completeness only requires \emph{one} solution to be found, \emph{probabilistic exhaustivity} requires that \emph{any} trajectory~$\bfpi$, \REVV{with a $\delta$-clearance to the occupied regions of the state space (see \cref{def:delta_clearance})}, can be approximated arbitrarily well with high probability by a piecewise trajectory $\bfsigma$, composed of BVP solutions $\bfsigma(t) = \left\{(\bfx_i(t
), t_i)\right\}^{M}_{i=1}$ connecting $M+1$ nodes $\{\bar{\bfx}_i\}_{i=0}^M$ on the graph, as the number of nodes tends to infinity.
}
\REV{
Our analysis relies on the following assumptions and definitions.
\begin{assumption}\label{assumption:pseudocontrol}
    The pseudo-control input $\bfw = \bfy^{(r)}$ in \cref{eq:pseudo_control} has $r$ strictly larger than $l$ (defined in~\cref{eq:state_control_from_flat_output}), \ie the original state $\bfx$ can be recovered using \cref{eq:state_control_from_flat_output} solely from the flat state~$\bfz$:~$\bfx~=~\bfalpha(\bfz)$. This is the common case as seen in Examples~\ref{example:manipulator}, \ref{example:unicycle}, and \ref{example:quadrotor}. 
\end{assumption}

}
\REV{
\begin{definition}[\mbox{$\delta$-clearance} trajectory~\cite{bekris2020AOSurvey, schmerling2015optimal_driftless}]\label{def:delta_clearance}
A trajectory $\bfpi$ with duration $T_{\bfpi}$ in the state space $\calX$ is called ``\mbox{$\delta$-clearance}" if the state $\bfx(t)$ stays in the $\delta$-interior of the free space $\calX_{free}$: $$\calX^\delta_{free} = \{\bfx \in \calX_{free}: \min_{\bfy \in \calX_{obs}} \Vert \bfx - \bfy \Vert \geq \delta\} \textrm{ for a } \delta > 0.$$
\end{definition}
}

\begin{definition}[$(\varepsilon, \zeta, \eta)$-close piecewise trace~\cite{schmerling2015optimal_drift}]\label{def:trace}
Given~a trajectory $\bfpi(t)$ with control $\bfu_{\bfpi}(t)$ and duration $T_{\bfpi}$, a~piecewise trajectory $\bfsigma(t) = \left\{(\bfx_i(t
), t_i)\right\}^{M}_{i=1}$ with control $\bfu_{\bfsigma}(t)=\left\{(\bfu_i(t
), t_i)\right\}^{M}_{i=1}$ and duration $T_{\bfsigma}$, connecting $M+1$ points $\{\bar{\bfx}_i\}_{i=0}^M$, is called an ``$(\varepsilon, \zeta, \eta)$-close piecewise trace" of $\bfpi$ if:
\begin{enumerate}[label=(\roman*)]
    \item The cost of $\bfsigma$ is $\varepsilon$-close to that of $\bfpi$: $$\calC(\bfsigma, \bfu_{\bfsigma}, T_{\bfsigma}) \leq (1+\varepsilon)\calC(\bfpi, \bfu_{\bfpi}, T_{\bfpi}).$$
    \item The cost of the $i$-th segment is bounded by $\zeta$: $$\calC(\bfx_i(t), \bfu_i(t), T_i) \leq \zeta, \quad \forall i \in \{1, \ldots, M\}$$ 
    \text{with } $T_i = t_i - t_{i-1}$.
    \item The maximum distance from a point on $\bfsigma$ to $\bfpi(t)$ is bounded by $\eta$:
    \begin{equation}\label{eq:l2_dist_bound}
        \max_{t^{\bfsigma} \in [0, T_{\bfsigma}]} \min_{t^{\bfpi} \in [0, T_{\bfpi}]} \Vert \bfsigma(t^{\bfsigma}) - \bfpi(t^{\bfpi})\Vert \leq \eta.
    \end{equation}
\end{enumerate}
\end{definition}

\REVVV{
\begin{definition}[Occupied region $\calZ_{obs}$]\label{def:Z_obs}
Similar to the occupied region $\calX_{obs}$ in the original state space, we define an occupied region $\calZ_{obs}$ in the flat state space. If there are only state constraints in the original state space, i.e., collision avoidance described by $\calX_{obs}$, we can define $\calZ_{obs}$ as: $\calZ_{obs} = \{\bfz~\in~\calZ: \bfalpha(\bfz) \in \calX_{obs}\}$.

In many cases, there are additional constraints such as limits on the original control input $\bfu$ or on a higher-order derivative $\bfx^{(p)},~p~\geq~1$. Such constraints, often described as $\bfgamma(\bfu) \leq \bf0$ or $\bfgamma(\bfx^{(p)}) \leq \bf0$ for some function $\bfgamma$, can also be represented by $\calZ_{obs}$ as follows. Due to \cref{eq:state_control_from_flat_output}, we have $\bfu = \bfbeta(\bfy, \dot{\bfy}, \ldots, \bfy^{(m)})$ and $\bfx = \bfalpha(\bfy, \dot{\bfy}, \ldots, \bfy^{(l)})$, i.e., the $p$th-order derivative $\bfx^{(p)}$ is a function of $\bfy, \dot{\bfy}, \ldots, \bfy^{(l+p)}$.
Therefore, additional constraints on $\bfu$ or $\bfx^{(p)}$ can be described in terms of $\{\bfy, \dot{\bfy}, \ldots, \bfy^{\max{\{m,(l+p)\}}}\}$ as:
\begin{equation}\label{eq:additional_constraint}
    \bfgamma(\bfy, \dot{\bfy}, \ldots, \bfy^{\max{\{m,(l+p)\}}}) \leq \bf0.
\end{equation}
Since we have the freedom to choose the order $r$ of the pseudo-control input $\bfw = \bfy^{(r)}$ in \cref{eq:pseudo_control}, we can choose $r \geq \left(1 + \max{\{m,(l+p)\}}\right)$ so that the constraints in \cref{eq:additional_constraint} only depend on $\bfz$: $\bfgamma(\bfz) \leq \bf0$.
Consequently, this allows us to capture the additional constraints completely by $\calZ_{obs}$:
\begin{equation*}
    \calZ_{obs} = \{\bfz~\in~\calZ: \bfalpha(\bfz) \in \calX_{obs}, \bfgamma(\bfz) \leq \bf0\},
\end{equation*}
which is implemented as part of the collision checking routine (line 6) in \cref{alg:collision_checking_poly}. A collision-free trajectory $\bfsigma_\bfz$ in the flat state space leads to a trajectory $\bfsigma$ that satisfies the original state and control constraints.
Meanwhile, the pseudo-control input $\bfw~=~\bfy^{(r)}$ remains unconstrained in our theoretical analysis.
\end{definition}
}

\begin{theorem}[Probabilistic \REVVV{Exhaustivity}] \label{theorem:prob_exhausitivity}
    Consider our sampling-based kinodynamic planning framework, \FLASK, in ~\cref{alg:main_proposed_alg} where the \textsc{FlaskExtend} subroutine in \cref{alg:steering} uses the BVP solution with optimal time $T^*$ (lines 1-4) and the cost function in \cref{eq:flat_cost}. 
    Under \cref{assumption:pseudocontrol}, \REVVV{let $\calZ_{free} = \calZ \setminus \calZ_{obs}$ \mbox{denote} the free flat state space, where the invalid region $\calZ_{obs}$ is defined in \cref{def:Z_obs}}. 
    Let $\bfpi_\bfz$ with \REVVV{pseudo-control $\bfw_{\bfpi}$ and duration $T_{\bfpi}$} be a \mbox{$\delta$-clearance} trajectory for a $\delta > 0$ with respect to $\calZ_{free}$.    
    Let $N = |\bbV_\bfz|$ denote the number of nodes/samples in our planning graph $\bbG_\bfz = (\bbV_\bfz, \bbE_{\bfz})$. 
    Given a constant $C_\mu$, define $C_{\calZ_{free}} = C_{\mu}^{-1}D^{-1}6^{rn + r^2n/2}2^{rn/2}\mu(\calZ_{free})$ with the volume of the free space $\mu(\calZ_{free})$, a constant $D = (rn + r^2n)/2$ and a user-defined parameter $\kappa \geq 0$.
\begin{enumerate}[label=(\roman*)]
    \item In the flat state space, let $\calA_N$ define the event that there exists an $(\varepsilon, \zeta_N, \eta_N)$-close piecewise trace of $\bfpi_{\bfz}$, denoted as $\bfsigma_\bfz(t) = \left\{(\bfz_i(t), t_i)\right\}^{M}_{i=1}$ with:
    \begin{equation} \label{eq:exhausitivy_bound}
        \begin{aligned}
            \zeta_N &= (1+\kappa)^{(1/D)} C_{\calZ_{free}} \left( \frac{\log N}{N} \right)^{1/D}, \quad \eta_N = C_p \zeta_N,\\
        \end{aligned}
    \end{equation}
    for some constant $M$ and $C_p$. There exists a constant $C_{\mu}$ such that as $N\rightarrow \infty$, the probability that $\calA_N$ does not occur is bounded as:
    \begin{equation} \label{eq:probability_bound_flat}
        1 - P(\calA_N) = O\left(N^{-\kappa/D} \log^{-1/D}{N}\right).
    \end{equation}
    \item Under \cref{assumption:pseudocontrol}, let $\bfpi = \bfalpha(\bfpi_{\bfz})$ be the corresponding trajectory of $\bfpi_{\bfz}$ in the original state space $\calX$. If the conversion $\bfalpha:~\calZ~\rightarrow~\calX$ is Lipschitz-continuous on $\calZ$ with a Lipschitz constant $L_{\bfalpha}$, we define an event $\calB_N$ that there exists a $(\varepsilon, \zeta_N, L_{\bfalpha}\eta_N)$-close piecewise trace of $\bfpi$ where the bounds $\zeta_N$ $\eta_N$ are calculated in \eqref{eq:exhausitivy_bound}. Then, the probability that $\calB_N$ does not occur is bounded as:
    \begin{equation} \label{eq:probability_bound_orig}
        1 - P(\calB_N) = O\left(N^{-\kappa/D} \log^{-1/D}{N}\right).
    \end{equation}
\end{enumerate}
\end{theorem}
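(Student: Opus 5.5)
The plan is to reduce part (i) to the known probabilistic-exhaustivity result for driftless or drift linear systems with LQMT steering, as in Schmerling et al. (the source of \cref{def:trace}). The flat dynamics \eqref{eq:linear_dynamics} is an $r$-fold chain of integrators, and it is controllable. The \textsc{FlaskExtend} steering with optimal time $T^*$ is exactly the optimal-cost BVP used there. The constant $D=(rn+r^2n)/2$ should be read as the exponent describing how the volume of a cost ball scales: the set $\{\bfz_f : \calC_{loc}(\bfz_0,\bfz_f)\le \zeta\}$ has Lebesgue volume $\Theta(\zeta^{D})$ for small $\zeta$. I would first verify this scaling from the Gramian structure \eqref{eq:grammian}. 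For the $j$-th derivative block, $e^{\bfA T}$ and $\bfG_T$ scale like $T^{2(r-j)-1}$, so $\bfd_T^\top\bfG_T^{-1}\bfd_T\le\zeta$ with $T\sim\zeta$ produces an ellipsoid whose $j$-th block has radius $\sim \zeta^{r-j+1/2}$. Summing the exponents gives $n\sum_{j}(r-j+1/2)=(rn+r^2n)/2=D$. This is where the factors $6^{\cdots}2^{rn/2}$ and $C_\mu$ come from, through a lower bound on the volume of the cost ball.

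Next comes the covering construction. Along the $\delta$-clearance trajectory $\bfpi_\bfz$, I would choose $M$ waypoints $\bfpi_\bfz(\tau_i)$ at equal time spacing, so that consecutive waypoints are joined by the segment of $\bfpi_\bfz$ with cost about $\zeta_N/2$. Around each waypoint I would place a small cost ball $B_i$ of radius $\theta\zeta_N$. I would then show two things. First, a triangle-type inequality for the LQMT cost: for any $\bfz_i\in B_i$ and $\bfz_{i+1}\in B_{i+1}$, the optimal connection cost is at most $(1+\varepsilon')$ times the segment cost plus $O(\theta\zeta_N)$. Second, the polynomial connection stays within Euclidean distance $C_p\zeta_N$ of $\bfpi_\bfz$. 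Both facts follow from the explicit closed forms \eqref{eq:optimal_traj} and \eqref{eq:optimal_cost}, using continuity of the optimal trajectory in its boundary values on a time scale $T\sim\zeta_N$. Once $\eta_N<\delta$, the connecting segments lie in $\calZ_{free}$ and are therefore accepted by \textsc{FlaskCC}. Summing the segment costs gives condition (i) of \cref{def:trace}, the segment bound gives (ii), and the distance bound gives (iii).

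For the probability step, each ball has probability mass at least $\mu(B_i)/\mu(\calZ_{free})\ge c\,\zeta_N^{D}/\mu(\calZ_{free})$. Substituting $\zeta_N$ from \eqref{eq:exhausitivy_bound} makes this at least $(1+\kappa)\log N/N$, once $C_\mu$ is chosen to absorb $\theta$ and the volume constants. A union bound over the $M=O(1/\zeta_N)$ balls then gives
\[
1-P(\calA_N)\le M\Bigl(1-\tfrac{(1+\kappa)\log N}{N}\Bigr)^{N}\le M\,N^{-(1+\kappa)}.
\]
Because $M\propto 1/\zeta_N\propto (N/\log N)^{1/D}$, this bound is at least as small as the stated rate \eqref{eq:probability_bound_flat}, and I would track the exponents so they match exactly.

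Part (ii) then follows directly. Under \cref{assumption:pseudocontrol} we have $\bfx=\bfalpha(\bfz)$, so $\bfsigma=\bfalpha(\bfsigma_\bfz)$. The cost \eqref{eq:flat_cost} depends only on $(\bfw,T)$, which are shared between the two representations, so conditions (i) and (ii) carry over unchanged. Lipschitz continuity of $\bfalpha$ gives $\Vert\bfalpha(\bfsigma_\bfz(t))-\bfalpha(\bfpi_\bfz(t'))\Vert\le L_{\bfalpha}\eta_N$, hence $\calA_N\subseteq\calB_N$. I expect the main obstacle to be the cost-ball volume and perturbation estimates, namely establishing the exact $\zeta^D$ scaling together with the triangle-type inequality uniformly along $\bfpi_\bfz$. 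My first attempt would be to rescale time by $T$ and each derivative block by $T^{r-j}$, which reduces everything to a fixed, well-conditioned Gramian on $[0,1]$.
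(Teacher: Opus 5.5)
\textbf{Gap: the covering and probability step.} Your reduction to Schmerling et al.\ is sound, and so are the controllability/Gramian scaling, the time-rescaling perturbation estimate and your part (ii). To get the factor $(1+\varepsilon)$ in condition (i) of the trace, your balls must have $\theta\propto\varepsilon$. Their volume then carries a factor $\varepsilon^{p}$ with $p>0$. For your cost balls of radius $\theta\zeta_N$ this is $p=D$. For ellipsoids $\calE_{c\varepsilon\sqrt{\zeta_N}}(\bfz^{\bfpi}_i,c'\zeta_N)$ it is $p=rn$.

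With $\zeta_N$ as in the statement, each such ball therefore contains only about $\varepsilon^{p}\log N$ samples in expectation. It is empty with probability roughly $N^{-c\varepsilon^{p}}$. Your union bound over $M\sim(N/\log N)^{1/D}$ balls gives $N^{1/D-c\varepsilon^{p}}$, which diverges for small $\varepsilon$; since the balls are essentially disjoint, some ball is then in fact empty with probability tending to one.

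Your remedy, letting $C_\mu$ ``absorb $\theta$'', proves a weaker theorem. In the statement, $C_\mu=\sqrt{C_1}\,\mu(\calS_{rn})$ is the $\varepsilon$-independent Gramian-determinant constant. The explicit factor $6^{rn+r^2n/2}2^{rn/2}$ in $C_{\calZ_{free}}$ is exactly the volume factor of $\calE_{\frac16\sqrt{\zeta_N/2}}(\cdot,\zeta_N/6)$. Your version makes the constant in $\zeta_N$ blow up as $\varepsilon\to 0$. That also breaks the later asymptotic-optimality argument, which needs one fixed threshold $\zeta_N$ to work for every $\varepsilon>0$.

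\textbf{Missing idea: two-scale covering.} The paper uses the two-scale covering of Theorem IV.6 in Schmerling et al., inherited from FMT$^*$.
\begin{enumerate}[label=(\alph*)]
\item The $\varepsilon$-free ellipsoids $\calT_i=\calE_{\frac16\sqrt{\zeta_N/2}}(\bfz^{\bfpi}_i,\zeta_N/6)$ must all contain a sample. This is where your union bound belongs, and it is what fixes the constant in $\zeta_N$.
\item Of the $\varepsilon$-scaled ellipsoids $\calS_i=\calE_{\frac{1}{12}\varepsilon\sqrt{\zeta_N/2}}(\bfz^{\bfpi}_i,\zeta_N/6)$, only a $(1-\varepsilon/4)$ fraction needs to be hit. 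This follows from a concentration bound on the number of empty $\calS_i$, not a union bound, because each $\calS_i$ still has $\Theta(\varepsilon^{rn}\log N)\to\infty$ expected samples.
\item Pick $\bar{\bfz}_i\in\calS_i$ when possible and $\bar{\bfz}_i\in\calT_i$ otherwise. Segments joining two $\calS$-samples cost $(1+O(\varepsilon))\zeta_N/2$ by your perturbation estimate. The at most $O(\varepsilon M)$ segments touching a $\calT$-sample cost at most $\zeta_N$ each. They therefore add only $O(\varepsilon)M\zeta_N=O(\varepsilon)\,\calC(\bfpi_\bfz,\bfw_{\bfpi},T_{\bfpi})$.
\end{enumerate}

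\textbf{Two smaller slips.}
\begin{enumerate}[label=(\alph*)]
\item The waypoints should be placed at equal cost $\zeta_N/2$ along $\bfpi_\bfz$, not at equal time spacing.
\item The $j$-th block ($j=0,\dots,r-1$) of a cost ball has radius of order $\zeta^{r-j}$, not $\zeta^{r-j+1/2}$. With the correct exponent, $n\sum_{j=0}^{r-1}(r-j)=D$; your sum does not give $D$ under either indexing.
\end{enumerate}
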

\begin{proof}
\REVV{Our proof follows a common intuition in \textit{geometric} motion planning~\cite{kavraki1998analysis, bekris2020AOSurvey}, that as the number of samples increases, there exists a sequence of spheres along the trajectory $\bfpi_\bfz$ and contains a piecewise-linear path arbitrarily $\varepsilon$-close to $\bfpi_\bfz$ with high probability. However, this is not trivial in our~case due to the dynamics constraints. Unlike a linear edge that can be bounded easily by a sphere, the nonlinear local path $\bfz_{loc}$ is bounded by an ellipsoid in our proof instead, as shown below.  }
\begin{enumerate}[label=(\roman*), wide, labelwidth=!]
    \item \textbf{Probabilistic exhaustivity in the flat state space}:
    Consider our linear system \eqref{eq:linear_dynamics} in the flat state space $\calZ \REVV{\subset \bbR^{rn}}$ with matrices $\bfA \in \bbR ^{rn\times rn}$, $\bfB \in \bbR ^{rn\times n}$, \REVVV{and unconstrained pseudo-control input $\bfw$}. Consider the controllability matrix:
    \begin{equation}
        \begin{aligned}
            \bfC(\bfA, \bfB) &= \begin{bmatrix}
                \bfB & \bfA \bfB & \bfA^2 \bfB & \cdots & \bfA^{rn} \bfB,
            \end{bmatrix},\\
            &= \begin{bmatrix}
                        \bf0 & \bf0 & \bf0 & \bf0 & \bfI_n & \bf0 &\cdots & \bf0 \\
                        \bf0 & \bf0 & \bf0 & \bfI_n & \bf0 & \bf0 & \cdots & \bf0 \\
                        \vdots & \vdots & \vdots & \vdots & \vdots & \vdots & \ddots & \vdots \\
                        \bf0 & \bfI_n & \bf0 & \bf0 & \bf0 & \bf0 & \cdots & \bf0 \\
                        \bfI_n & \bf0 & \bf0 & \bf0 & \bf0 & \bf0 & \cdots & \bf0
                        \end{bmatrix}.
        \end{aligned}
    \end{equation}
    Clearly, we have $\rank(\bfC(\bfA, \bfB)) = rn$ with $rn$ linearly independent column vectors: $\left\{\{
    \bfb_j, \bfA\bfb_j, \ldots, \bfA^{r-1}\bfb_j\}_{j = 1}^n\right\}$, where $\bfb_j$ denotes the $j$-th column of the matrix $\bfB$ in \eqref{eq:linear_dynamics}. Therefore, our linear system \eqref{eq:linear_dynamics} is controllable with the following controllability indices:
    \begin{equation} \label{eq:control_index}
        \nu_1 = \nu_2 = \ldots = \nu_n = r.
    \end{equation}
    \REVV{The controllability indices, as shown later, allow us to bound the volume of an ellipsoid containing our local path $\bfz_{loc}(t)$.}

\begin{figure}[t]
\centering
\begin{subfigure}[t]{0.4\textwidth}
        \centering
\includegraphics[width=\linewidth]{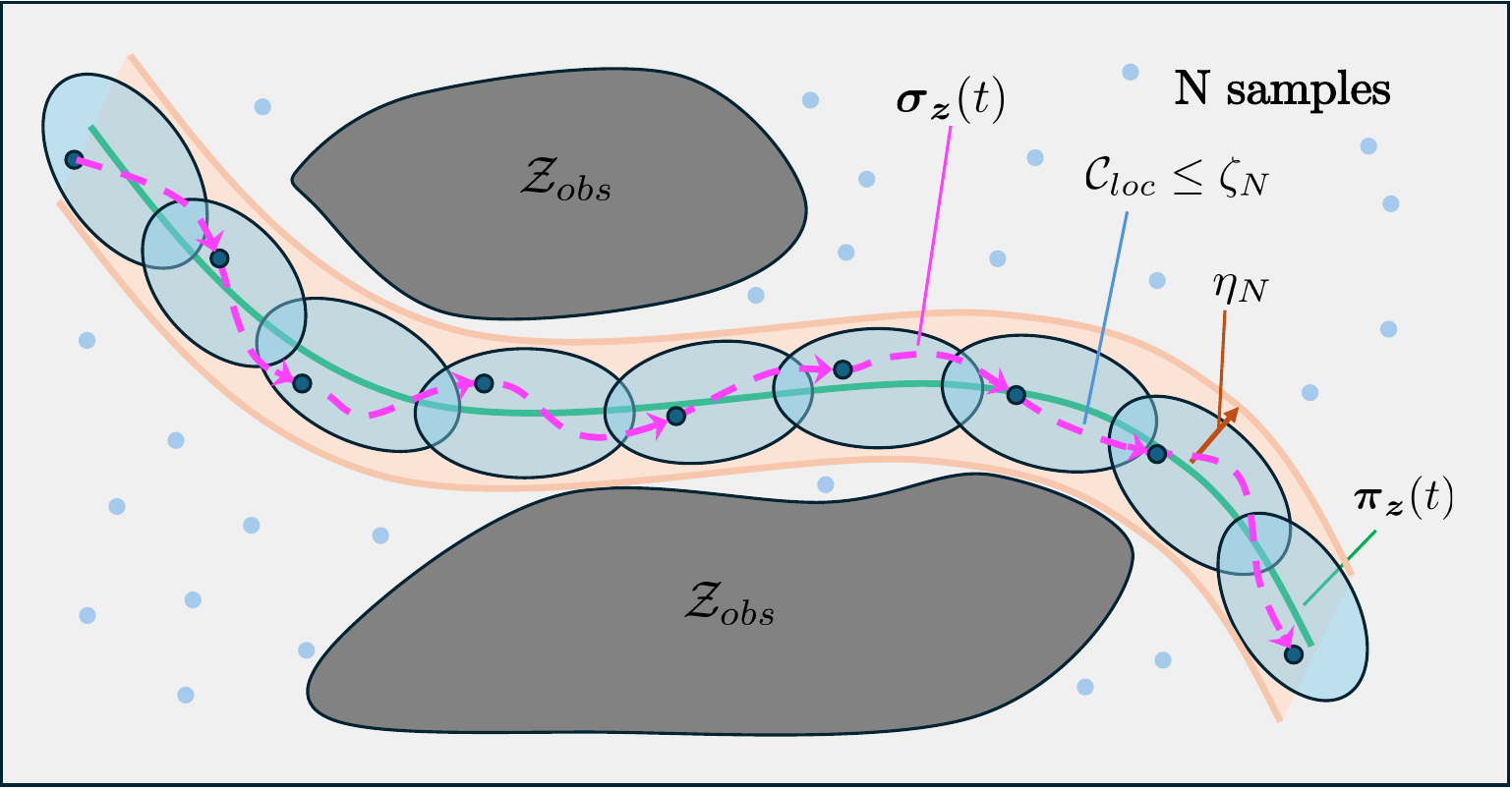}
\caption{}
\label{fig:proba_exhaus}
\end{subfigure}

\begin{subfigure}[t]{0.4\textwidth}
        \centering
\includegraphics[width=\linewidth]{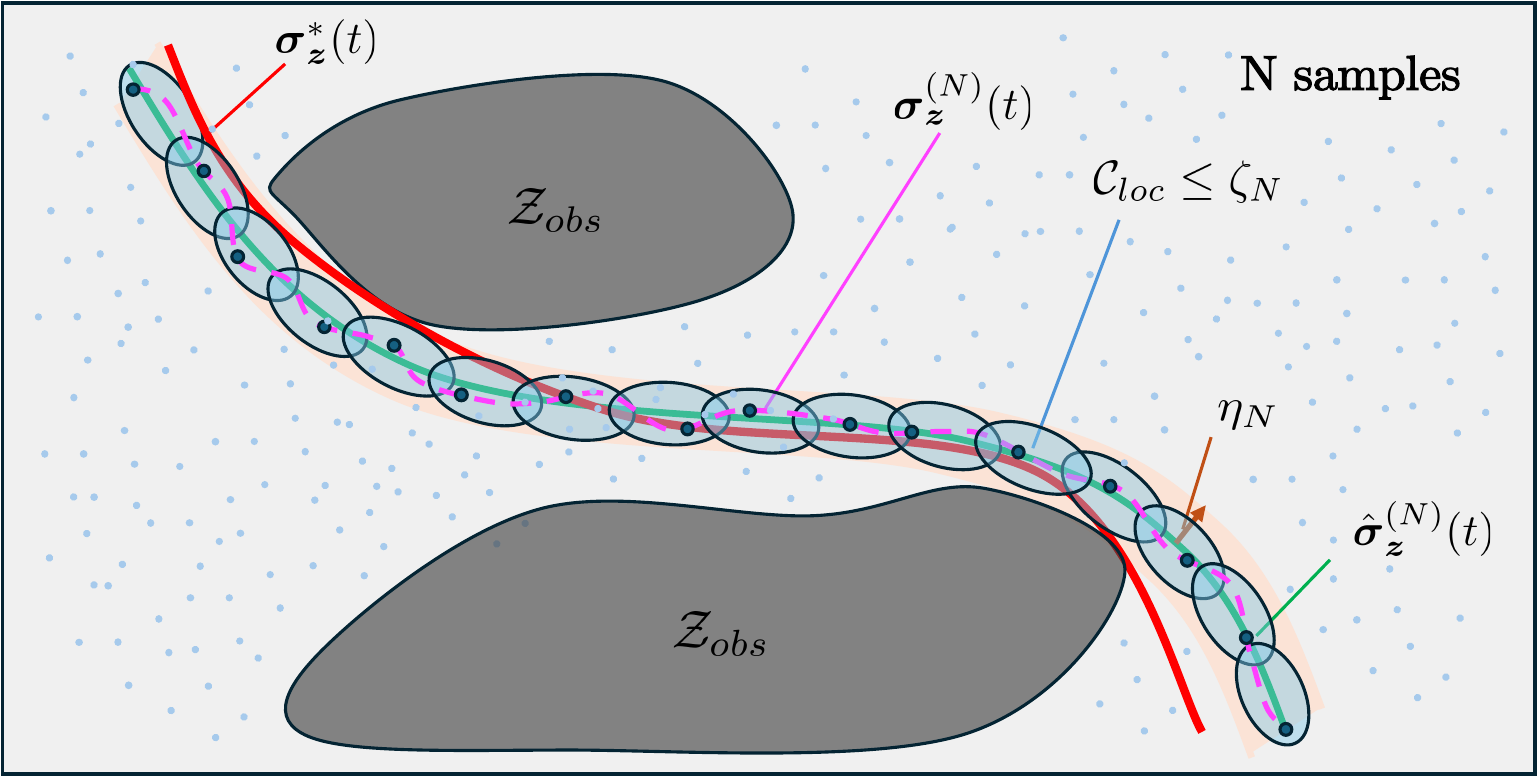}
\caption{}
\label{fig:optimality_paths}
\end{subfigure}
\caption{\REV{ Our theoretical analysis: (a) Probabilistic exhaustivity: as $N\rightarrow \infty$, the probability that there exists an $(\varepsilon, \zeta_N, \eta_N)$-close piecewise trace $\bfsigma_\bfz(t)$ (magenta), on our planning graph $\bbG$, of any trajectory $\bfpi_\bfz$ (green) with \mbox{$\delta$-clearance} approaches 1; (b) Optimality: our approach will find an $(\varepsilon, \zeta_N, \eta_N)$-close piecewise trace $\bfsigma^{(N)}_\bfz(t)$ (magenta) of a $\delta_N$-clearance trajectory $\hat{\bfsigma}^{(N)}_\bfz(t)$ (green). As $\delta_N\rightarrow 0$, the trajectory $\hat{\bfsigma}^{(N)}_\bfz(t)$ converges to the optimal trajectory $\bfsigma^*_\bfz(t)$ (red). Since $\zeta_N \rightarrow 0$, $\eta_N \rightarrow 0$ and $\delta_N \rightarrow 0$ as $N\rightarrow \infty$, our piecewise trajectory $\bfsigma^{(N)}_\bfz(t)$ converges to $\bfsigma^*_\bfz(t)$ in probability.}}
\label{fig:theoretical_guarantees}
\end{figure}

     As the number of samples $N$ goes to infinity, the duration $T$ of a local path $\bfz_{loc}$ approaches $0$ with high probability due to the higher sample density. Therefore, we are interested in how the cost $\calC_{loc}(T)$ in \cref{eq:optimal_cost} behaves near $T=0$. As the Gramian matrix $\bfG_T \succ 0$, we consider an ellipsoid $\calE_{\psi}\REVVV{(\bfz, T)}$ around a point $\bfz$, defined as $$\calE_{\psi}\REVVV{(\bfz, T)} = \{ \bfz: (\bfz' - \bfz)^\top \bfG^{-1}_T (\bfz' - \bfz) \leq \REVVV{\psi^2} \},$$
    which can be used to bound the quadratic component $\bfd_T^\top \bfG^{-1}_T\bfd_T$ of $\calC_{loc}(T)$.
    The volume of this ellipsoid is $\mu(\calE_{\psi}\REVVV{(\bfz, T)})~=~\psi^{rn}  \mu(\calS_{rn}) \sqrt{\det(\bfG_T)}$, where $\mu(\calS_{rn})$ is the volume of a unit ball $\calS_{rn}$ in $\bbR^{rn}$.
    As $T \rightarrow 0$, the value of $\det(\bfG_T)$ determines how fast the volume of $\calE_{\psi}\REVVV{(\bfz, T)}$ goes to~$0$.   
    Due to the controllability indices in \cref{eq:control_index}, the determinant of the Gramian matrix $\bfG_T$ \eqref{eq:grammian} is $\Theta(T^{\sum_{j=1}^n \nu_j^2}) = \Theta(T^{r^2n})$ as $T\rightarrow 0$ according to Lemma~III.4 in \cite{schmerling2015optimal_drift}, \ie there exist constants $ T_0, C_1, C_2 > 0$ such that
    \begin{equation*}
        C_1 T^{r^2n} \leq \det(\bfG_T) \leq C_2 T^{r^2n}, \quad\forall T < T_0.
    \end{equation*} As a result, we have:
    \begin{equation}\label{eq:lower_bound_vol}
        \mu(\calE_{\psi}\REVVV{(\bfz, T)}) \geq C_1 \mu(\calS_{rn}) \psi^{rn} T^{r^2n/2}, \quad\forall T < T_0.
    \end{equation}
    Denote $C_\mu = \sqrt{C_1}\mu(\calS_{rn})$. \cref{eq:lower_bound_vol} allows us to lower bound the rate of the quadratic cost component of the optimal cost in \cref{eq:optimal_cost} as $T\rightarrow 0$. The lower bound~\eqref{eq:lower_bound_vol} specifies the minimum coverage of the state space by the ellipsoid $\calE_{\psi}\REVVV{(\bfz, T)}$.

    \REVVV{
    As our linear system \eqref{eq:linear_dynamics} is controllable with an unconstrained pseudo-control input $\bfw$ (see \cref{def:Z_obs}) and the cost weight matrix $\bfR$ in \cref{eq:flat_cost} is symmetric positive definite, we now can apply Theorem IV.6 in \cite{schmerling2015optimal_drift} with the constant $C_\mu$ calculated above to construct a sequence of ellipsoids that generates an $(\varepsilon, \zeta_N, \eta_N)$-close piecewise trace $\bfsigma_\bfz$. While we refer the readers to~\cite{schmerling2015optimal_drift} for a detailed proof, we provide a sketch of the derivation for completeness as follows. 
    
    Consider a sequence of fixed points on the trajectory $\bfpi_\bfz$: $\{\bfz^{\bfpi}_i\}_{i=0}^{M}$ with $\bfz^{\bfpi}_i = \bfpi_\bfz(t^{\bfpi}_i)$, such that the cost of each segment $i \in \{1, \ldots, M\}$, from $\bfz^{\bfpi}_{i-1}$ to $\bfz^{\bfpi}_{i}$, is exactly $\zeta_N /2$. The number of segments $M$ is finite and bounded as $M \leq \lceil 2\calC(\bfpi_\bfz, \bfw_{\bfpi}, T_{\bfpi}) /\zeta_N\rceil$. Following~\cite{schmerling2015optimal_drift}, we construct two sequences of ellipsoids centered around~$\{\bfz^{\bfpi}_i\}_{i=0}^{M}$: $\calT = \{\calT_i\}_{i=0}^M$ with $\calT_i = \calE_{\frac{1}{6}\sqrt{\zeta_N /2}}(\bfz^{\bfpi}_i, \zeta_N/6)$ and $\calS = \{\calS_i\}_{i=0}^{M}$ with $\calS_i = \calE_{\frac{1}{12}\varepsilon\sqrt{\zeta_N /2}}(\bfz^{\bfpi}_i, \zeta_N/6)$.
    Let $\calA'_N$ be the event that every ellipsoid in $\calT$ and at least $(1-\frac{\varepsilon}{4})$ fraction of the ellipsoids in $\calS$ contains a sample in $\bbV_\bfz$. Using random geometric graph theory~\cite{penrose2003random}, Schmerling et al.~\cite{schmerling2015optimal_drift} shows that $\calA'_N$ occurs with high probability as $N\rightarrow \infty$:
    \begin{equation}
        1 - P(\calA'_N) = O\left(N^{-\kappa/D} \log^{-1/D}{N}\right).
    \end{equation}
    Given that $\calA'_N$ occurs, we can choose a sequence of points $\{\bar{\bfz}_i\}_{i=0}^M$ from $\bbV_\bfz$ by picking $\bar{\bfz}_i$ as a sample in the ellipsoid $\calS_i$ if possible, and in the ellipsoid $\calT_i$ otherwise. 
   In other words,~we compose a new sequence of ellipsoids, whose $(1-\frac{\varepsilon}{4})$ fraction is from $\calS$ and $\frac{\varepsilon}{4}$ fraction is from $\calT$.
    By connecting $\{\bar{\bfz}_i\}_{i=0}^M$ via the BVP solutions in \cref{eq:optimal_traj}, we obtain a trajectory $\bfsigma_\bfz(t) = \left\{(\bfz_i(t), t_i)\right\}^{M}_{i=1}$. Intuitively, the ellipsoids $\calS_i$'s have a smaller $O(\varepsilon)$ extent by definition (\ie closer to $\bfpi_\bfz$ with small enough~$\varepsilon$) and cover $1-O(\varepsilon)$ fraction of the segments~$\bfz_i(t)$'s. Meanwhile, the ellipsoids $\calT_i$'s have a larger $O(1)$ extent in terms of $\varepsilon$ (\ie allowing farther distance from $\bfpi_\bfz$) but only include an $O(\varepsilon)$ fraction of the segments $\bfz_i(t)$'s. This is the main intuition to show that the constructed $\bfsigma_\bfz(t)$ is an $(\varepsilon, \zeta_N, \eta_N)$-close piecewise trace of $\bfpi_\bfz$ by Theorem IV.6 in~\cite{schmerling2015optimal_drift}.
    }
    Therefore, the event $\calA'_N$ implies the event $\calA_N$ and as $N\rightarrow \infty$, the probability that $\calA_N$ \emph{does not} occur is bounded~by:
    \begin{equation*}
        P(\bar{\calA}_N) = 1 - P(\calA_N) = O\left(N^{-\kappa/D} \log^{-1/D}{N}\right).
    \end{equation*}

    \item \textbf{Probabilistic exhaustivity in the original state space}:
    As the conversion $\bfx = \bfalpha(\bfz)$ is Lipschitz-continuous on $\calZ$ with a Lipschitz constant $L_{\bfalpha}$, we have: 
    \begin{equation*}
        \Vert \bfx_1 - \bfx_2 \Vert = \Vert \bfalpha(\bfz_1) - \bfalpha(\bfz_2) \Vert \leq L_{\bfalpha}\Vert \bfz_1 - \bfz_2 \Vert, \forall \bfz_1, \bfz_2 \in \cal Z
    \end{equation*}
    
    Let $\bfsigma = \bfalpha(\bfsigma_\bfz)$ be the corresponding trajectory in the original state space of the $(\varepsilon, \zeta_N, \eta_N)$-close piecewise trace $\bfsigma_\bfz$.
    Given a time $t_{\bfsigma} \in [0, T_{\bfsigma}]$, define $t^*_{\bfpi} = \argmin_{t_{\bfpi} \in [0, T_{\bfpi}]}\Vert\bfsigma_\bfz (t_{\bfsigma}) - \bfpi_\bfz(t_{\bfpi})\Vert$. Then, we have:
        \begin{equation*}
       \Vert \bfsigma(t_{\bfsigma}) - \bfpi(t^*_{\bfpi}) \Vert \leq L_{\bfalpha}\Vert \bfsigma_\bfz(t_{\bfsigma}) - \bfpi_\bfz(t^*_{\bfpi}) \Vert \leq  L_{\bfalpha}\eta_N,
    \end{equation*}
    since $\bfsigma_\bfz$ is a $(\varepsilon, \zeta_N, \eta_N)$-close piecewise trace.
    Let $\tau^*_{\bfpi} = \argmin_{\tau_{\bfpi} \in [0, T_{\bfpi}]}\Vert\bfsigma (t_{\bfsigma}) - \bfpi(\tau_{\bfpi})\Vert$. Clearly, we have the following bound that holds \emph{for all $t_{\bfsigma} \in [0, T_{\bfsigma}]$}:
    \begin{equation*}
       \Vert \bfsigma(t_{\bfsigma}) - \bfpi(\tau^*_{\bfpi}) \Vert \leq \Vert \bfsigma(t_{\bfsigma}) - \bfpi(t^*_{\bfpi}) \Vert \leq  L_{\bfalpha}\eta_N.
    \end{equation*}
    By maximizing over $t_{\bfsigma} \in [0, T_{\bfsigma}]$, the maximum distance between a point on $\bfsigma$ to $\bfpi$ in the state space $\calX$ satisfies:
    \begin{equation*}
        \max_{t_{\bfsigma} \in [0, T_{\bfsigma}]} \min_{t_{\bfpi} \in [0, T_{\bfpi}]} \Vert \bfsigma(t_{\bfsigma}) - \bfpi(t_{\bfpi})\Vert \leq L_{\bfalpha}\eta_N.
    \end{equation*}
    As the cost function \eqref{eq:flat_cost} is defined in the flat state space, the trajectory $\bfsigma$ has the same cost as $\bfsigma_\bfz$. Therefore, an $(\varepsilon, \zeta_N, \eta_N)$-close piecewise trace $\bfsigma_\bfz$ implies an $(\varepsilon, \zeta_N, L_{\bfalpha}\eta_N)$-close piecewise trace $\bfsigma$ or in other words, the event $\calA_N$ implies the event $\calB_N$, \ie $P(\calA_N) \leq P(\calB_N)$. Therefore, we have:
    \begin{equation*}
        1 - P(\calB_N) \leq 1 -  P(\calA_N)  = O\left(N^{-\kappa/D} \log^{-1/D}{N}\right). \qedhere
    \end{equation*}
\end{enumerate}

\end{proof}

\REVV{\cref{theorem:prob_exhausitivity} shows that given any trajectory $\bfpi$ with $\delta$ clearance, as the number of samples  $N\rightarrow \infty$, there exists, with high probability, an $(\varepsilon, \zeta_N, \eta_N)$-close piecewise trace $\bfsigma$ of the trajectory $\bfpi$ (illustrated in \cref{fig:proba_exhaus}). More importantly, the bounds $\zeta_N$ and $\eta_N$ are $O\left(\left(\frac{\log{N}}{N}\right)^{1/D}\right)$, which will be shown later to be critical for our \REVVV{asymptotic} optimality analysis.}

\REVVV{
\cref{theorem:prob_exhausitivity} assumes that we can completely check if a trajectory is valid or has $\delta$-clearance, e.g., via continuous-time collision checking. However, the standard practice in sampling-based motion planning, \eg~\cite{lavalle2006planning, kuffner2000rrtconnect}, is to perform this check approximately via sampling for efficiency, as shown in \cref{alg:collision_checking_poly}, with the accuracy up to~the sampling resolution. While integrating continuous-time collision checking can avoid caveats of sampling-based collision checking, \eg tunneling effects with thin obstacles, we leave this direction for future~work.
}

\subsection{Optimality Analysis} \label{subsec:optimality}
\REVV{As our \textbf{\FLASK} framework in \cref{sec:technical_approach} (\cref{alg:main_proposed_alg}) is compatible with any sampling-based motion planners~\cite{orthey2024-review-sampling}, the optimality guarantees of our trajectory depend on the optimality guarantees of the specific planner that integrates our \textsc{FlaskExtend} subroutine.} However, we provide an outline of an optimality analysis based on random geometric graphs~\cite{penrose2003random} which many asymptotically optimal sampling-based motion planners follow (see~\cite{karaman2011optimalmp, webb2013kinodynamicRRT, janson2015fmt} for an example and ~\cite{bekris2020AOSurvey} for an excellent review of this topic). \REVV{Therefore, our framework not only transforms any sampling-based motion planner into its kinodynamic version but also preserves its optimality guarantees.}

\REV{
We define an optimal trajectory solution of \cref{problem:flatspace_planning} with respect to the cost \eqref{eq:flat_cost} in \cref{def:optimal_traj}. We note that while our cost function Eq. \eqref{eq:flat_cost} is defined in the flat state space, it bears physical meaning in the original space as the flat output $\bfy$ often describes certain physical properties of the robot, \eg the position and yaw angle of a quadrotor in Example~\ref{example:quadrotor}. For example, if the pseudo-control input $\bfw = \bfy^{(r)}$ in \eqref{eq:pseudo_control} has $r = 2, 3, 4, \ldots$, the cost \eqref{eq:flat_cost} means that we would like to find a minimum acceleration-time, jerk-time, or snap-time trajectory, respectively (\eg in \cite{mellinger2011minimumsnap, liu2017search, liu2018search}).
\begin{definition}[Optimal Trajectory]\label{def:optimal_traj}
    Denote $\calC_{\bfsigma_\bfz} = \calC(\bfsigma_\bfz(t), \bfw(t), T)$. Assume that there exists a minimum cost $\calC^* =\min_{\bfsigma_\bfz}\calC_{\bfsigma_\bfz}$, defined in Eq. \eqref{eq:flat_cost}, over all trajectories $\bfsigma_\bfz(t)$ that solve \cref{problem:flatspace_planning}. A trajectory $\bfsigma_\bfz^*(t)$ with control $\bfw^*(t)$ and its corresponding trajectory $\bfsigma^*(t) = \bfalpha(\bfsigma_\bfz^*(t), \bfw^*(t))$ with control $\bfu^*(t) = \bfbeta(\bfsigma_\bfz^*(t), \bfw^*(t))$ in the original state space are called optimal if they achieve the minimum cost $\calC^*$.
\end{definition}
}

\REV{
In the flat space, the optimal trajectory $\bfsigma_\bfz^*(t)$, defined in \cref{def:optimal_traj}, might have $0$ clearance as illustrated in \cref{fig:optimality_paths}, \ie it might touch the boundary of $\calZ_{obs}$. However, we assume that there exists a sequence of $\delta_N$-clearance trajectories $\hat{\bfsigma}_\bfz^{(N)}(t)$ with control $\hat{\bfw}^{(N)}$ and duration $T^{(N)}$ such that its cost $\calC_{\hat{\bfsigma}_\bfz^{(N)}} = \calC(\hat{\bfsigma}_\bfz^{(N)}(t), \hat{\bfw}^{(N)}(t), T^{(N)})$ satisfies: 
$\lim_{N\rightarrow \infty}{\calC_{\hat{\bfsigma}_\bfz^{(N)}}} = \calC^*,$
where $N = |\bbV_\bfz|$ denotes the number of nodes/samples in our planning graph $\bbG_\bfz = (\bbV_\bfz, \bbE_{\bfz})$. This is a common assumption in sampling-based motion planning as the $\delta_N$ clearance allows a region around $\hat{\bfsigma}_\bfz^{(N)}(t)$ with nonzero measure for sampling.

For each $\delta_N$-clearance trajectory $\hat{\bfsigma}^{(N)}_\bfz(t)$, \cref{theorem:prob_exhausitivity}(i) shows that as $N\rightarrow \infty$, the probability that there is an $(\varepsilon, \zeta_N, \eta_N)$-close piecewise trace $\bfsigma^{(N)}_\bfz(t)$ of $\hat{\bfsigma}^{(N)}_\bfz(t)$ goes to~$1$. This allows us to cover $\hat{\bfsigma}^{(N)}_\bfz(t)$ by a sequence of ellipsoids of extent $\zeta_N$ so that the piecewise trace $\bfsigma^{(N)}_\bfz(t)$ will stay inside, as illustrated in \cref{fig:optimality_paths}. Meanwhile, to maintain connectivity around $\hat{\bfsigma}^{(N)}_\bfz(t)$, the bound $\zeta_N$ can be used as thresholds for expanding (\eg for picking $\bfz_0$ in \textsc{FlaskExtend} in \cref{alg:steering}) and rewiring the planning graph $\bbG_\bfz$ to find a better cost for each neighboring node, similar to RRT*~\cite{karaman2011optimalmp} or for recursive cost updates via dynamic programming as in FMT*~\cite{janson2015fmt, schmerling2015optimal_drift}.

More importantly, as $N\rightarrow \infty$, the upper bounds $\zeta_N, \eta_N$ in \eqref{eq:exhausitivy_bound}, proportional to $\left(\frac{\log N}{N}\right)^{1/D}$, go to $0$. Therefore, the probability that the piecewise trace $\bfsigma^{(N)}_\bfz(t)$ converges to $\hat{\bfsigma}^{(N)}_\bfz(t)$ with cost $\calC_{\bfsigma^{(N)}_\bfz} \leq (1+\varepsilon) \calC_{\hat{\bfsigma}^{(N)}_\bfz}$ also approaches $1$. 
Furthermore, by choosing a decreasing clearance $\delta_N$ so that $\delta_N \rightarrow 0$ as $N \rightarrow \infty$, \eg $\delta_N = O(\eta_N)$, we have $P\left(\calC_{\bfsigma^{(N)}_\bfz} \leq (1+\varepsilon) \calC^*\right) \rightarrow 1,$ for an arbitrarily small $\varepsilon > 0$, \ie our \textbf{\FLASK} \REVV{framework} maintains asymptotic optimality. This is illustrated in \cref{fig:optimality_paths} with denser samples and smaller clearance $\delta_N$ than those of \cref{fig:proba_exhaus}, making our piecewise trajectory $\bfsigma^{(N)}_\bfz$ closer to the optimal one as $N$ increases.

Finally, the \REVVV{asymptotic} optimality guarantees can be transformed to the original state space following similar steps due to \cref{theorem:prob_exhausitivity}(ii).
This analysis importantly affirms that~our \textbf{\FLASK} framework can be transform any existing sampling-based motion planners into kinodynamic planners without breaking their asymptotic optimality guarantees.

}

\section{Evaluation}
\label{sec:evaluation}

\begin{figure}[t]
\centering
\begin{subfigure}[t]{0.5\textwidth}
        \centering
\includegraphics[width=0.5\linewidth, trim={0.95cm 2cm 0.95cm 0.6cm},clip]{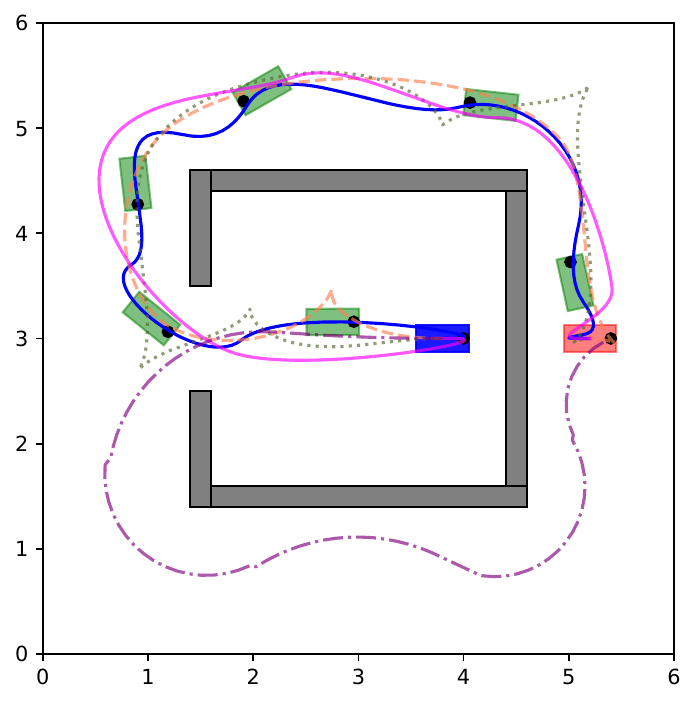}%
\hfill
\includegraphics[width=0.5\linewidth, trim={2cm 2.5cm 3cm 3cm},clip]{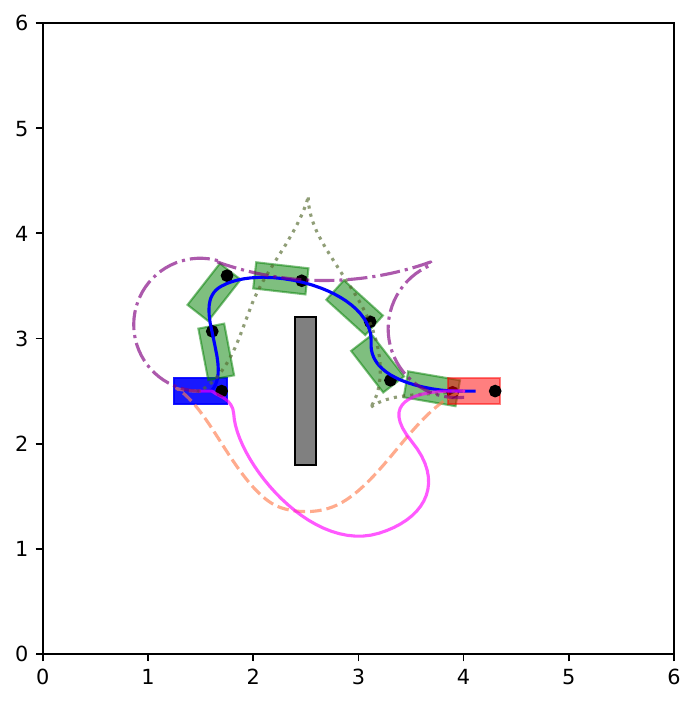}%
        \caption{Unicycle: \emph{Bugtrap} (left) \& \emph{Wall} (right).}
        \label{fig:unicycle_viz}
\end{subfigure}%

\begin{subfigure}[t]{0.5\textwidth}
        \centering
\includegraphics[width=0.5\textwidth, trim={1.2cm 2cm 1cm 0.5cm},clip]{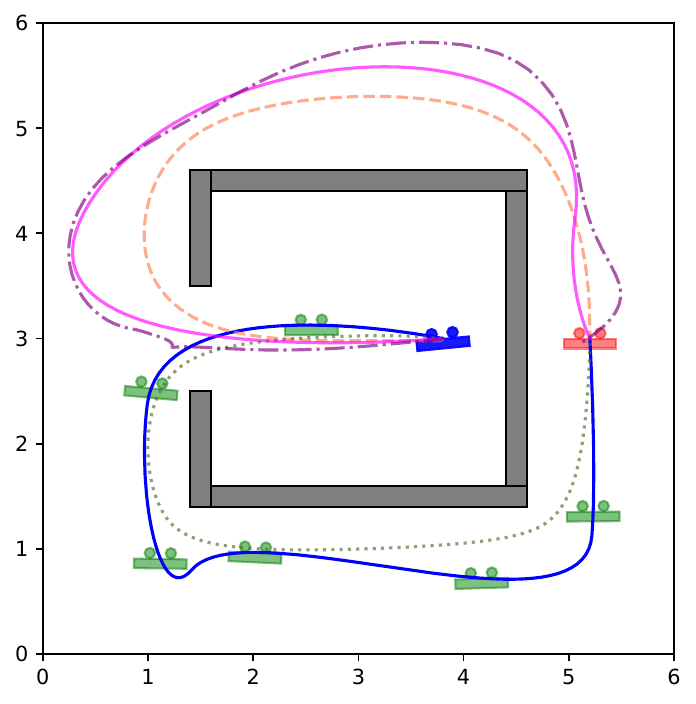}%
\hfill
\includegraphics[width=0.42\textwidth, trim={1cm 1cm 1cm 3.0cm},clip]{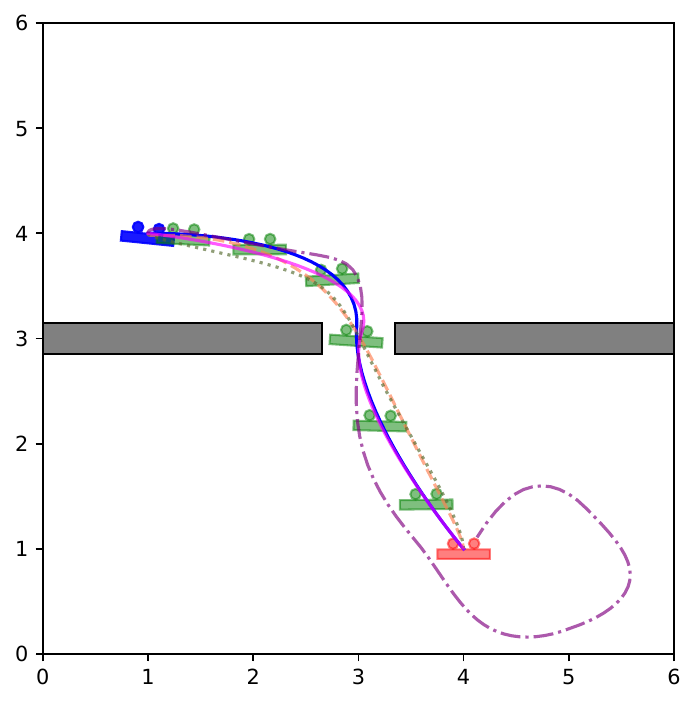}%
        \caption{2D quadrotor: \emph{Bugtrap} (left) \& \emph{Hole} (right).}
        \label{fig:quad2d_viz}
\end{subfigure}%

\begin{subfigure}[t]{0.5\textwidth}
        \centering
\includegraphics[width=0.45\textwidth, trim={4cm 6cm 3.7cm 6cm},clip]{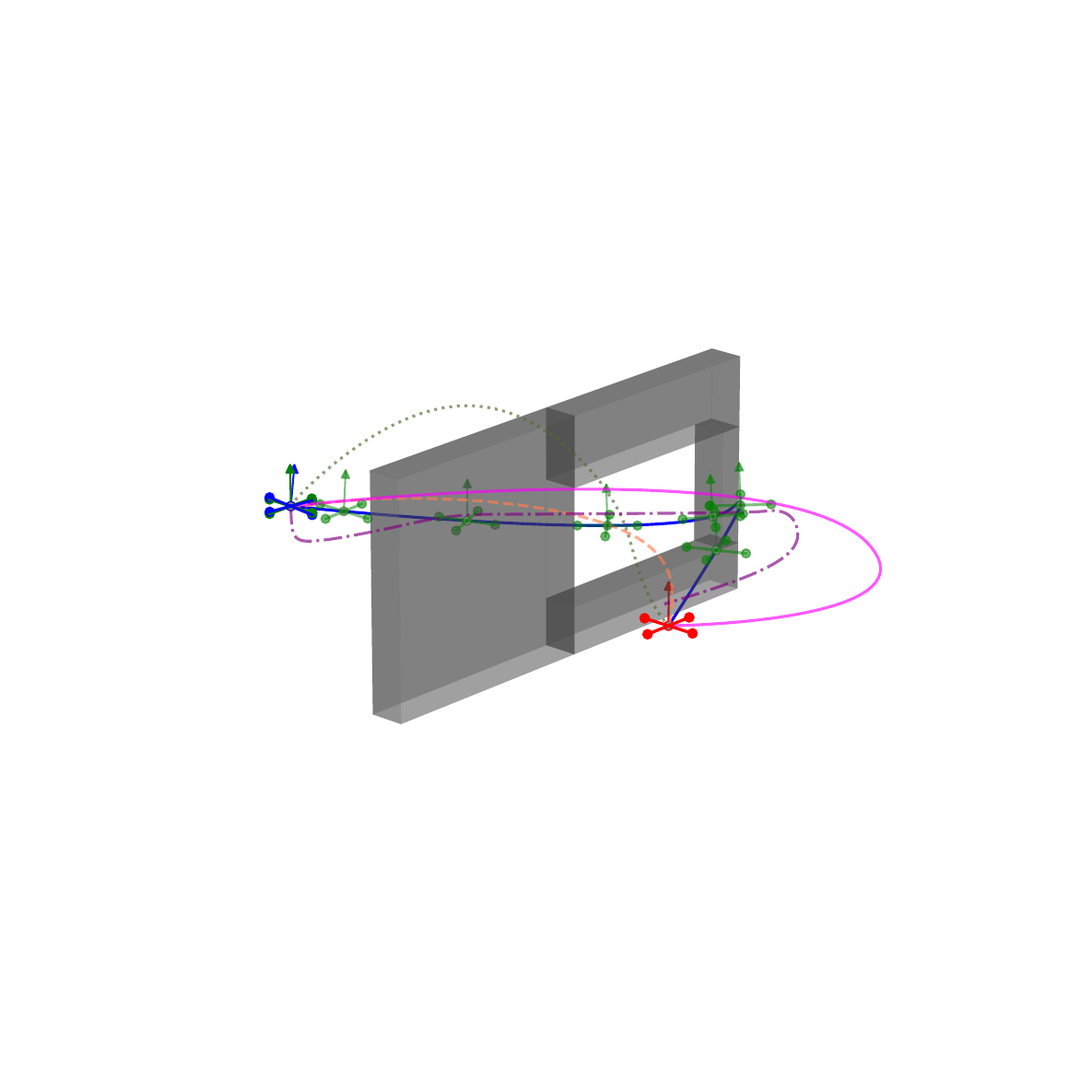}%
\hfill
\includegraphics[width=0.45\textwidth, trim={4cm 5.22cm 3.5cm 5.95cm},clip]{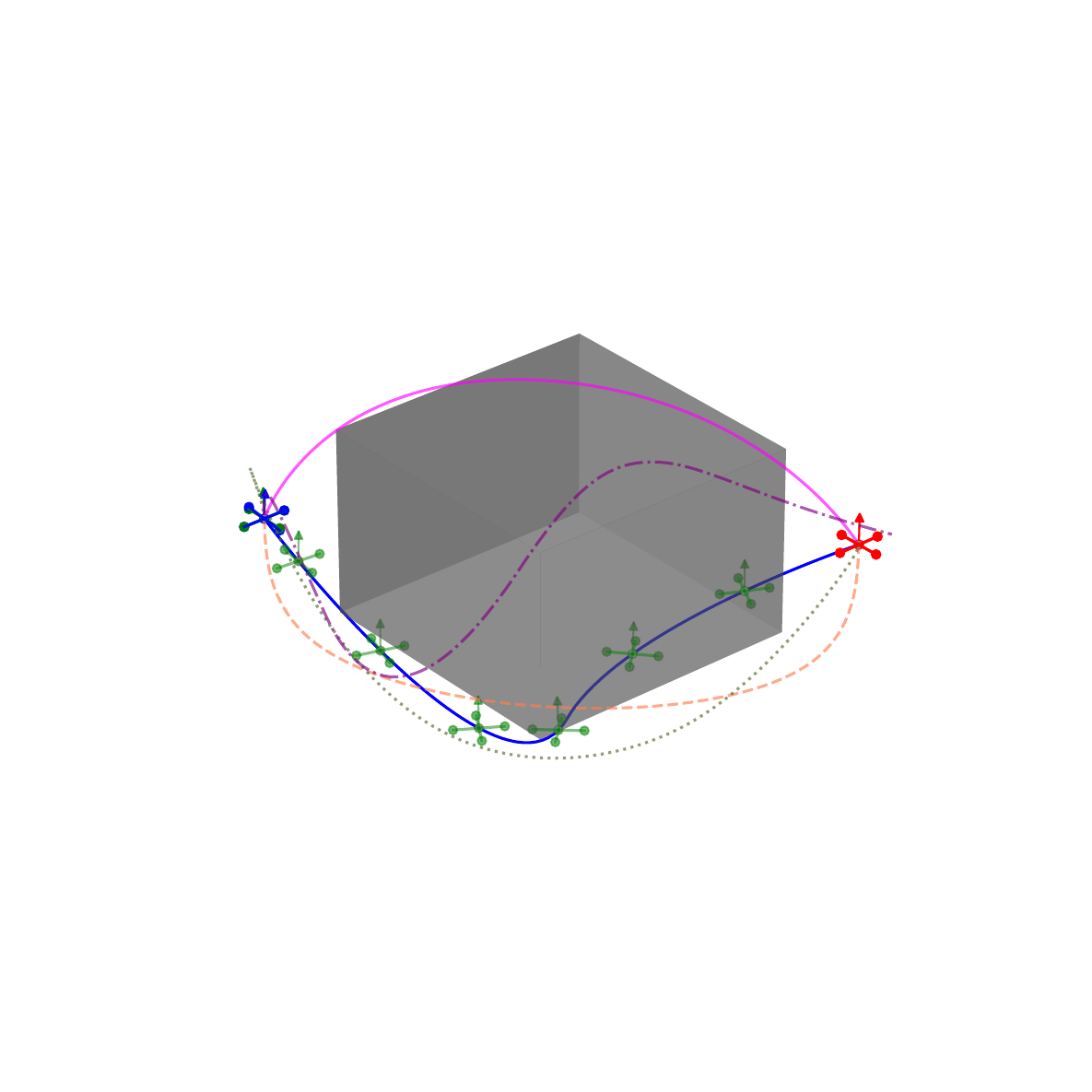}%
        \caption{3D quadrotor: \emph{Window} (left) \& \emph{Obstacle} (right).}
        \label{fig:quad3d_viz}
\end{subfigure}%
\caption{\REVVV{Visualization of our trajectories with DynoBench~\cite{ortizharo2025iDbAstar} benchmarking problems: the trajectories from our kinodynamic \REV{\FLASKRRTC} and our \REV{BVP-augmented \mbox{\FLASKSST}} are plotted as blue and magenta solid curves, respectively, while those from the iDb-A* and SST* baselines are shown as orange dashed and green dotted curves, respectively, \REV{and the Kino-PAX trajectories are plotted as purple dash-dotted curves}.}}
\label{fig:dynobench_viz}
\end{figure}

\begin{table*}[t]
    \caption{Planning performance with DynoBench benchmarks~\cite{ortizharo2025iDbAstar}. \REVVV{$^{\dagger}$For the anytime planners \mbox{iDb-A*}~\cite{ortizharo2025iDbAstar},  SST*~\cite{li2016sst}, the \emph{first solution time} and \emph{final trajectory length} within a $180$-second limit are reported.}}
    \label{table:dynobench_results}
    \centering
\begin{tabular}{ccrrrrrrrr|rrrrrr} 
		\multirow{5}{*}{Robot} & \multirow{5}{*}{Problem} & \multicolumn{2}{c}{\REVV{\texttt{FLASK-}}} & \multicolumn{2}{c}{\mbox{\REV{\FLASKSST}}} & \multicolumn{2}{c}{\REV{\mbox{\FLASKSST}}} & \multicolumn{2}{c}{\REV{\FLASKRRT}} & \multicolumn{2}{c}{iDb-A*\REVVV{$^{\dagger}$}} & \multicolumn{2}{c}{Orig. SST*\REVVV{$^{\dagger}$}} & \multicolumn{2}{c}{\REV{Kino-PAX}} \\
        %%%%%%%%%%%%%%%%%%%%%%%%%%%%%%%%%%%%%%%%%%%%%%%%%%%%%%%%
         & & \multicolumn{2}{c}{\mbox{\REVV{\texttt{RRTConnect}}}} & \multicolumn{2}{c}{\REV{(BVP-aug.)}} & \multicolumn{2}{c}{\REV{(SIMD-only)}} & \multicolumn{2}{c}{\REV{(DP-based)}} & \multicolumn{2}{c}{(DynoBench)} & \multicolumn{2}{c}{(DynoBench)} & \multicolumn{2}{c}{(\REV{GPU-based})} \\
         %%%%%%%%%%%%%%%%%%%%%%%%%%%%%%%%%%%%%%%%%%%%%%%%%%%%%%%%%%%
        \cmidrule(lr){3-4} \cmidrule(lr){5-6} \cmidrule(lr){7-8} \cmidrule(lr){9-10} \cmidrule(lr){11-12} \cmidrule(lr){13-14} \cmidrule(lr){15-16}
        &&Traj. &Plan.&Traj. &Plan.& \REV{Traj.} & \REV{Plan.} & \REV{Traj.}  & \REV{Plan.} & Traj.  & Plan.  & Traj.  & Plan. & \REV{Traj.}  & \REV{Plan.} \\
        &&Len.&Time&Len.&Time& \REV{Len.}& \REV{Time} & \REV{Len.} & \REV{Time} & Len. & Time  & Len. & Time & \REV{Len.} & \REV{Time} \\
        &&(m)&(ms)&(m)&(ms)& \REV{(m)}& \REV{(ms)} & \REV{(m)} & \REV{(ms)} & (m) & (ms) & (m) & (ms) & \REV{(m)} & \REV{(ms)} \\
        % &&&&&& & & & &  & \REVVV{(1st sol.) & & (1st sol.) & & \\
		\hline
		\hline
		\multirow{2}{*}{Uni.} & Bugtrap & $12.39$      &      $\BETTER{6.0}$     & $12.51$      &      $48.6$&  $\MATHREV{12.27}$ & $\MATHREV{407}$ & $\MATHREV{12.25}$ & $\MATHREV{565}$ &   $\BETTER{11.51}$       &       $\MATHREVV{375}$  &  $12.38$ &  $\MATHREVV{126}$  & $\MATHREV{13.77}$ & $\MATHREV{5.8}$\\
		 %& Kink & 11.55      &      10.2$ms$      &     11.65       &       0.6s  &  11.47 &  0.4s  \\
         & Wall & $4.82$      &      $\BETTER{0.6}$     & $5.24$      &      $11.0$&   $\MATHREV{5.27}$ & $\MATHREV{30}$ & $\MATHREV{5.61}$ & $\MATHREV{25}$ &  $\BETTER{3.89}$       &       $\MATHREVV{77}$  &  $4.97$ &  $\MATHREVV{67}$ & $\MATHREV{5.28}$ & $\MATHREV{2.6}$\\
		\hline
		Quad.& Bugtrap & $11.46$      &      $\BETTER{2.7}$     & $11.67$      &      $45.0$ &  $\MATHREV{11.46}$ & $\MATHREV{388}$ & $\MATHREV{11.50}$ & $\MATHREV{909}$ &   $9.69$       &       $\MATHREVV{10746}$  &  $\BETTER{9.64}$ &  $\MATHREVV{28667}$  & $\MATHREV{18.59}$ & $\MATHREV{19.7}$\\
		 %& Kink & 11.55      &      10.2$ms$      &     11.65       &       0.6s  &  11.47 &  0.4s  \\
         (2D) & Hole & $5.16$      &      $\BETTER{0.4}$     & $4.93$      &      $270.0$ &   $\MATHREV{5.14}$ & $\MATHREV{1813}$ & $\MATHREV{5.24}$ & $\MATHREV{2133}$ &  $4.62$       &       $\MATHREVV{353}$  &  $\BETTER{4.59}$ &  $\MATHREVV{17602}$ & $\MATHREV{17.82}$ & $\MATHREV{124.9}$\\
         		\hline
		Quad. & Block & $8.08$      &      $\BETTER{0.9}$    & $8.61$      &      $1.4$  &  $\MATHREV{9.56}$ & $\MATHREV{4752}$ & $\MATHREV{8.75}$ & $\MATHREV{7602}$ &   $\BETTER{7.94 }$     &       $\MATHREVV{4573}$  &  $9.32$ &  $\MATHREVV{46289}$ & $\MATHREV{10.36}$ & $\MATHREV{21.4}$ \\
		 %& Kink & 11.55      &      10.2$ms$      &     11.65       &       0.6s  &  11.47 &  0.4s  \\
         (3D)& Window & $5.49$      &      $\BETTER{0.5}$   & $6.28$      &      $22.6$   & $\MATHREV{6.96}$ & $\MATHREV{629}$ & $\MATHREV{7.98}$ & $\MATHREV{2631}$ &    $\BETTER{5.12 }$      &       $\MATHREVV{2667}$  &  $6.72$ &  $\MATHREVV{45798}$ & $\MATHREV{16.41}$ & $\MATHREV{178.6}$\\
	\end{tabular}
  % \end{adjustbox}
\end{table*}

In this section, we will evaluate the effectiveness of our \textbf{\REVV{\FLASK}} \REVV{framework, applied on different planners} with different robot platforms, from low to high dimensional state spaces in both simulation and real experiments. As our method focuses on fast kinodynamic planning on CPUs, the results from our planners and the baselines are reported on an Intel i7-6900K CPU for a fair comparison. As a reference, we also provide \REV{the results from the GPU-based Kino-PAX~\cite{perrault2025kino} planner, which is run on a GeForce RTX 4070 GPU.}

\subsection{Kinodynamic planning for low-\dof systems} \label{subsec:compare_lowdim}
In this section, we compare our approach with other kinodynamic planning algorithms provided by the \emph{DynoBench} benchmark~\cite{ortizharo2025iDbAstar}, which contains multiple low-\dof robots and benchmarking environments. We consider $3$ robot platforms: a \emph{unicycle} (\cref{example:unicycle}), a $2$D \emph{quadrotor}, and a $3$D \emph{quadrotor} (\cref{example:quadrotor}). The \emph{unicycle} robot has the following constraints: maximum linear velocity $v  \leq v_{max} = \qty{1.0}{\meter\per\second}$, maximum angular velocity $\omega \leq \qty{1.5}{\radian\per\s}$. The $3D$ \emph{quadrotor} has the following parameters: mass $m = \qty{1}{\kg}$, inertia matrix $\bfJ~=~\text{diag}([0.1, 0.1, 0.2])\unit{\kg\m\squared}$, maximum linear velocity $\Vert \bfv \Vert \leq \qty{4}{\m\per\s}$, maximum angular velocity $\Vert \bfomega \Vert \leq \qty{8}{\radian\per\s}$, maximum thrust $f = 1.5mg\;\unit{\newton}$, and maximum torque $\bftau~\leq~\qty{2}{\newton\m}$, where $g \approx \qty{9.81}{\m\per\s\squared}$ is the gravitational acceleration.
The $2$D \emph{quadrotor}'s dynamics is a special case of the $3$D \emph{quadrotor}'s with position $\bfp = [0, y, z]$, yaw angle $\psi = 0$, and the rotation matrix $\bfR~=~\bfR_\phi$ of a pitch angle $\phi$, leading to a simplified flat output $\bfy = [y,z] \in \bbR^2$. The $2$D \emph{quadrotor} is modeled after a Crazyflie~\cite{giernacki2017crazyflie} with mass $m = \qty{0.034}{\kg}$, inertia $J = \qty{1e-4}{\kg\m\squared}$, and arm length $\ell = \qty{0.1}{\m}$. The thrusts $f_1$ and $f_2$, generated from the two onboard motors, satisfy the following constraint: $0 \leq f_1, f_2 \leq 0.65mg\;\unit{\newton}$. The thrust and torque are calculated as: $f = f_1 + f_2$ and $\tau = \ell(f_1 - f_2)$.

For each platform, we consider $2$ DynoBench environments, shown in \cref{fig:dynobench_viz}: \emph{Bugtrap} and \emph{Wall} for the unicycle, \emph{Bugtrap} and \emph{Hole} for the $2$D quadrotor, and \emph{Window} and \emph{Obstacle} for the $3$D quadrotor. To be compatible with \scSIMD-only collision checking, \emph{e.g.},~\cite{thomason2024vamp}, the obstacles in each environment are represented by a set of spheres, while the unicycle and quadrotor's geometries are both modeled as single spheres.

We \REVV{apply our \textbf{\REVV{\FLASK}} framework to obtain} two planners: \REV{\textbf{\FLASKRRTC}} with BVP solutions in \cref{subsubsec:solving_flat_bvp} and \REV{three variants of \textbf{\REV{\FLASKSST}}} with dynamics propagation in \cref{subsubsec:flat_dyn_prop}.
Our kinodynamic \mbox{\textbf{\FLASKRRTC}} replaces a line segment connecting two nodes in an RRT tree by our closed-form BVP solution (local ``flat" path) in the \REV{\textsc{FlaskExtend}} subroutine (\cref{alg:steering}), using \cref{eq:optimal_traj} with an optimal $T^*$ from \cref{eq:optimal_time} and $\rho = 1$. 
\REV{For \REVV{\textbf{\mbox{\FLASKSST}}}, our first variant, \textbf{BVP-augmented \mbox{\FLASKSST}}, uses \cref{eq:poly_traj_sampled_control} to propagate robot dynamics in closed forms and takes advantage of our local ``flat" path in \eqref{eq:optimal_traj} to check whether there is a direct connection from an existing node to the goal. Meanwhile, our second variant is a \textbf{SIMD-only \mbox{\FLASKSST}} that keeps expanding the tree using vectorized collision checking via SIMD, \emph{without} leveraging the BVP solutions. To highlight the ``wandering" effect in propagation-based planners, our third variant disables best-state selection and tree pruning in SST*, effectively leading to a \textbf{DP-based \FLASKRRT}, where dynamics propagation (DP) with a sampled control is used to extend the RRT tree. 
The resulting trajectories are simplified using \cref{alg:traj_simplification}.} We compare our approach with \REV{three baseline planners: \mbox{\textbf{iDb-A*}}~\cite{ortizharo2025iDbAstar},  \textbf{SST*}~\cite{li2016sst} from OMPL~\cite{sucan2012the-open-motion-planning-library}, \REV{both without parallelized collision checking}, and a GPU-based coarse-grained parallelized kinodynamic planner, \textbf{Kino-PAX}~\cite{perrault2025kino}}. We maintain the aforementioned robot state and control constraints in our planners, enforced via our parallelized collision checking algorithm in \cref{alg:collision_checking_poly}, and in the baselines for a fair comparison.

\begin{figure*}[t]
\centering
\begin{subfigure}[t]{0.22\textwidth}
    \includegraphics[width=\textwidth]{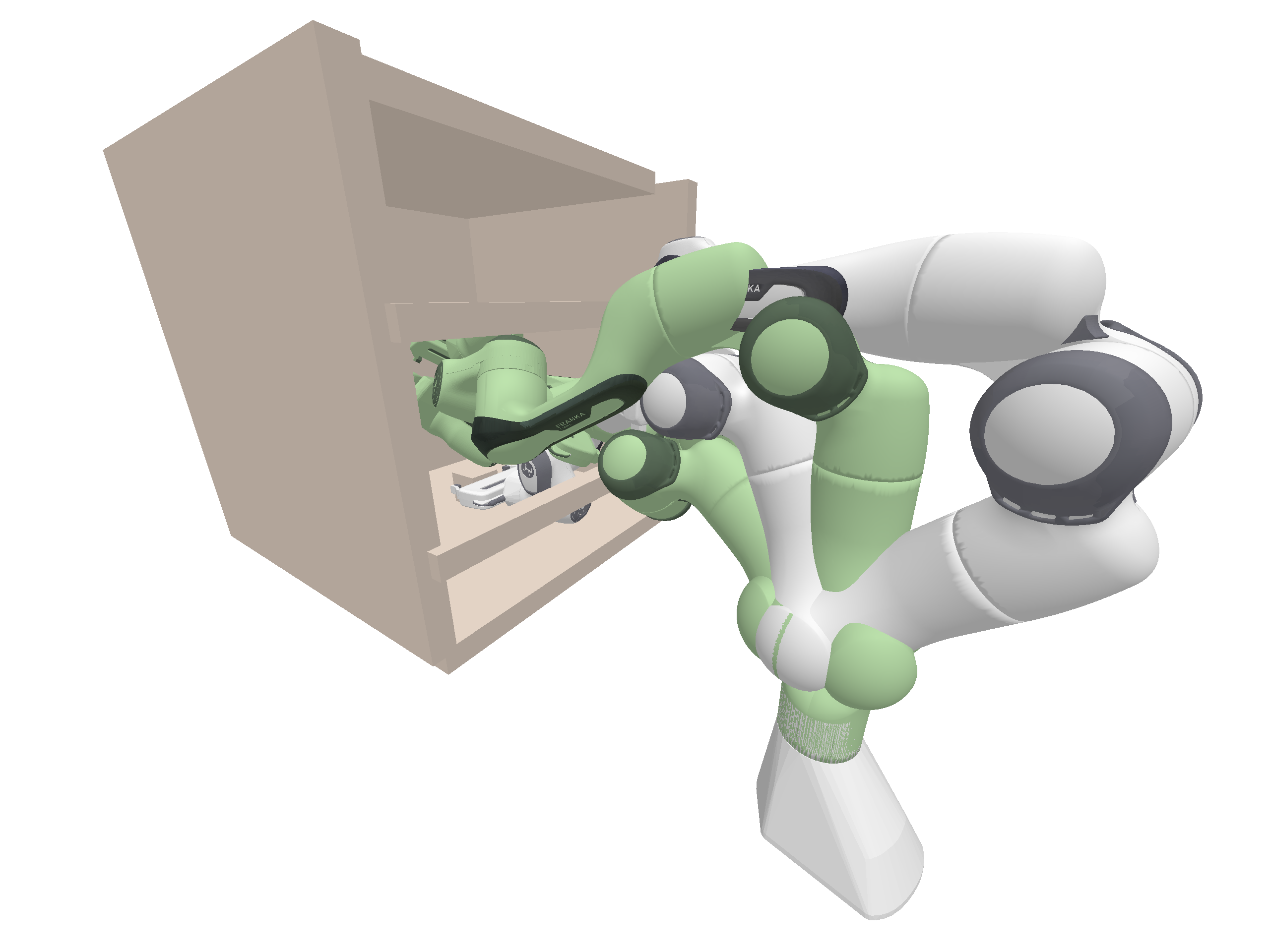}
    %\caption{}
\end{subfigure}
\hfill
\begin{subfigure}[t]{0.22\textwidth}
    \includegraphics[width=\textwidth]{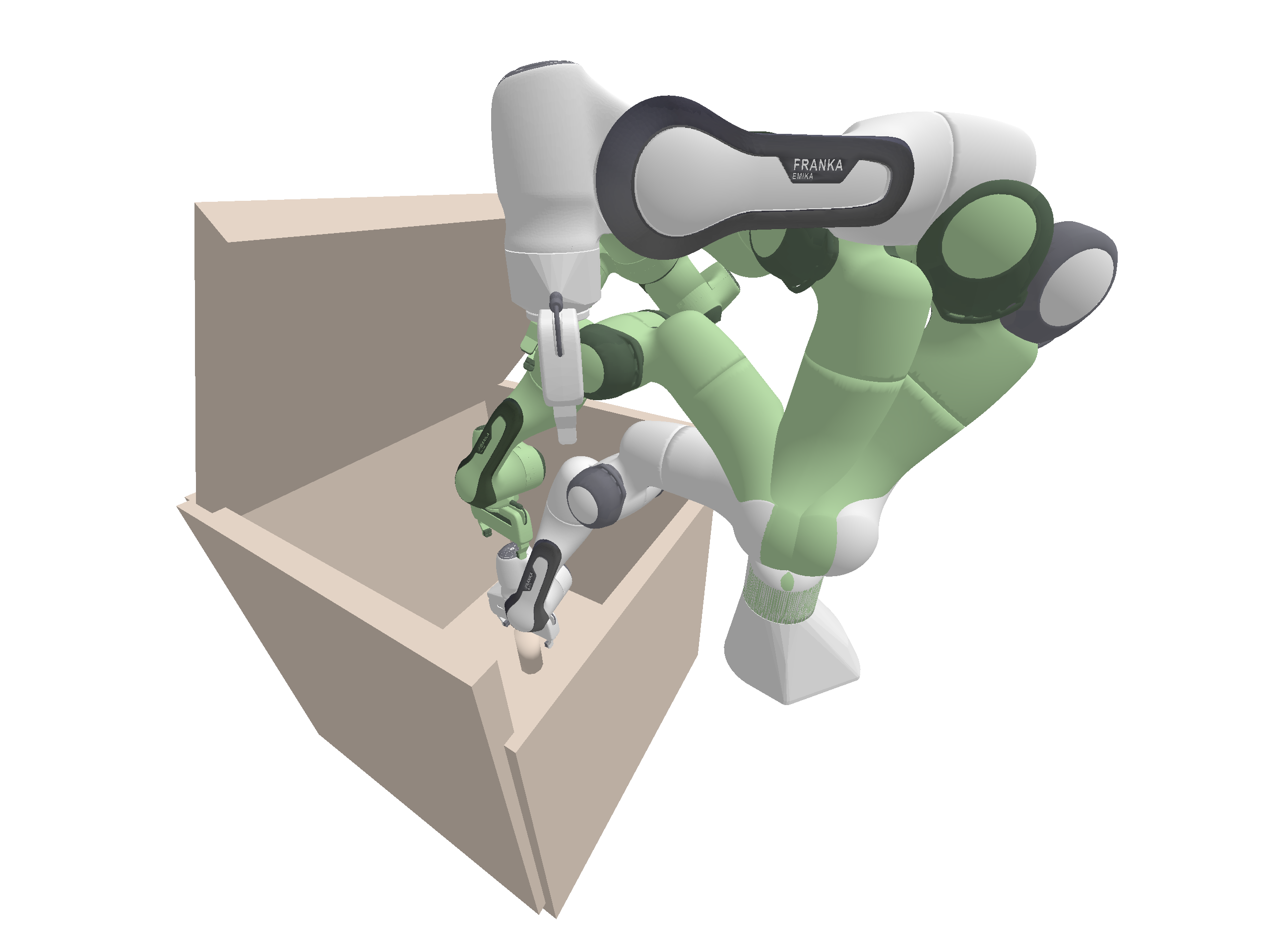}
    %\caption{}
\end{subfigure}
\hfill
\begin{subfigure}[t]{0.22\textwidth}
    \includegraphics[width=\textwidth]{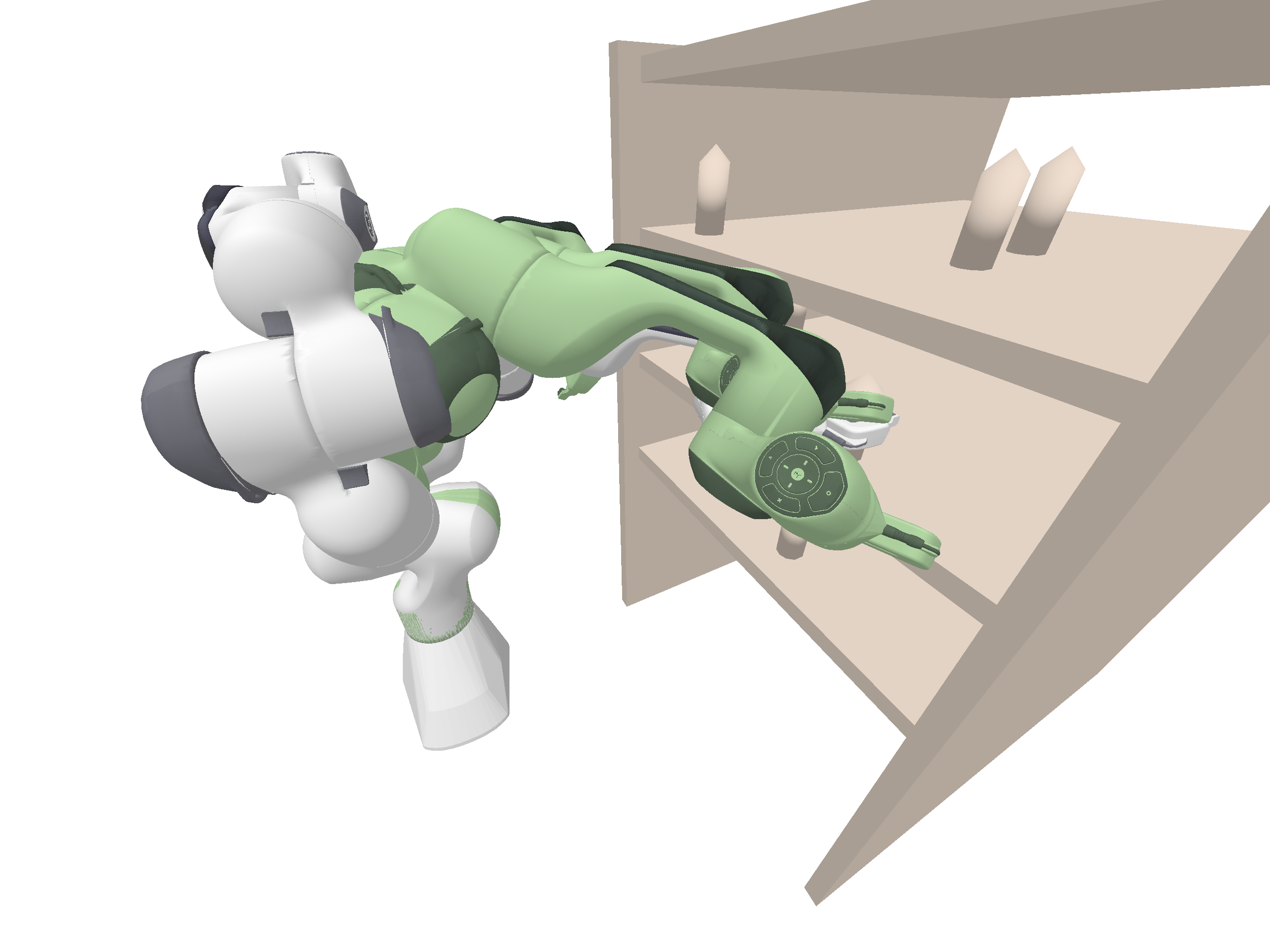}
    %\caption{}
\end{subfigure}
\hfill
\begin{subfigure}[t]{0.22\textwidth}
    \includegraphics[width=\textwidth]{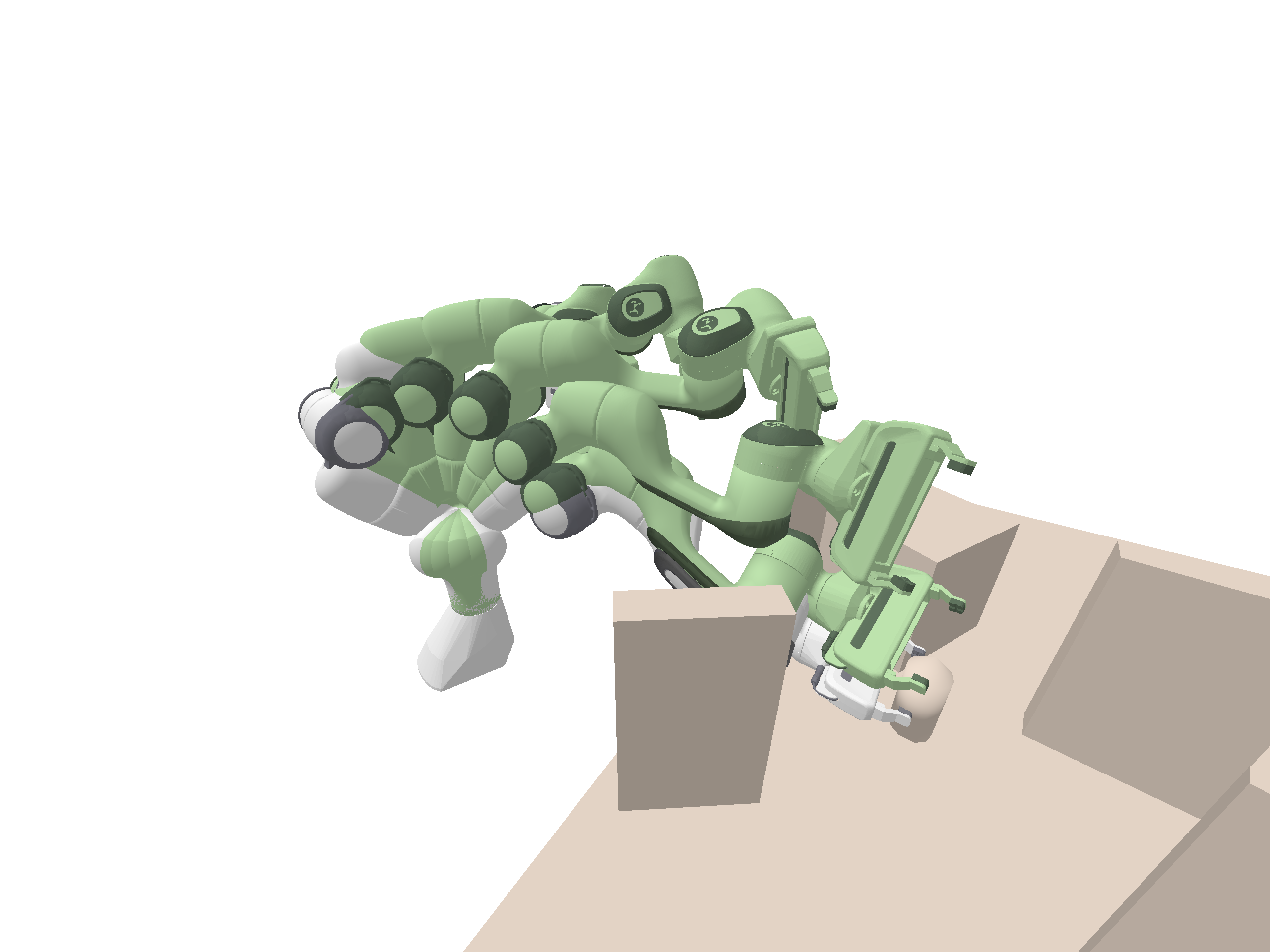}
    %\caption{}
\end{subfigure}
\\
\begin{subfigure}[t]{0.22\textwidth}
    \includegraphics[width=\textwidth]{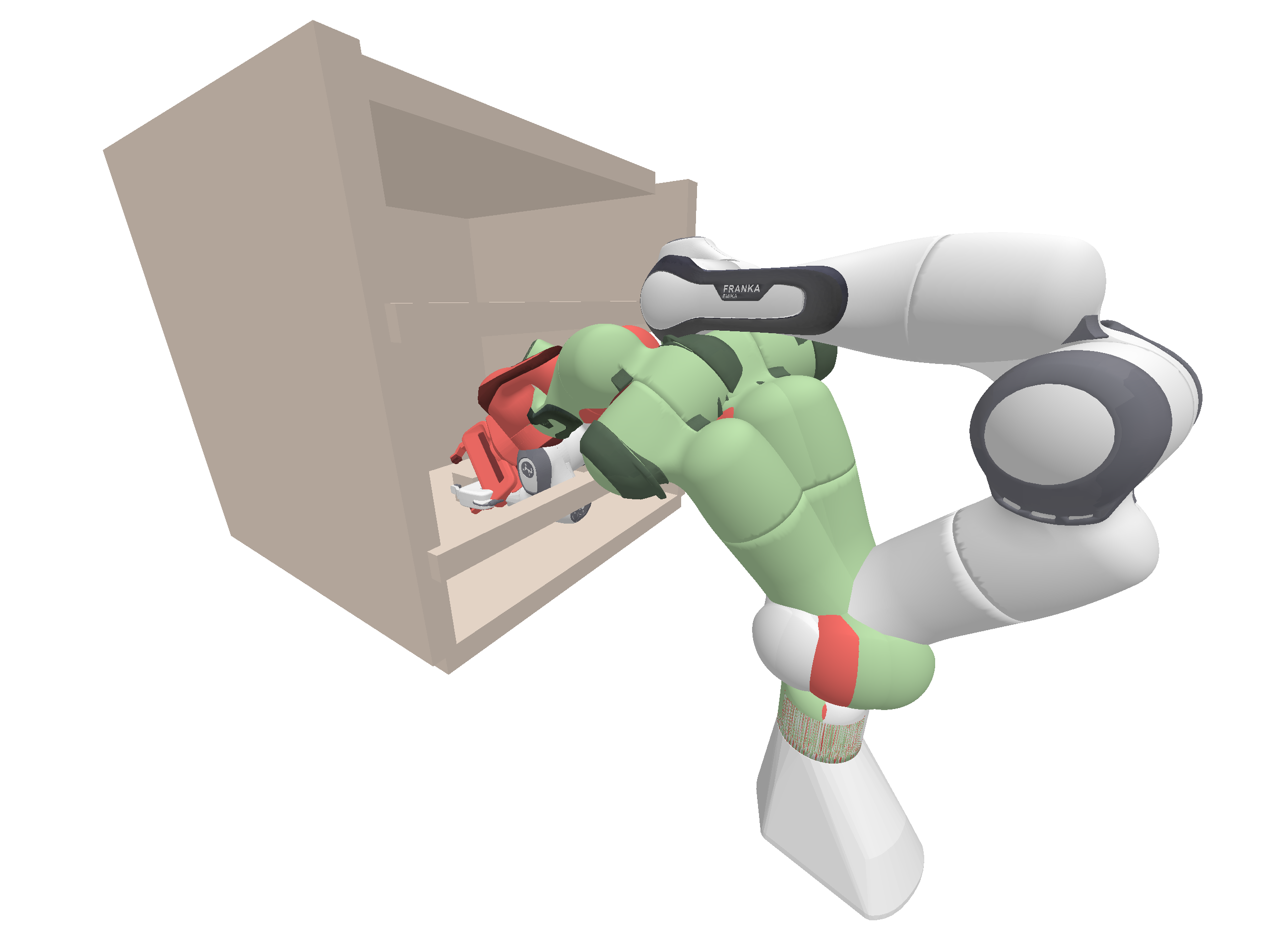}
    \caption{}
\end{subfigure}
\hfill
\begin{subfigure}[t]{0.22\textwidth}
    \includegraphics[width=\textwidth]{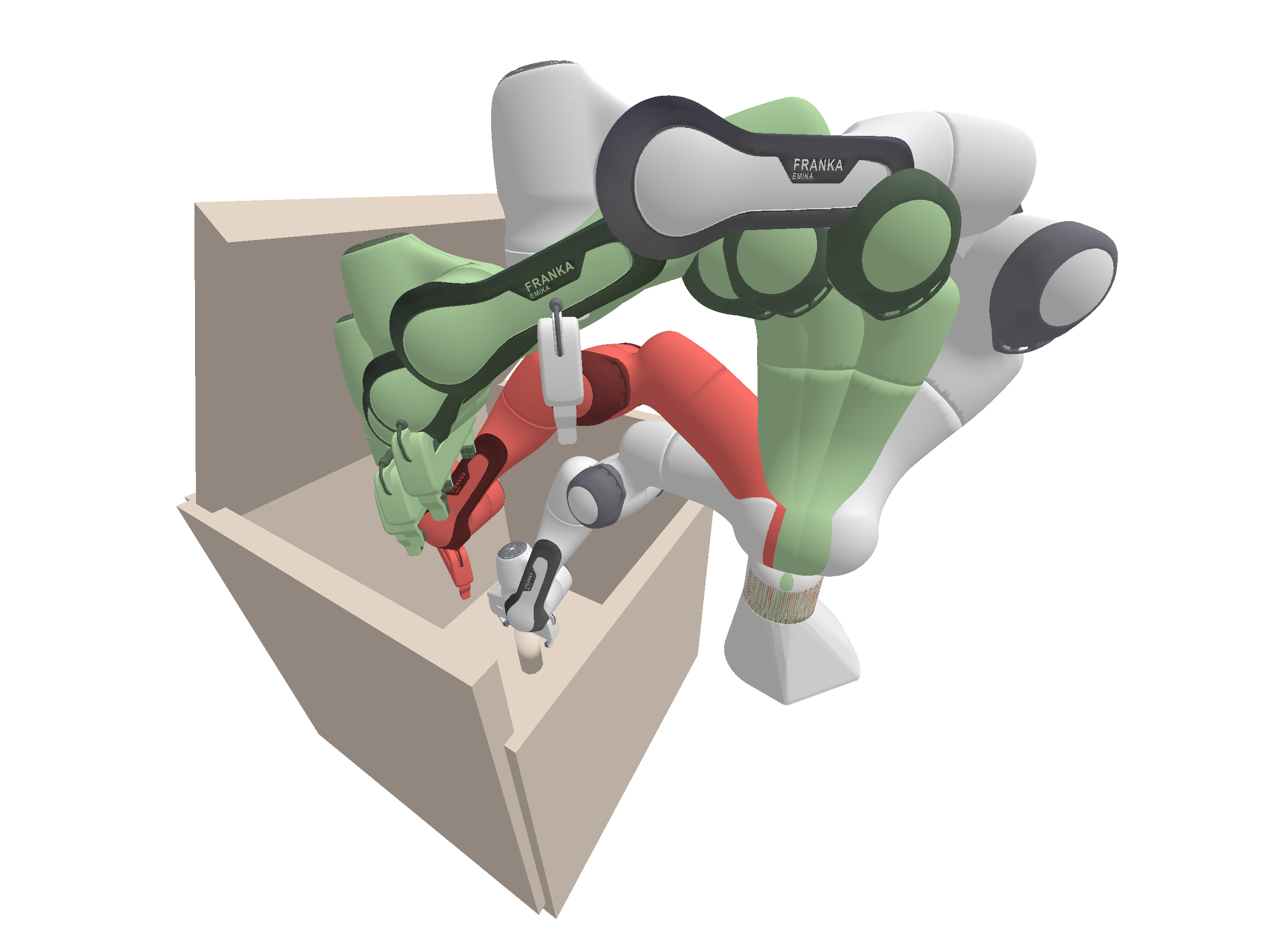}
    \caption{}
\end{subfigure}
\hfill
\begin{subfigure}[t]{0.22\textwidth}
    \includegraphics[width=\textwidth]{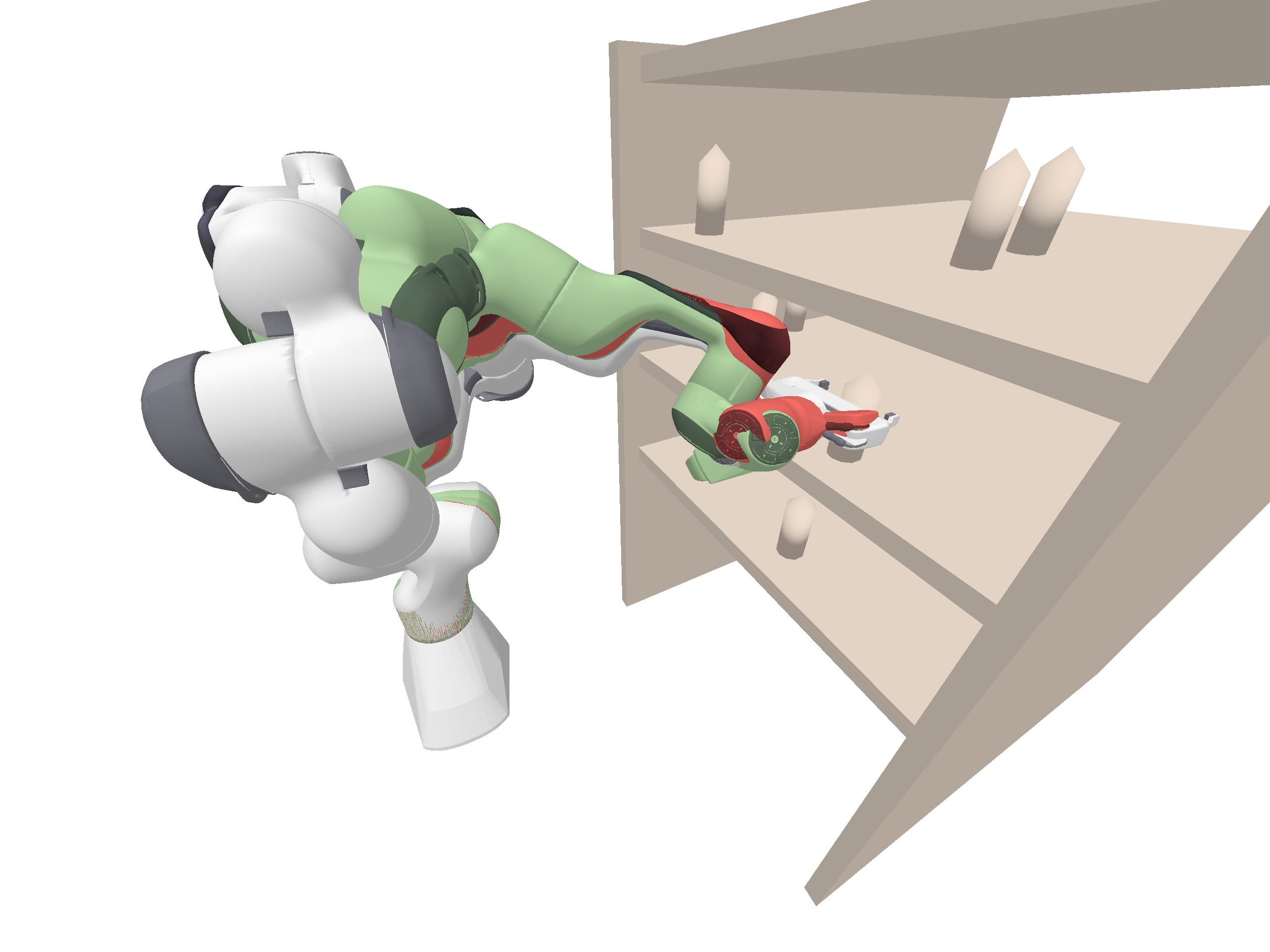}
    \caption{}
\end{subfigure}
\hfill
\begin{subfigure}[t]{0.22\textwidth}
    \includegraphics[width=\textwidth]{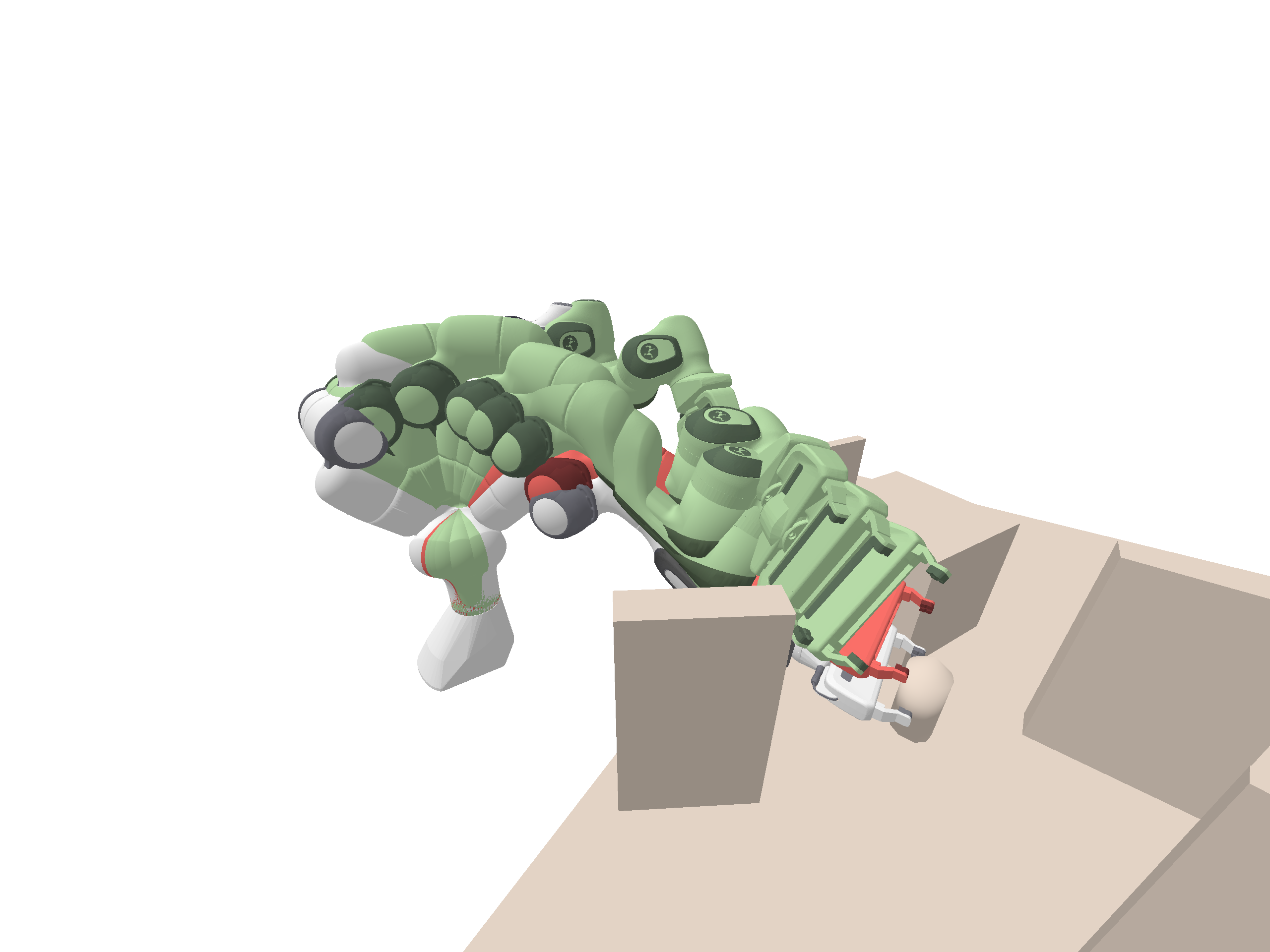}
    \caption{}
\end{subfigure}
\caption{Visualization of trajectories generated by our \mbox{\FLASKRRTC} (top) and by the baseline planner, geometric \mbox{RRTConnect} + TOPP-RA (bottom), for a Franka Emika Panda robot in: (a) \emph{cage}, (b) \emph{box}, (c) \emph{bookshelf thin}, and (d) \emph{table pick} environments.
For each environment, the bottom image shows a trajectory from the baseline with collided configurations in red, while the top image demonstrates that our trajectory is collision-free.}
\label{fig:panda_exp}
\end{figure*}

As each method has its own definition of trajectory costs, \REV{it is challenging to compare the quality of the trajectories. However, most cost functions describe a combination of trajectory duration, length, control efforts, and even velocity, acceleration, or higher-order terms, all of which intuitively and indirectly prioritize shorter trajectories. Therefore,} we choose the trajectory length and planning time as common metrics for comparison, shown in \cref{table:dynobench_results}. \REV{For anytime planners such as \mbox{iDb-A*}~\cite{ortizharo2025iDbAstar},  SST*~\cite{li2016sst}, we report the time of the first solution and the length of the final solution within a $180$-second limit}. Qualitatively, \cref{fig:dynobench_viz} plots the trajectories generated by our \mbox{\FLASKRRTC} (blue solid curves), our \REV{BVP-augmented} \mbox{\FLASKSST} (magenta solid curves), iDb-A* (orange dashed curves), SST* (green dotted curves), \REV{and Kino-PAX (dash-dotted purple curves)}. 
\subsubsection{Comparison to the baselines}
The baselines iDb-A* and SST* generate shorter trajectories but often take several seconds to plan, especially for complicated robot dynamics such as quadrotors. \REV{Kino-PAX~\cite{perrault2025kino} achieves planning times in the millisecond range by parallelizing the tree expansion \emph{on GPUs} but returns much longer trajectories, possibly due to the ``wandering" effect" from dynamics propagation with sampled suboptimal control. While it achieves similar planning times for simple dynamics such as unicycles, it is $\sim$8--300 times slower than our \FLASKRRTC for more complicated systems such as quadrotors.} 
For all $3$ robot platforms, our \mbox{\FLASKRRTC} achieves significantly faster planning times than the baselines, in just a few milliseconds, offering the capability of real-time kinodynamic planning. Our \REV{BVP-augmented} \mbox{\FLASKSST} variant is slightly slower than our \mbox{\FLASKRRTC} since it is restricted to a constant control input for each local ``flat" path instead of directly calculating a time-parameterized optimal control value. However, it is still significantly faster than the baselines in most cases. Meanwhile, our methods maintain comparable trajectory lengths, within 10--20\% of the shortest length across the benchmarking problems. \REV{Intuitively, this is because the cost function \eqref{eq:flat_cost} prioritizes shorter trajectories, which leads to lower total control effort and trajectory time}. While our generated trajectories can be further optimized by combining with other optimization-based planners, \emph{e.g.}, as initial solutions, this is out of the scope of our paper and left for future work.

\begin{table*}[t]
    \caption{Planning performance with a Franka Emika Panda robot over $100$ runs of our \mbox{\FLASKRRTC} with different environments in MotionBenchMaker~\cite{chamzas2022-motion-bench-maker}. For a baseline, we use a collision-free geometric path, provided by a SIMD-parallelized geometric \mbox{RRTConnect}~\cite{thomason2024vamp}, followed by time parameterization (TOPP-RA~\cite{pham2018toppra}). Overall, our kinodynamic \mbox{\FLASKRRTC} generates a dynamically feasible trajectory faster than the baseline in almost all environments. Our trajectory is slightly longer but verified to be collision-free while the baseline's trajectory is colliding in approximately $30\%$ of the cases, due to the time parameterization. Better metrics are shown in bold, while the collision risk is \REVVV{underlined} in red if positive.}
    \label{table:panda_results}
    \centering
% {\rowcolors{5}{gray!10}{gray!25}
\begin{tabular}{ll|rrrrrr|rrrrrr} 
		\multirow{2}{*}{Env.} & \multirow{2}{*}{Metrics} & \multicolumn{6}{c}{Our kinodynamic \mbox{\FLASKRRTC}} & \multicolumn{6}{c}{Geometric \mbox{RRTConnect} (VAMP) + TOPP-RA} \\
        \cmidrule(lr){3-8} \cmidrule(lr){9-14}
        &&Mean&SD & $25\%$&$50\%$&$75\%$ &$95\%$& Mean&SD &$25\%$&$50\%$&$75\%$ &$95\%$\\
		\hline
		\hline
		\multirow{3}{*}{\vtop{\hbox{\strut Bookshelf}\hbox{\strut (thin)}}} & Total planning time (ms) & $\BETTER{0.61}$  & $1.5$     &      $\BETTER{0.10}$     & $\BETTER{0.20}$      &      $\BETTER{0.46}$&     $\BETTER{3.50}$       &       $3.96$    & $1.02$   &      $3.14$     & $3.70$      &      $4.60$&     $5.47$  \\
         & \REV{\hspace{1em} $\triangleright$\mbox{RRTConnect} time}& $\MATHREV{0.58}$    & $\MATHREV{1.25}$   &      $\MATHREV{0.07}$     & $\MATHREV{0.16}$      &      $\MATHREV{0.42}$&     $\MATHREV{3.45}$      &       $\MATHREV{\BETTER{0.16}}$     & $\MATHREV{0.59}$  &      $\MATHREV{\BETTER{0.05}}$     & $\MATHREV{\BETTER{0.07}}$      &      $\MATHREV{\BETTER{0.11}}$&     $\MATHREV{\BETTER{0.27}}$ \\
         & \REV{\hspace{1em} $\triangleright$Post./TOPP-RA time}& $\MATHREV{\BETTER{0.03}}$   & $\MATHREV{0.02}$   & $\MATHREV{\BETTER{0.02}}$  &      $\MATHREV{\BETTER{0.03}}$     & $\MATHREV{\BETTER{0.04}}$      &      $\MATHREV{\BETTER{0.08}}$&     $\MATHREV{3.80}$      &       $\MATHREV{0.84}$      &      $\MATHREV{3.04}$     & $\MATHREV{3.62}$      &      $\MATHREV{4.43}$&     $\MATHREV{5.28}$ \\
         &Final traj. length (rad)& $5.25$   & $1.08$  & $4.53$   &      $5.15$     & $5.90$      &      $7.63$&     $\BETTER{4.66}$      &       $0.79$      &      $\BETTER{4.15}$     & $\BETTER{4.68}$      &      $\BETTER{5.05}$&     $\BETTER{5.92}$ \\
         & Collision risk& $\BETTER{0}$    & $0$ & $\BETTER{0}$   &      $\BETTER{0}$     & $\BETTER{0}$      &      $\BETTER{0}$&     $\RED{0.32}$     &       $0.47$      &      $0$     & $0$      &      $\RED{1}$ &     $\RED{1}$\\
         %%%%%%%%%%%%%%%%%%%%%%%%%%%%%
		\hline
		\multirow{3}{*}{\vtop{\hbox{\strut Bookshelf}\hbox{\strut (tall)}}} & Total planning time (ms) & $\BETTER{1.09}$  & $3.55$     &      $\BETTER{0.07}$     & $\BETTER{0.15}$      &      $\BETTER{0.36}$&     $\BETTER{5.2}$       &       $3.91$    & $0.99$   &      $3.21$     & $3.57$      &      $4.62$&     $5.72$  \\
         & \REV{\hspace{1em} $\triangleright$\mbox{RRTConnect} time}& $\MATHREV{1.06}$    & $\MATHREV{3.55}$   &      $\MATHREV{0.05}$     & $\MATHREV{0.12}$      &      $\MATHREV{0.34}$&     $\MATHREV{5.19}$      &       $\MATHREV{\BETTER{0.14}}$     & $\MATHREV{0.29}$  &      $\MATHREV{\BETTER{0.04}}$     & $\MATHREV{\BETTER{0.06}}$      &      $\MATHREV{\BETTER{0.11}}$&     $\MATHREV{\BETTER{0.47}}$ \\
         & \REV{\hspace{1em} $\triangleright$Post./TOPP-RA time}& $\MATHREV{\BETTER{0.03}}$   & $\MATHREV{0.02}$   & $\MATHREV{\BETTER{0.02}}$  &      $\MATHREV{\BETTER{0.03}}$     & $\MATHREV{\BETTER{0.04}}$      &      $\MATHREV{\BETTER{0.06}}$&     $\MATHREV{3.77}$      &       $\MATHREV{0.96}$      &      $\MATHREV{3.07}$     & $\MATHREV{3.38}$      &      $\MATHREV{4.44}$&     $\MATHREV{5.43}$ \\
         &Final traj. length (rad)& $5.30$   & $1.01$  & $4.56$   &      $5.19$     & $5.96$      &      $7.01$&     $\BETTER{4.86}$      &       $0.69$      &      $\BETTER{4.34}$     & $\BETTER{4.85}$      &      $\BETTER{5.28}$&     $\BETTER{6.08}$ \\
         & Collision risk& $\BETTER{0}$    & $0$ & $\BETTER{0}$   &      $\BETTER{0}$     & $\BETTER{0}$      &      $\BETTER{0}$&     $\RED{0.34}$      &       $0.48$      &      $0$     & $0$      &      $\RED{1}$&     $\RED{1}$ \\
         %%%%%%%%%%%%%%%%%%%%%%%%%%%%%
		\hline
		\multirow{3}{*}{\vtop{\hbox{\strut Bookshelf}\hbox{\strut (small)}}} & Total planning time (ms) & $7.80$  & $35.63$     &      $\BETTER{0.08}$     & $\BETTER{0.17}$      &      $\BETTER{0.64}$&     $38.49$       &       $\BETTER{4.11}$    & $1.04$   &      $3.19$     & $3.85$      &      $4.93$&     $5.74$  \\
         & \REV{\hspace{1em} $\triangleright$\mbox{RRTConnect} time}& $\MATHREV{7.77}$    & $\MATHREV{35.63}$   &      $\MATHREV{0.06}$     & $\MATHREV{0.14}$      &      $\MATHREV{0.57}$ &     $\MATHREV{38.45}$      &       $\MATHREV{\BETTER{0.14}}$     & $\MATHREV{0.27}$  &      $\MATHREV{\BETTER{0.04}}$     & $\MATHREV{\BETTER{0.06}}$      &      $\MATHREV{\BETTER{0.12}}$&     $\MATHREV{\BETTER{0.45}}$\\
         & \REV{\hspace{1em} $\triangleright$Post./TOPP-RA time}& $\MATHREV{\BETTER{0.03}}$  & $\MATHREV{0.15}$  & $\MATHREV{\BETTER{0.02}}$ &      $\MATHREV{\BETTER{0.03}}$    & $\MATHREV{\BETTER{0.04}}$     &      $\MATHREV{\BETTER{0.06}}$&     $\MATHREV{3.97}$     &       $\MATHREV{0.95}$     &      $\MATHREV{3.16}$    & $\MATHREV{3.77}$     &      $\MATHREV{4.66}$&     $\MATHREV{5.30}$\\
         &Final traj. length (rad)& $5.50$   & $1.31$    &      $4.69$     & $5.31$      &      $6.17$ & $8.23$ &     $\BETTER{4.83}$      &       $0.91$      &      $\BETTER{4.34}$     & $\BETTER{4.83}$      &      $\BETTER{5.32}$ &     $\BETTER{6.05}$ \\
         & Collision risk& $\BETTER{0}$    & $0$ & $\BETTER{0}$   &      $\BETTER{0}$     & $\BETTER{0}$      &      $\BETTER{0}$ &       $\RED{0.32}$      &       $0.47$      &      $0$     & $0$      &      $\RED{1}$ &     $\RED{1}$ \\
         %%%%%%%%%%%%%%%%%%%%%%%%%%%%%
		\hline
		\multirow{3}{*}{Cage} & Total planning time (ms) & $12.96$  & $27.54$     &      $\BETTER{1.83}$     & $6.35$      &      $14.05$ &     $41.32$       &       $\BETTER{5.81}$    & $1.80$   &      $4.36$     & $\BETTER{5.32}$      &      $\BETTER{7.21}$ &     $\BETTER{8.92}$  \\
         & \REV{\hspace{1em} $\triangleright$\mbox{RRTConnect} time}& $\MATHREV{12.89}$   & $\MATHREV{27.53}$  &      $\MATHREV{1.78}$    & $\MATHREV{6.28}$     &      $\MATHREV{13.97}$&     $\MATHREV{41.13}$     &       $\MATHREV{\BETTER{0.60}}$    & $\MATHREV{0.45}$ &      $\MATHREV{\BETTER{0.27}}$    & $\MATHREV{\BETTER{0.50}}$     &      $\MATHREV{\BETTER{0.88}}$&     $\MATHREV{\BETTER{1.66}}$\\
         & \REV{\hspace{1em} $\triangleright$Post./TOPP-RA time}& $\MATHREV{\BETTER{0.07}}$  & $\MATHREV{\BETTER{0.04}}$  & $\MATHREV{\BETTER{0.05}}$ &      $\MATHREV{\BETTER{0.06}}$    & $\MATHREV{\BETTER{0.09}}$     &      $\MATHREV{\BETTER{0.14}}$&     $\MATHREV{5.21}$     &       $\MATHREV{1.62}$     &      $\MATHREV{3.76}$    & $\MATHREV{4.89}$     &      $\MATHREV{6.53}$&     $\MATHREV{7.85}$\\
         &Final traj. length (rad)& $10.07$   & $3.61$  & $7.17$   &      $9.27$     & $12.30$      &      $16.60$ &     $\BETTER{7.09}$      &       $1.67$      &      $\BETTER{5.55}$     & $\BETTER{6.54}$      &      $\BETTER{8.52}$ &     $\BETTER{9.63}$ \\
         & Collision risk& $\BETTER{0}$    & $0$ & $\BETTER{0}$   &      $\BETTER{0}$     & $\BETTER{0}$      &      $\BETTER{0}$ &      $\RED{0.29}$      &       $0.46$      &      $0$     & $0$      &      $\RED{1}$ &     $\RED{1}$ \\
         %%%%%%%%%%%%%%%%%%%%%%%%%%%%%
		\hline
		\multirow{3}{*}{Box} & Total planning time (ms) & $\BETTER{0.86}$  & $4.25$     &      $\BETTER{0.08}$     & $\BETTER{0.25}$      &      $\BETTER{0.59}$ &     $\BETTER{1.40}$       &       $4.22$    & $0.93$   &      $3.35$     & $4.25$      &      $4.97$ &     $5.64$  \\
         & \REV{\hspace{1em} $\triangleright$\mbox{RRTConnect} time}& $\MATHREV{0.81}$   & $\MATHREV{4.25}$  &      $\MATHREV{0.05}$    & $\MATHREV{0.20}$     &      $\MATHREV{0.55}$&     $\MATHREV{1.35}$     &       $\MATHREV{\BETTER{0.14}}$    & $\MATHREV{0.14}$ &      $\MATHREV{\BETTER{0.07}}$    & $\MATHREV{\BETTER{0.12}}$     &      $\MATHREV{\BETTER{0.17}}$&     $\MATHREV{\BETTER{0.26}}$\\
         & \REV{\hspace{1em} $\triangleright$Post./TOPP-RA time}& $\MATHREV{\BETTER{0.04}}$  & $\MATHREV{0.02}$  & $\MATHREV{\BETTER{0.03}}$ &      $\MATHREV{\BETTER{0.04}}$    & $\MATHREV{\BETTER{0.05}}$     &      $\MATHREV{\BETTER{0.07}}$&     $\MATHREV{4.07}$     &       $\MATHREV{0.91}$     &      $\MATHREV{3.24}$    & $\MATHREV{4.07}$     &      $\MATHREV{4.84}$&     $\MATHREV{5.38}$\\
         &Final traj. length (rad)& $5.64$   & $1.20$  & $4.79$   &      $5.26$     & $6.14$      &      $8.10$ &     $\BETTER{4.68}$      &       $0.82$      &      $\BETTER{4.07}$     & $\BETTER{4.45}$      &      $\BETTER{5.02}$ &     $\BETTER{6.35}$ \\
         & Collision risk& $\BETTER{0}$    & $0$ & $\BETTER{0}$   &      $\BETTER{0}$     & $\BETTER{0}$      &      $\BETTER{0}$ &      $\RED{0.13}$      &       $0.34$      &      $0$     & $0$      &      $0$ &     $\RED{1}$ \\
         %%%%%%%%%%%%%%%%%%%%%%%%%%%%%
		\hline
		\multirow{3}{*}{\vtop{\hbox{\strut Table}\hbox{\strut under}\hbox{\strut pick}}} & Total planning time (ms) & $\BETTER{0.34}$  & $0.45$     &      $\BETTER{0.17}$     & $\BETTER{0.25}$      &      $\BETTER{0.35}$ &     $\BETTER{0.63}$       &       $5.13$    & $0.99$   &      $4.41$     & $5.10$      &      $5.89$ &     $6.45$  \\
         & \REV{\hspace{1em} $\triangleright$\mbox{RRTConnect} time}& $\MATHREV{0.28}$   & $\MATHREV{0.45}$  &      $\MATHREV{0.11}$    & $\MATHREV{0.19}$     &      $\MATHREV{0.29}$&     $\MATHREV{0.52}$     &       $\MATHREV{\BETTER{0.09}}$    & $\MATHREV{0.09}$ &      $\MATHREV{\BETTER{0.06}}$    & $\MATHREV{\BETTER{0.07}}$     &      $\MATHREV{\BETTER{0.10}}$&     $\MATHREV{\BETTER{0.15}}$\\
         & \REV{\hspace{1em} $\triangleright$Post./TOPP-RA time}& $\MATHREV{\BETTER{0.06}}$  & $\MATHREV{0.03}$  & $\MATHREV{\BETTER{0.04}}$ &      $\MATHREV{\BETTER{0.05}}$    & $\MATHREV{\BETTER{0.07}}$     &      $\MATHREV{\BETTER{0.11}}$&     $\MATHREV{5.04}$     &       $\MATHREV{0.98}$     &      $\MATHREV{4.34}$    & $\MATHREV{5.03}$     &      $\MATHREV{5.71}$&     $\MATHREV{6.36}$\\
         &Final traj. length (rad)& $8.46$   & $1.89$  & $7.07$   &      $8.48$     & $9.55$      &      $11.82$ &     $\BETTER{6.80}$      &       $1.39$      &      $\BETTER{5.61}$      &      $\BETTER{6.75}$ &   $\BETTER{7.71}$     &   $\BETTER{9.01}$ \\
         & Collision risk& $\BETTER{0}$    & $0$ & $\BETTER{0}$   &      $\BETTER{0} $    & $\BETTER{0}$      &      $\BETTER{0}$ &      $\RED{0.40}$      &       $0.49$      &      $0$     & $0$      &      $\RED{1}$ &     $\RED{1}$ \\
         %%%%%%%%%%%%%%%%%%%%%%%%%%%%%
		\hline
		\multirow{3}{*}{\vtop{\hbox{\strut Table}\hbox{\strut pick}}} & Total planning time (ms) & $\BETTER{0.28}$    & $0.91$   &      $\BETTER{0.08}$     & $\BETTER{0.12}$      &      $\BETTER{0.18}$ &     $\BETTER{0.57}$ & $3.78$    & $0.76$   &      $3.08$     & $3.60$      &      $4.50$ &     $5.00$  \\
         & \REV{\hspace{1em} $\triangleright$\mbox{RRTConnect} time}& $\MATHREV{0.26}$   & $\MATHREV{0.91}$  &      $\MATHREV{0.05}$    & $\MATHREV{0.09}$     &      $\MATHREV{0.15}$&     $\MATHREV{0.55}$     &       $\MATHREV{\BETTER{0.05}}$    & $\MATHREV{0.03}$ &      $\MATHREV{\BETTER{0.04}}$    & $\MATHREV{\BETTER{0.05}}$     &      $\MATHREV{\BETTER{0.06}}$&     $\MATHREV{\BETTER{0.10}}$\\
         & \REV{\hspace{1em} $\triangleright$Post./TOPP-RA time}& $\MATHREV{\BETTER{0.03}}$  & $\MATHREV{0.01}$  & $\MATHREV{\BETTER{0.02}}$ &      $\MATHREV{\BETTER{0.03}}$    & $\MATHREV{\BETTER{0.03}}$     &      $\MATHREV{\BETTER{0.05}}$&     $\MATHREV{3.73}$     &       $\MATHREV{0.75}$     &      $\MATHREV{3.04}$    & $\MATHREV{3.53}$     &      $\MATHREV{4.44}$&     $\MATHREV{4.95}$\\
         &Final traj. length (rad)& $5.24$   & $0.91$  & $4.57$   &      $5.10$     & $5.69$      &      $6.51$ &     $\BETTER{4.76}$      &       $0.71$      &      $\BETTER{4.26}$      &      $\BETTER{4.53}$ &   $\BETTER{5.18}$     &   $\BETTER{6.27}$ \\
         & Collision risk& $\BETTER{0}$    & $0$ & $\BETTER{0}$   &      $\BETTER{0}$     & $\BETTER{0}$      &      $\BETTER{0}$ &      $\RED{0.42}$      &       $0.49$      &      $0$     & $0$      &      $\RED{1}$ &     $\RED{1}$ \\
         %%%%%%%%%%%%%%%%%%%%%%%%%%%%%
		\hline
        \hline
		\multirow{5}{*}{Overall} & Total planning time (ms) & $\BETTER{3.49}$      &      $18.00$     & $\BETTER{0.11}$      &      $\BETTER{0.24}$ &     $\BETTER{0.76}$      &       $14.65$      &      $4.43$     & $1.34$      &      $3.36$ &     $4.19$ &      $5.14$ &     $\BETTER{7.16}$ \\
         & \hspace{1em} $\triangleright$ \mbox{RRTConnect} time& $3.45$      &      $18.00$     & $0.08$      &      $0.2$ &     $0.70$      &       $14.54$      &      $\BETTER{0.18}$     & $0.36$      &      $\BETTER{0.05}$ &     $\BETTER{0.07}$  &      $\BETTER{0.15}$ &     $\BETTER{0.83}$ \\
         & \hspace{1em} $\triangleright$ Post./TOPP-RA time& $\BETTER{0.04}$      &      $0.03$     & $\BETTER{0.02}$      &      $\BETTER{0.04}$ &     $\BETTER{0.05}$      &       $\BETTER{0.09}$      &      $4.24$     & $1.22$      &      $3.28$ &     $4.02$  &      $4.99$ &     $6.58$ \\
         &Final traj. length (rad)& $6.50$      &      $2.56$     & $4.89$      &      $5.67$ &     $7.37$      &       $11.73$      &      $\BETTER{5.26}$     & $1.40$      &      $\BETTER{4.30}$ &     $\BETTER{4.88}$  &      $\BETTER{5.74}$ &     $\BETTER{8.43}$ \\
         & Collision risk& $\BETTER{0}$    & $0$ & $\BETTER{0}$   &      $\BETTER{0}$     & $\BETTER{0}$      &      $\BETTER{0}$ &      $\RED{0.33}$      &       $0.47$      &      $0$     & $0$      &      $\RED{1}$ &     $\RED{1}$ \\
         \end{tabular}
\end{table*}

\subsubsection{\REV{The roles of BVP solutions and parallelized collision checking}}
\REV{All of our planners in the experiments use parallelized collision checking but integrate the BVP solution at different levels: \emph{(i)} our {\FLASKRRTC} uses the BVP solution to find all the local paths; \emph{(ii)} our {\textit{BVP-augmented} \mbox{\FLASKSST}} only uses the BVP solution to find a shortcut to the goal; \emph{(iii)} our {\textit{SIMD-only }\mbox{\FLASKSST}} only leverages the vectorized collision checking without using the BVP solution; and \emph{(iv)} our {\textit{DP-based} \FLASKRRT} simply expands the planning tree using dynamics propagation without best-state selection and pruning benefits from SST*.

Parallelized collision checking clearly improves the planning times. Compared to \mbox{iDb-A*}~\cite{ortizharo2025iDbAstar} and the original SST*~\cite{li2016sst}, checking the local path for collision in parallel reduces the planning times from tens of seconds to a few seconds (see \cref{table:dynobench_results}) for robots with complicated dynamics such as quadrotors. This is the case even when the BVP solution is not used as in our {\textit{SIMD-only} \mbox{\FLASKSST}} or there is no best-state selection and tree pruning as in our {\textit{DP-based} \FLASKRRT}. As propagation-based planners, our {\textit{SIMD-only} \mbox{\FLASKSST}} and {\textit{DP-based} \FLASKRRT} still wander through the state space, a few small time steps at a time, causing longer planning times and trajectory lengths, as shown in \cref{table:dynobench_results}. This effect is even worse in {\textit{DP-based} \FLASKRRT} as the best-state selection and pruning techniques used in SST* are not available.

However, our unique integration of BVP solutions and parallelized collision checking is the key to driving the planning times further down to the (sub-)millisecond range. By employing the BVP solution to find a shortcut from a node to the goal, our {\textit{BVP-augmented} \mbox{\FLASKSST}} generates a trajectory in just milliseconds, significantly faster than {\textit{SIMD-only} \mbox{\FLASKSST}}. This illustrates the potential of our \REVV{framework} to improve existing propagation-based planners by addressing the common ``wandering" effects via our BVP solution. 
Taking a further step, our {\FLASKRRTC} fully integrates the BVP solution to find all local paths during tree expansion, and therefore, leads to less than a millisecond of planning time in many cases. This illustrates our approach's ability to transform any existing sampling-based geometric planner into an ultrafast kinodynamic planner, requiring only widely available CPUs.
}

\subsection{Kinodynamic planning for high-\dof manipulators}
\label{subsec:exp_panda}
In this section, we evaluate our algorithms with a simulated $7$-\dof Franka Emika Panda robot in PyBullet~\cite{coumans2021pybullet}.
We consider $7$ realistic and challenging environments, including \emph{bookshelf thin}, \emph{bookshelf tall}, \emph{bookshelf small}, \emph{cage}, \emph{box}, \emph{table under pick} and \emph{table pick}, from the MotionBenchMaker benchmark~\cite{chamzas2022-motion-bench-maker}. For each environment, $100$ motion planning problems with $100$ different pairs of start and goal configurations are generated for our experiments. The obstacles and the robot's geometries are represented by sets of spheres, which are compatible to \scSIMD parallelized collision checking as shown in~\cite{thomason2024vamp}. For each problem, the robot's task is to plan a dynamically feasible trajectory from a start to a goal configuration in the free space.

In our experiments, we use our \mbox{\FLASKRRTC} planner, described in \cref{subsec:compare_lowdim} with trajectory simplification in \cref{alg:traj_simplification}. Our baseline is a common approach that generates a time-parameterized trajectory, \emph{e.g.}, using TOPP-RA~\cite{pham2018toppra}, from a collision-free geometric path, under kinematic constraints such as velocity and acceleration limits. The geometric path is provided by a \scSIMD-parallelized geometric \mbox{RRTConnect} planner from VAMP~\cite{thomason2024vamp}. For each environment, we use both our planner and the baseline to solve the pre-generated benchmarking problems and report the results in \cref{table:panda_results}. Our planner's total planning time is calculated as the sum of the \mbox{RRTConnect} time and trajectory simplification time. Meanwhile, the baseline's total planning time includes the \mbox{RRTConnect} time, path simplification time, and TOPP-RA time. As our trajectory and the baseline's are optimized for different cost functions, we use their length as a common metrics, similar to \cref{subsec:compare_lowdim}. As TOPP-RA~\cite{pham2018toppra} only fits a time-parameterized trajectory to a geometric path, it does not consider any collision avoidance constraints. While the geometric path is collision-free, there is no guarantee that the time-parameterized trajectory from TOPP-RA will be valid. Therefore, we sample the resulting trajectories, check for collision and compare the collision risk, measured by the percentage of collided trajectories.

\cref{table:panda_results} compares our kinodynamic planner and the baseline, in terms of the total planning time, the final trajectory's length, and the collision risk, for each environment and overall. On average, we are able to generate a valid trajectory in around $3.5ms$, and in less than $1ms$ for $75\%$ of the problems, while the baseline's total planning time is consistently around $4.5ms$ for almost all problems. While VAMP can generate a collision-free geometric path fast, the baseline's total planning time is dominated by the \mbox{TOPP-RA} time. Our approach, instead, spends most of the time in the \mbox{\FLASKRRTC} planner, as we directly enforce the dynamics constraints in the RRT tree construction. Across all the problems, the baseline offers a shorter trajectory but, more importantly, poses a collision risk of $\sim 30\%$ on average, highlighting the benefit of our approach with guarantees of collision-free trajectories up to the sampling resolution. \cref{fig:panda_exp} visualizes the trajectories from our \FLASKRRTC (top) with no collisions and from the baseline (bottom) with collided configurations (red). 

For the \emph{table under pick} and \emph{table pick} environments, our total planning time is approximately $\sim 15$ times faster than that of the baseline, while maintaining $\sim 4$ times shorter time for the \emph{box}, \emph{bookshelf thin} and \emph{bookshelf tall}  environments. For the \emph{bookshelf small} and \emph{cage} environments, our approach is slightly slower on average. Yet, it is still $\sim 8$ times faster for $75\%$ of the \emph{bookshelf small} problems, suggesting that there are certain challenging cases in those environments that our approach takes slightly longer than the baseline to plan, but still achieves milliseconds of planning time. We emphasize that even though the baseline can be faster in those cases, it still leads to a significant risk of collision, around $30\%$ of the \emph{bookshelf small} and \emph{cage} problems, as shown in \cref{table:panda_results}.

\begin{figure*}[t]
\centering
\begin{subfigure}[t]{0.30\textwidth}
        \centering
\includegraphics[width=\textwidth]{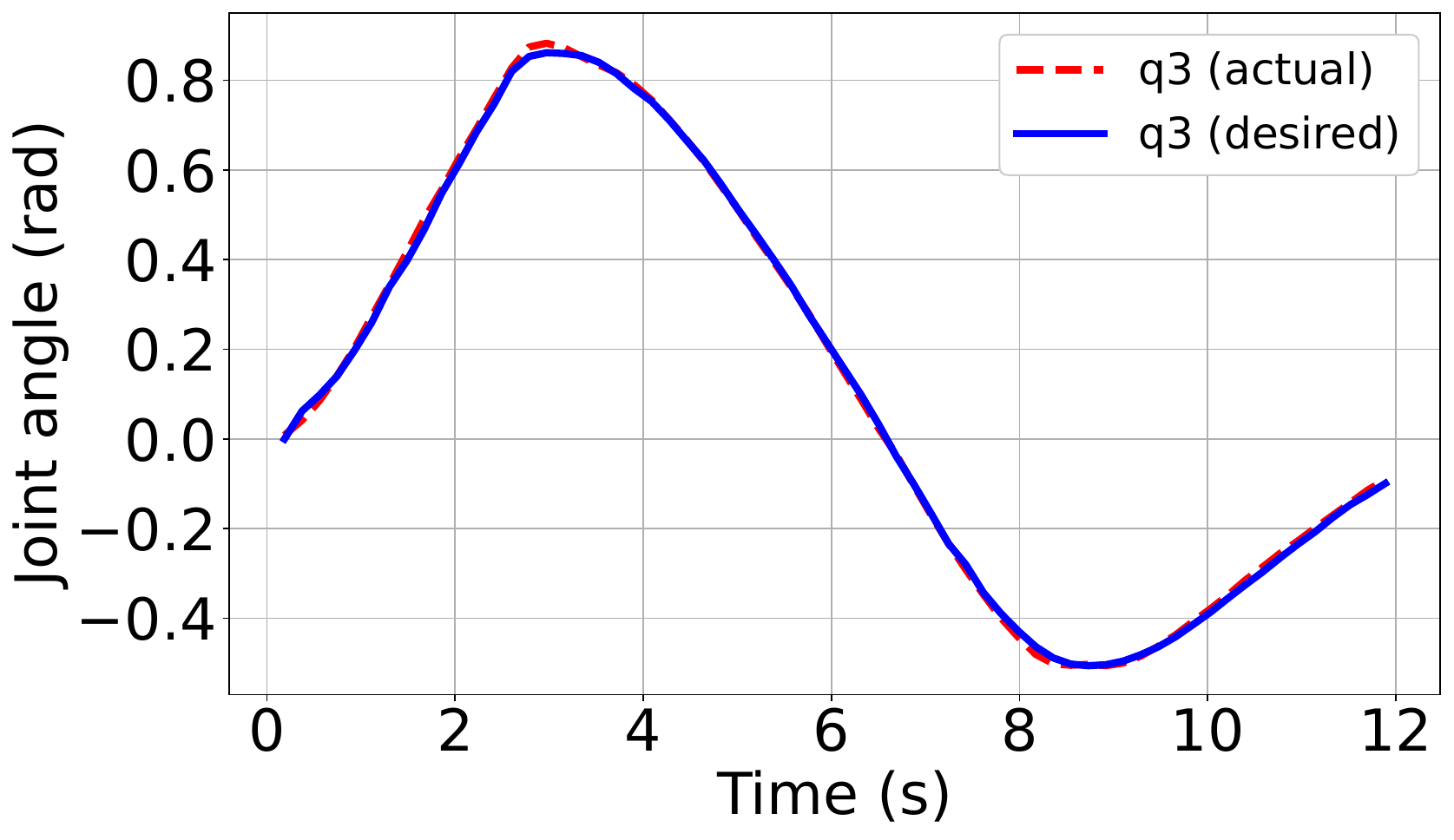}%
\caption{Tracking our trajectory}
\label{fig:ur5_tracking_kino}
\end{subfigure}%
\hfill
\begin{subfigure}[t]{0.30\textwidth}
        \centering
\includegraphics[width=\textwidth]{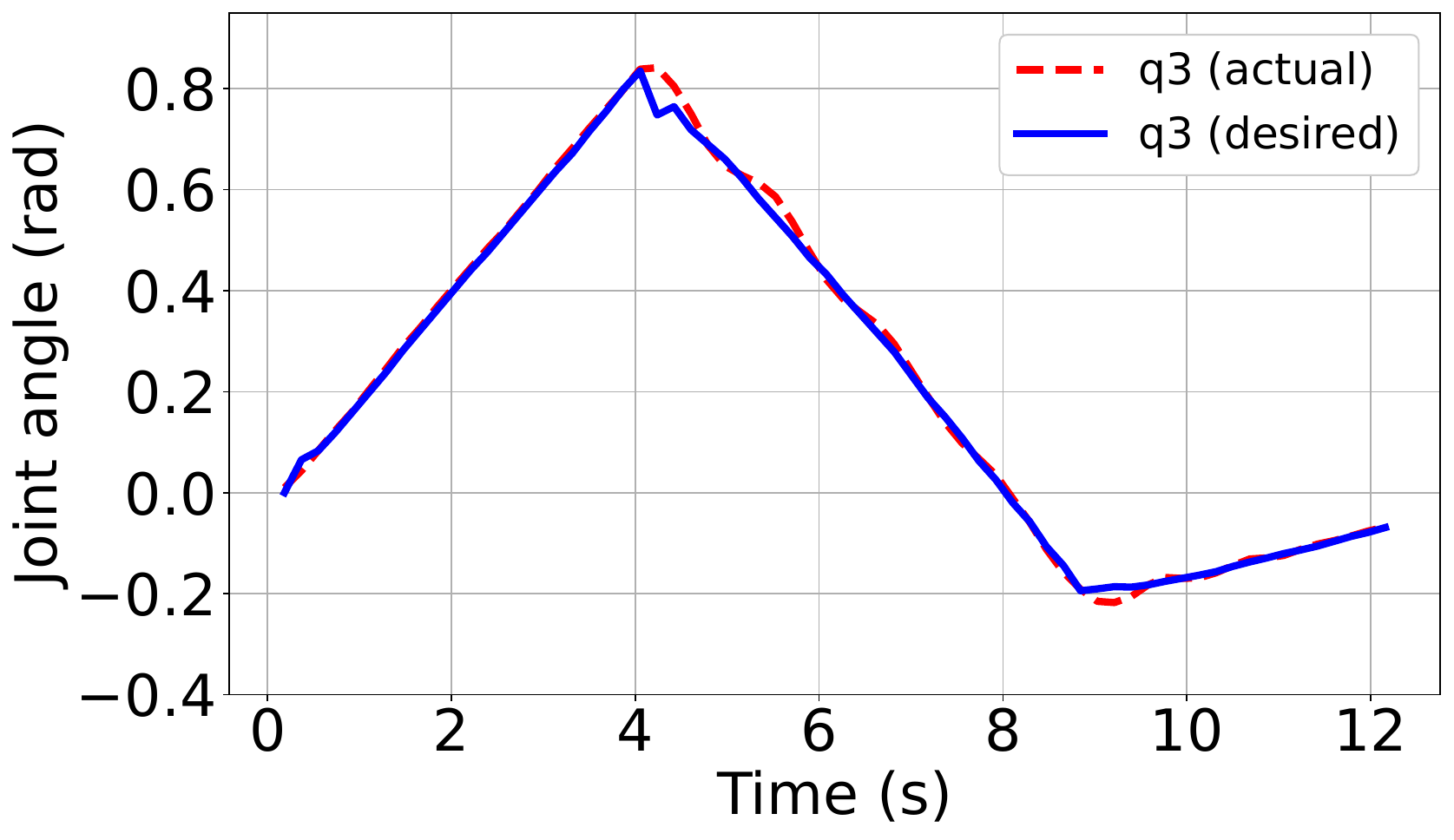}%
\caption{Tracking a geometric path}
\label{fig:ur5_tracking_geo}
\end{subfigure}%
\hfill
\begin{subfigure}[t]{0.30\textwidth}
        \centering
\includegraphics[width=\textwidth]{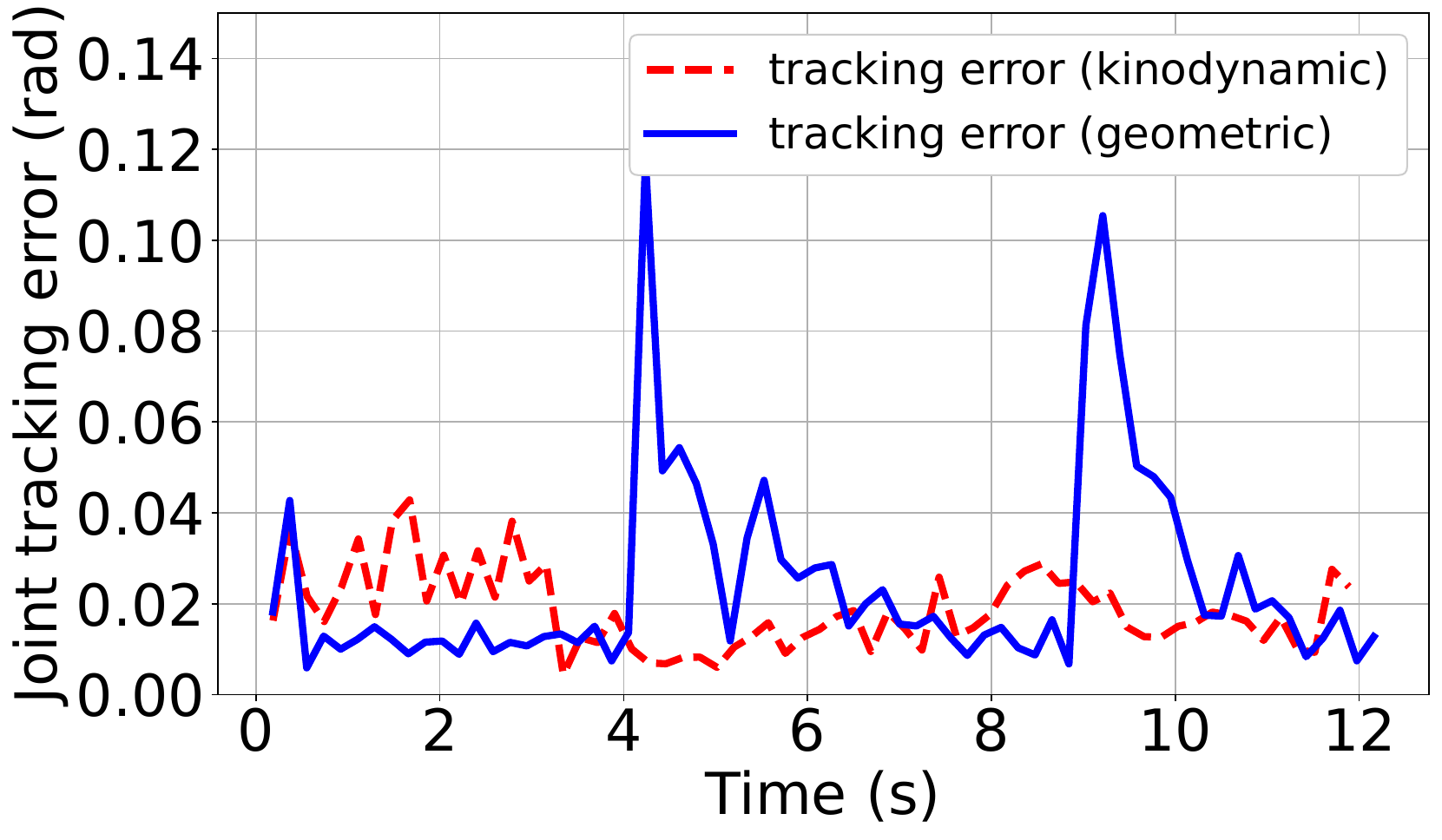}%
\caption{Tracking error}
\label{fig:ur5_tracking_error}
\end{subfigure}%
\caption{Our trajectory is smoother and dynamically feasible, leading to better tracking performance. For example, the third joint angle $q_3$ in the our UR5's configuration stays close to the desired value from our trajectory (a)  while there are overshoots and fluctuates with a geometric path~\cite{thomason2024vamp} (b), causing collision in \cref{fig:ur5_pickplace_geo}. The tracking error $\Vert q_{actual} - q_{desired} \Vert$ also shows two overshooting peaks with the geometric path in \cref{fig:ur5_tracking_error}.}
\label{fig:ur5_tracking_perf}
\end{figure*}

\begin{figure*}[t]
\centering
\begin{subfigure}[t]{0.5\textwidth}
        \centering
\includegraphics[width=0.49\textwidth]{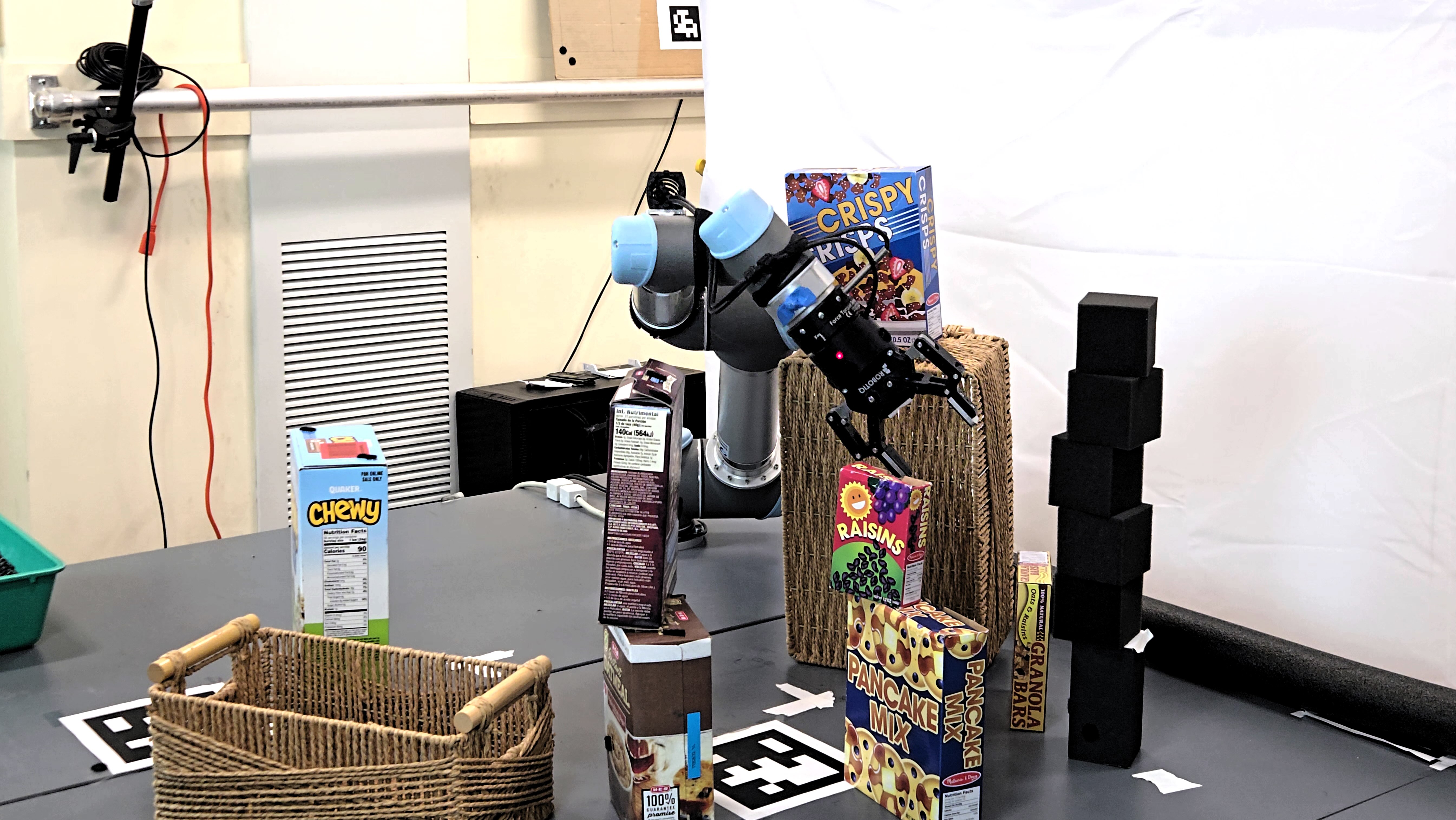}%
\includegraphics[width=0.49\textwidth]{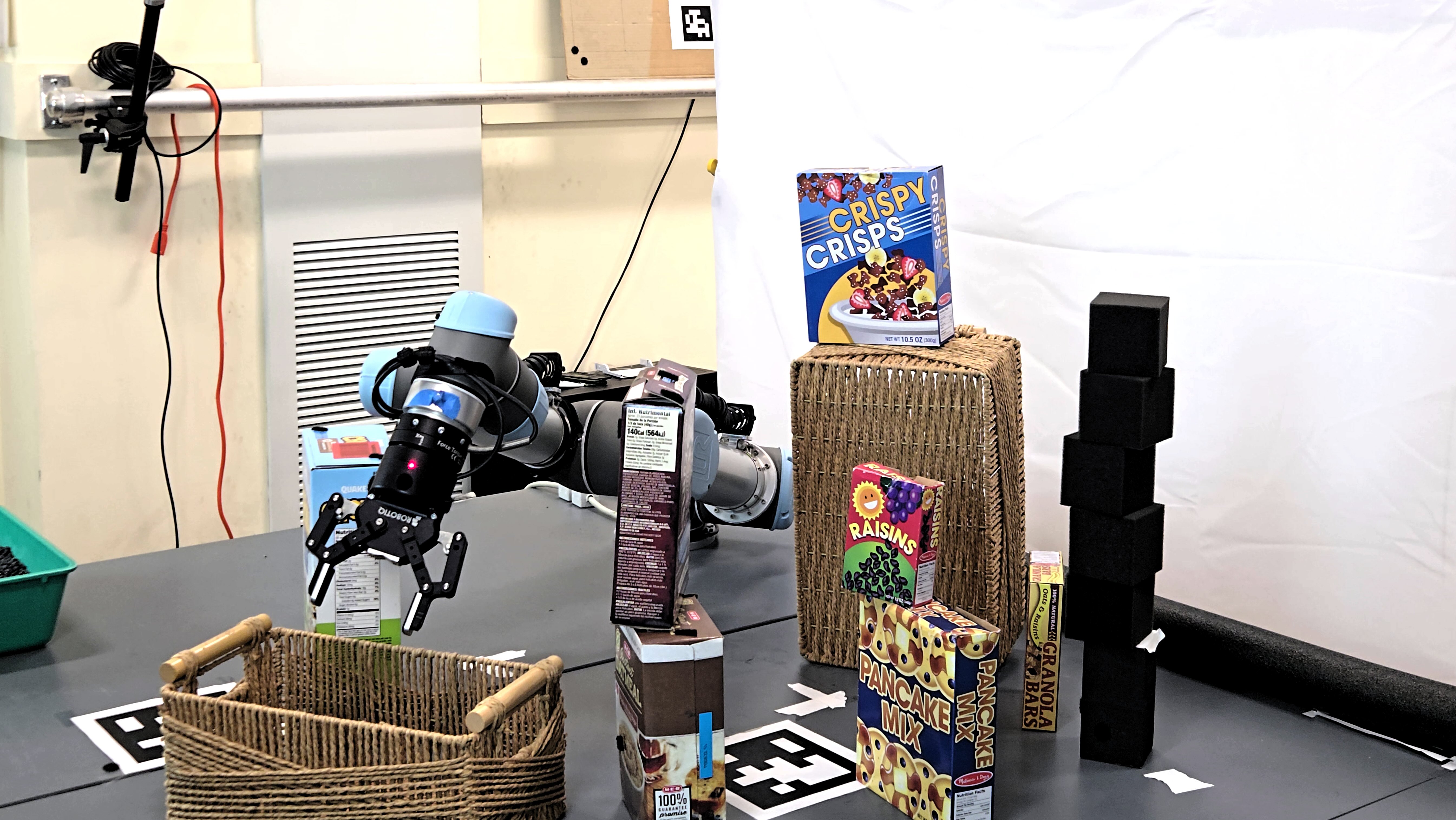}%
\caption{}
\label{fig:ur5_pickplace_kino}
\end{subfigure}%
\begin{subfigure}[t]{0.5\textwidth}
        \centering
\includegraphics[width=0.49\textwidth]{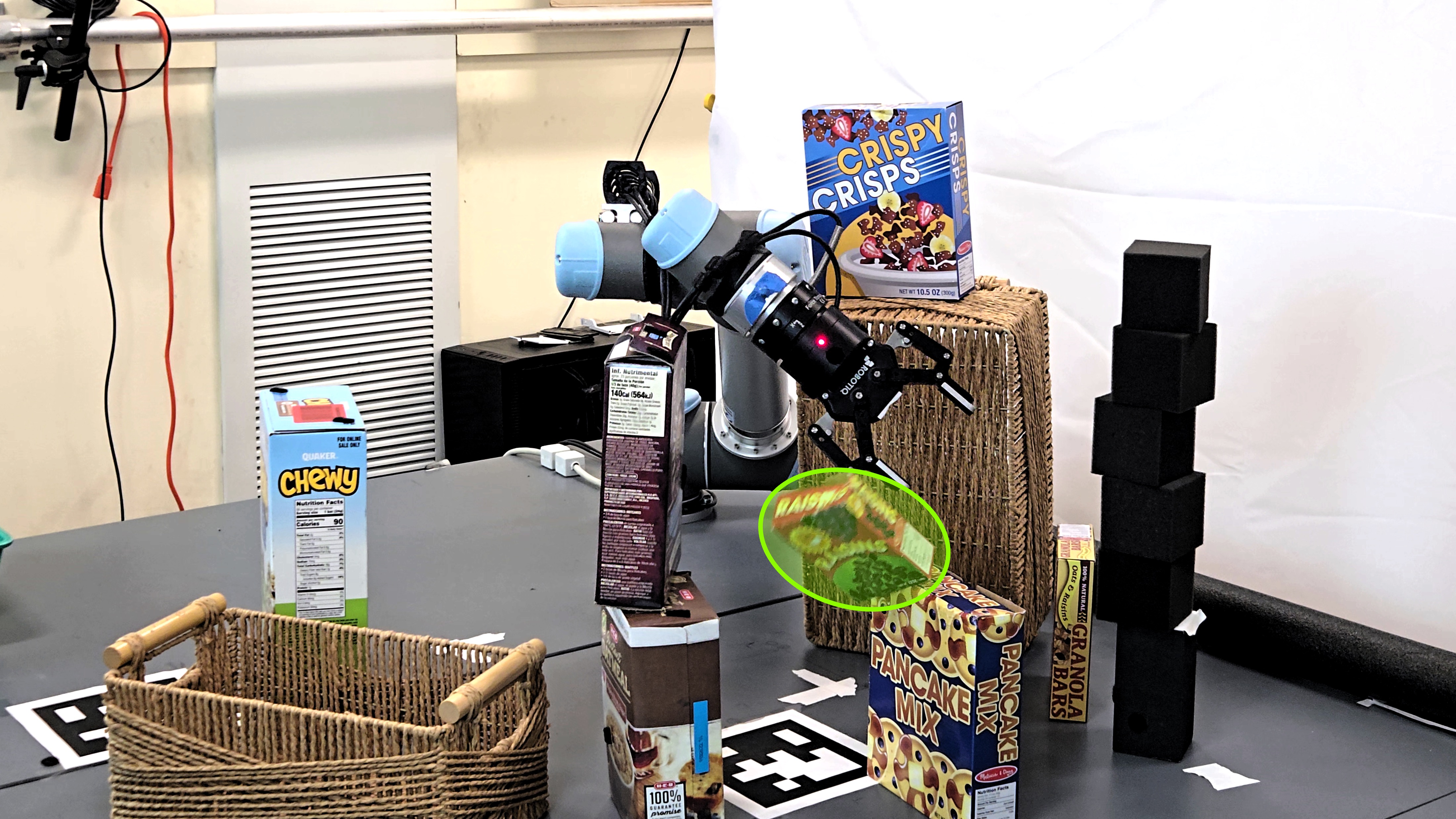}%
\includegraphics[width=0.49\textwidth]{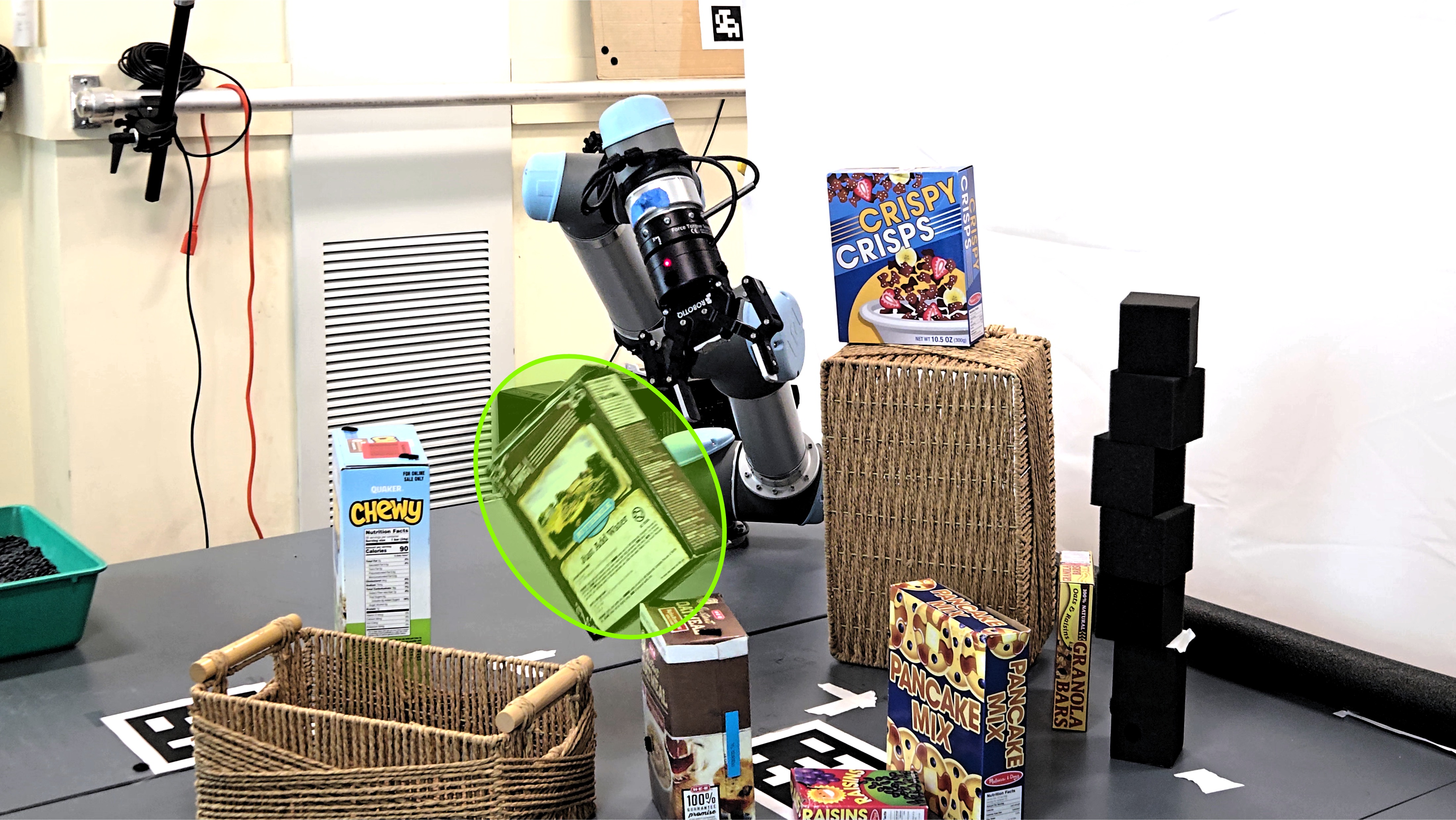}%
\caption{}
\label{fig:ur5_pickplace_geo}
\end{subfigure}%
\caption{The ``pick and place" task with UR5 robot in a cluttered environment with narrow passages: (a) our kinodynamic planner successfully finishes the task as it generates a smooth and dynamically feasible trajectory that the robot can track accurately; (b) the robot collides with obstacles when trying to track a geometric path~\cite{thomason2024vamp} due to overshoots caused by the lack of dynamics constraints.}
\label{fig:ur5_pickplace}
\end{figure*}

\subsection{Real experiments}
Finally, we verify the benefit of our approach for online motion planning on a real Universal Robots (UR5) platform in a cluttered environment. The environment consists of multiple static and dynamic objects, perceived by an Intel Realsense camera with a top-down view. The depth image from the camera is used to generate a set of spheres stored in a collision-affording point tree (CAPT) compatible with \scSIMD-parallelized motion planning~\cite{ramsey2024-capt}. The spheres represent the obstacles in the environment in our \cref{alg:collision_checking_poly}. \mbox{Our \mbox{\FLASKRRTC}} planner in \cref{subsec:compare_lowdim} is used to find a trajectory from the start configuration to the goal.

\subsubsection{Kinodynamic Planning versus Geometric Planning}

\begin{figure}[t]
\centering
\begin{subfigure}[t]{0.5\textwidth}
        \centering
\includegraphics[width=0.48\textwidth]{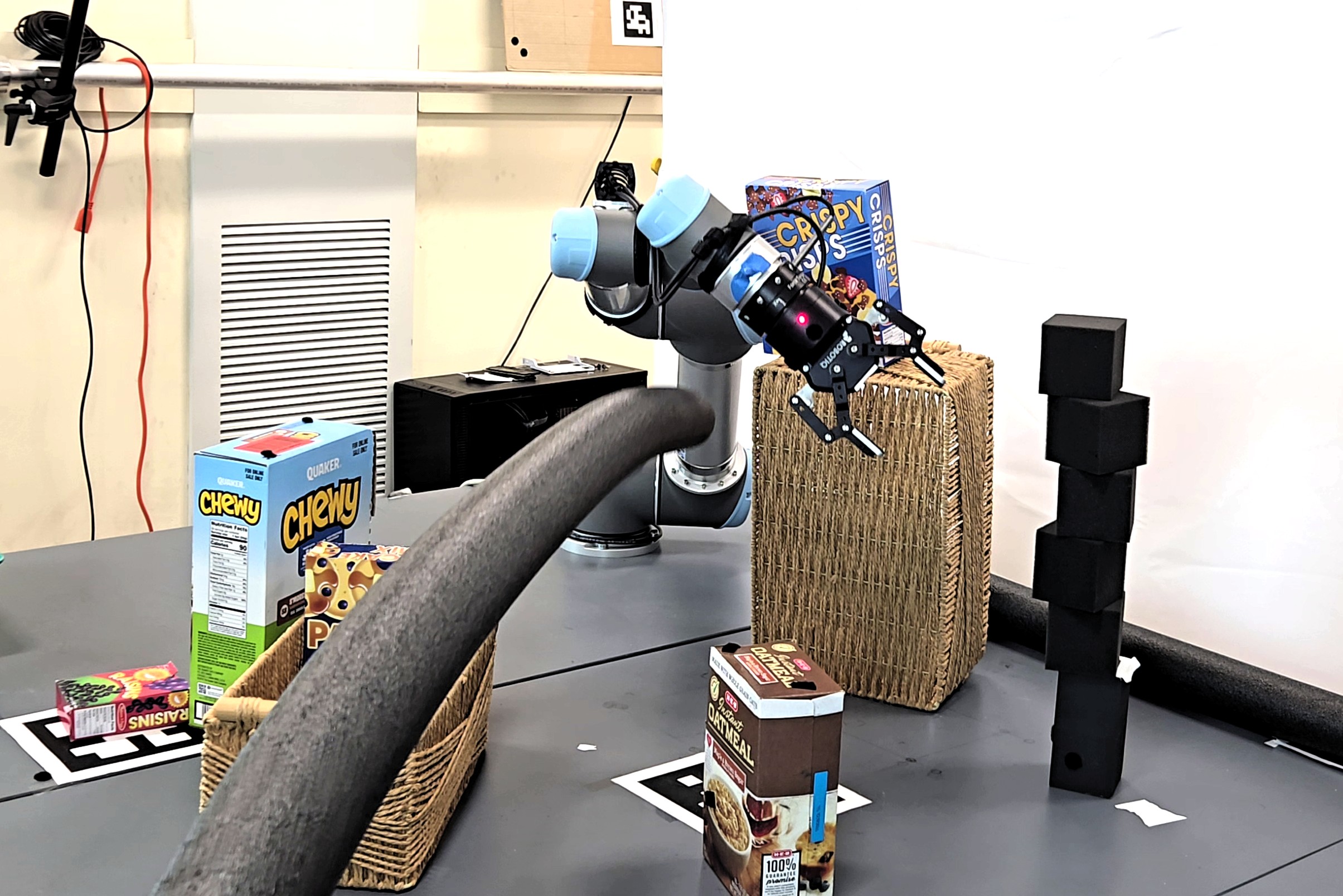}%
\hspace{0.5mm}
% \hfill
\includegraphics[width=0.48\textwidth]{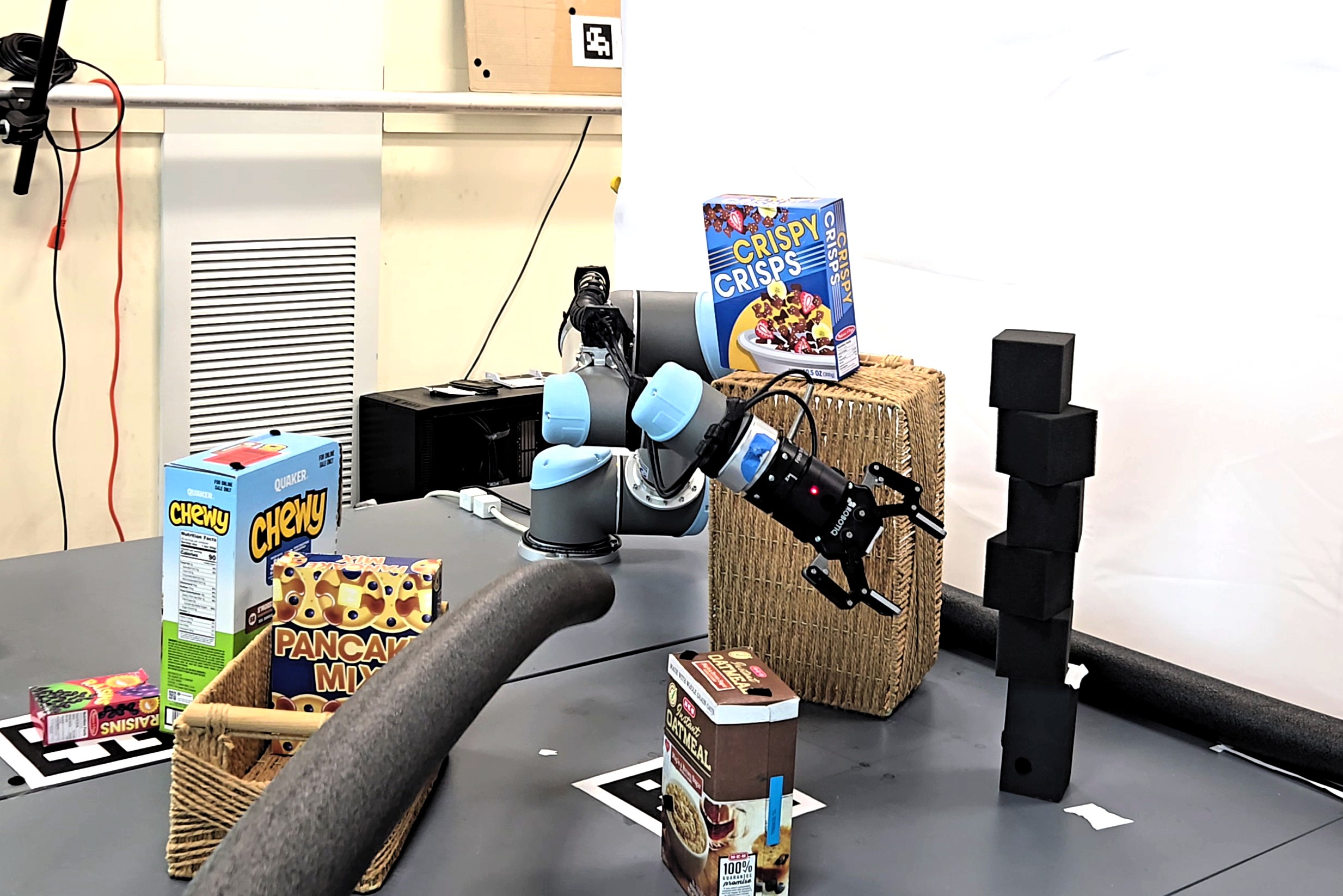}%
\vspace{1mm}
\includegraphics[width=0.48\textwidth]{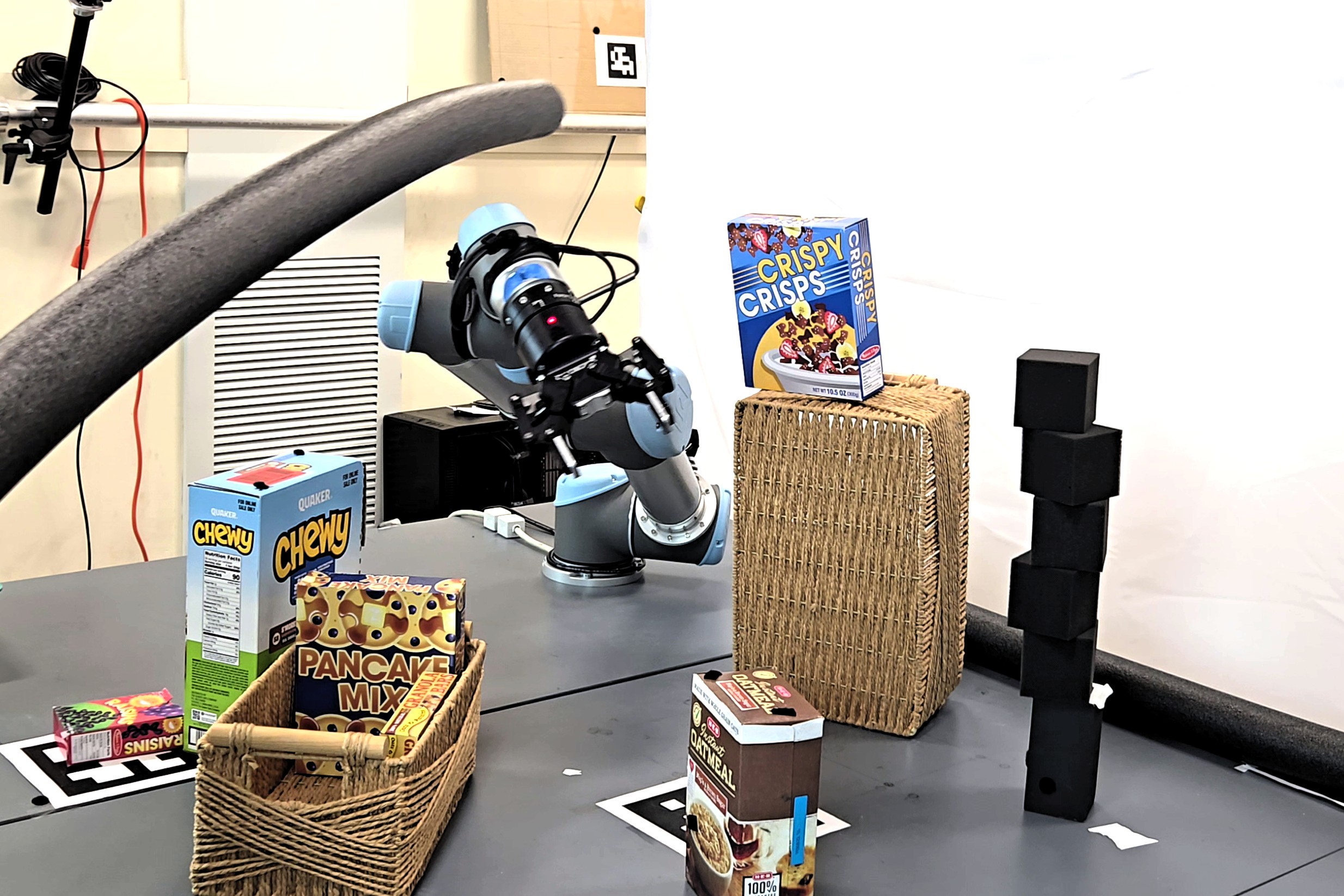}%
\hspace{0.5mm}
% \hfill
\includegraphics[width=0.49\textwidth]{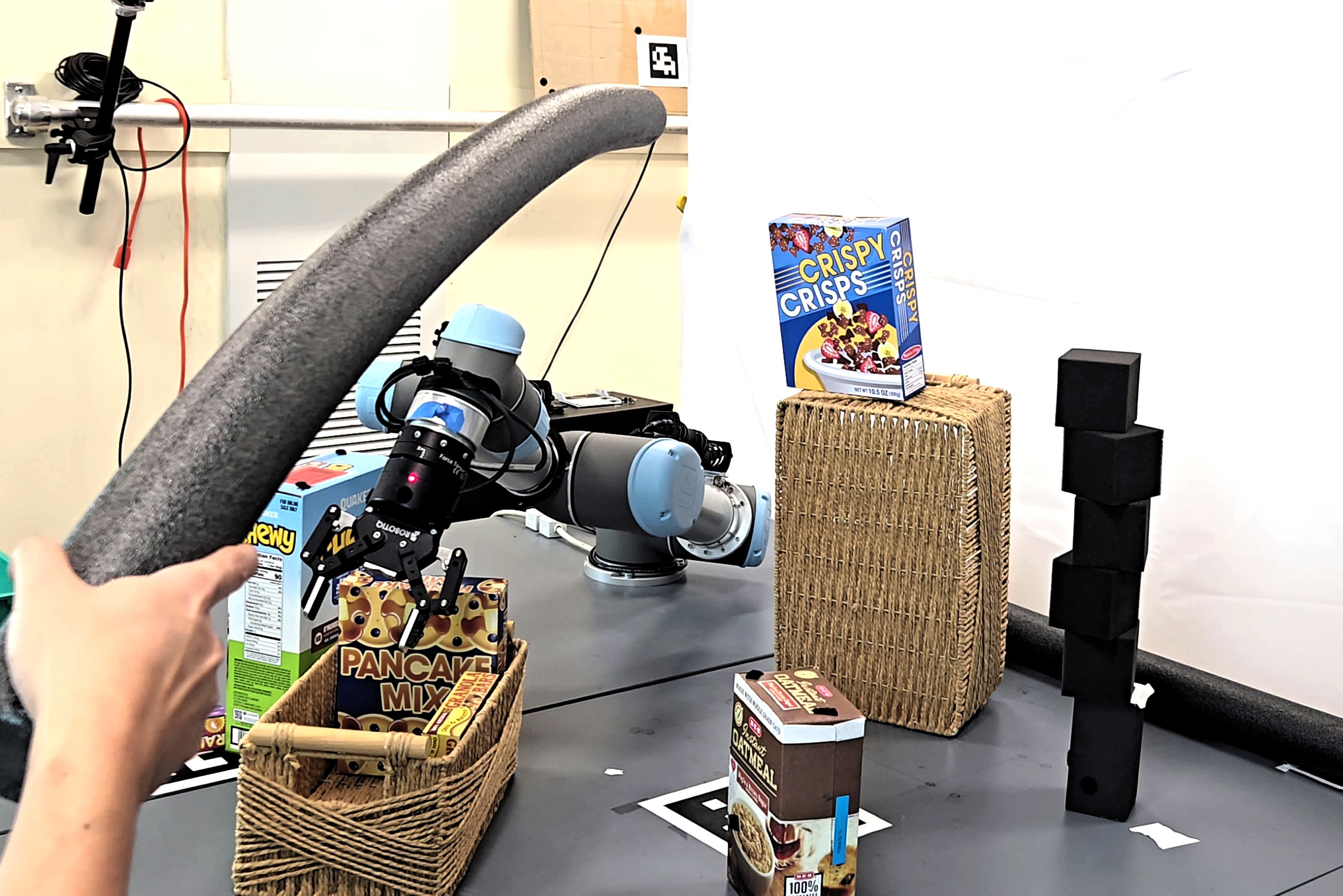}%
\vspace{1mm}
\includegraphics[width=0.48\textwidth]{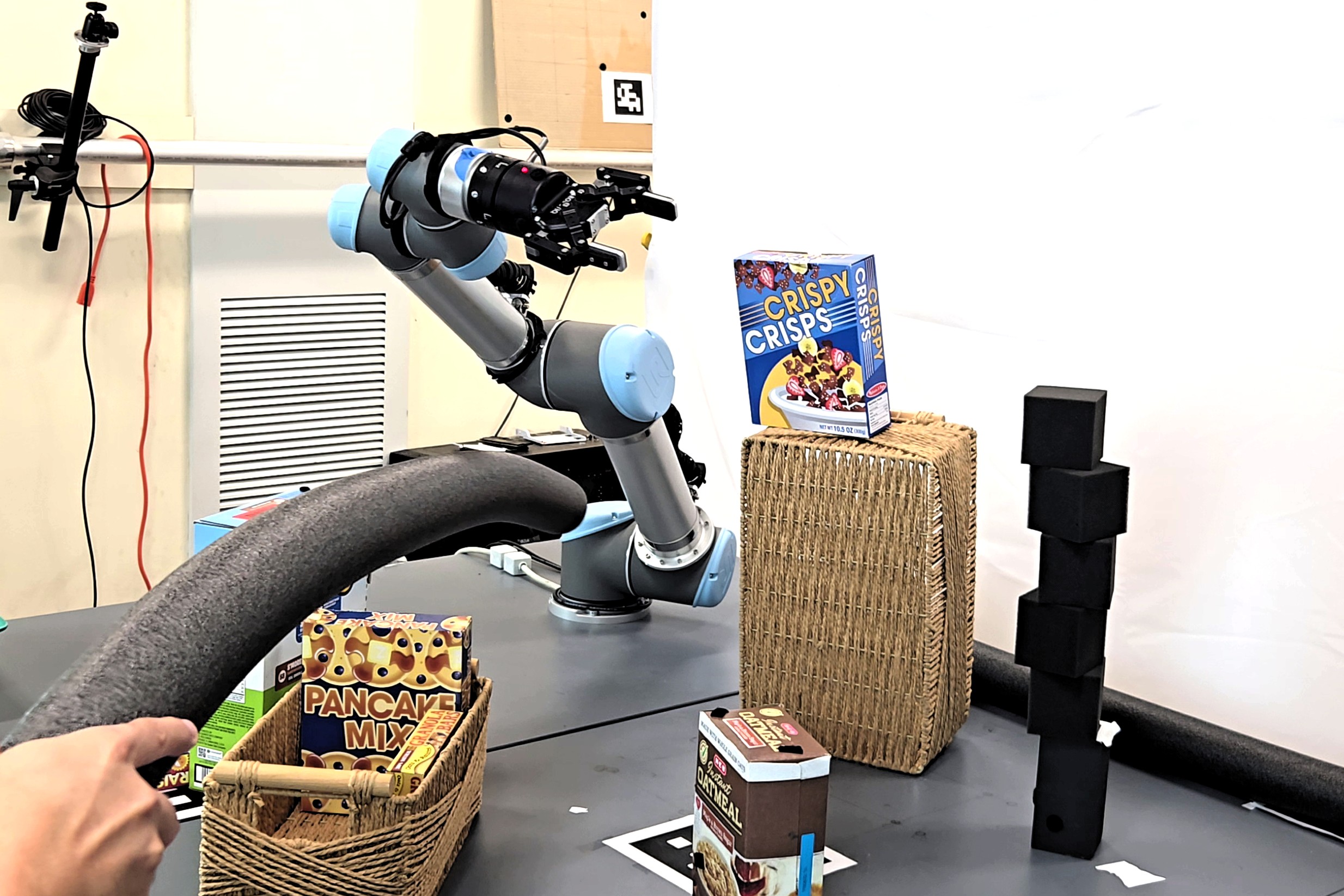}%
\hspace{0.5mm}
% \hfill
\includegraphics[width=0.48\textwidth]{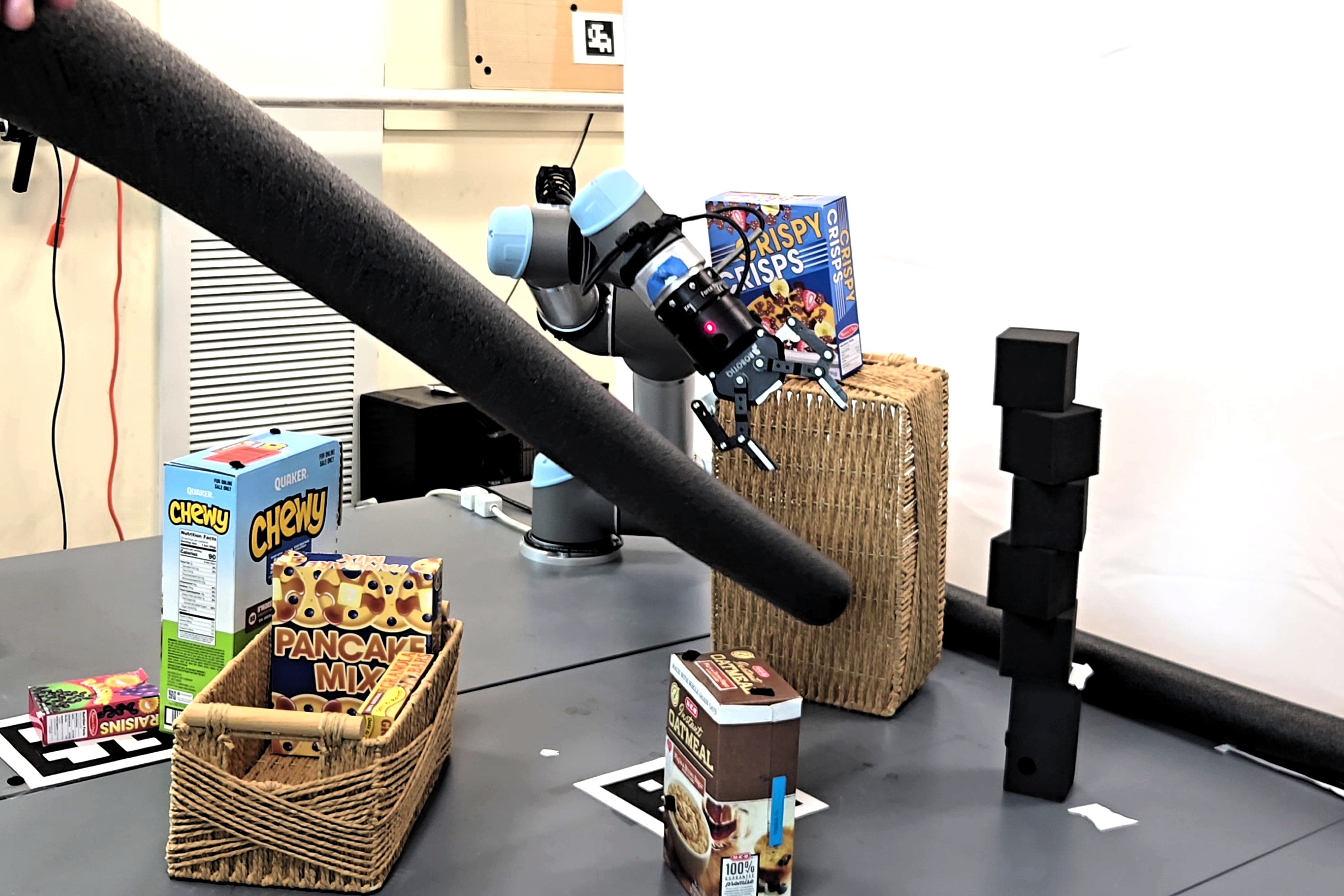}%
\end{subfigure}%
\caption{Reactive planning with moving obstacles: our UR5 robot successfully performs a ``\emph{pick, place, and reset}" loop with our \mbox{\FLASKRRTC} planner without colliding with any of the static or moving obstacles. The obstacles are observed by an overhead Intel Realsense camera. }
\label{fig:ur5_reactive_exp}
\end{figure}
\begin{figure}[t]
\centering
\begin{subfigure}[t]{0.4\textwidth}
        \centering
\includegraphics[width=\textwidth]{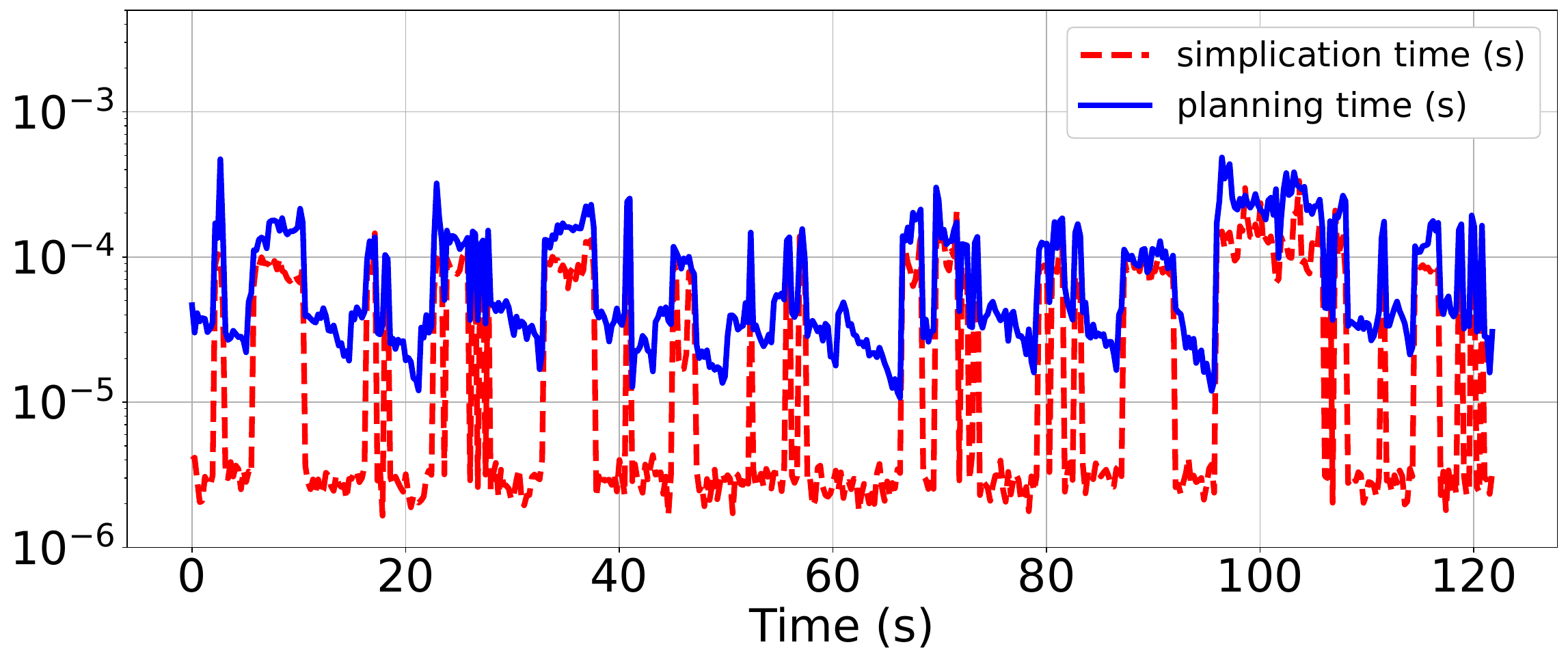}%
\end{subfigure}%
\caption{Planning and simplification time in our ``\emph{pick, place, and reset}" loop: our planner takes $\sim 90\mu s$ and $\sim 40 \mu s$ for planning and simplification, respectively, illustrating the reactiveness of our approach for real-time applications.}
\label{fig:ur5_reactive_exp_time}
\end{figure}

We first consider a ``\emph{pick and place}" scenario in \cref{fig:ur5_pickandplace_annotated}, where the robot generates a motion plan from its start position to a~\emph{``pick"} position near a granola box, and then moves to a \emph{``place"} position above a basket. The environment is cluttered with multiple objects, creating narrow passages such that a minor deviation from the planned trajectory can cause collisions. Therefore, accurate tracking is crucial for the safe realization of this ``\emph{pick and place}" task.
The baseline is a geometric path, generated by a SIMD-parallelized \mbox{RRTConnect} planner from VAMP~\cite{thomason2024vamp}. The path is sampled with a constant open-loop velocity, chosen such that the task's duration is similar to ours for a fair comparison. A closed-loop velocity control input~is computed by Ruckig~\cite{berscheid2021ruckig} and sent to the UR5 robot to execute.

\cref{fig:ur5_tracking_perf} plots the tracking error during the experiments. Our trajectory is dynamically feasible, allowing the robot to smoothly and accurately execute the task without any collisions (\cref{fig:ur5_tracking_kino,fig:ur5_pickplace_kino}). Meanwhile, the geometric path has sharp turns, leading to overshoots and fluctuations (\cref{fig:ur5_tracking_geo}). As a result, the robot collides with two nearby boxes while trying to turn, as seen in \cref{fig:ur5_pickplace_geo}. The overshoots can be observed in \cref{fig:ur5_tracking_error} with two spikes in total tracking error $\Vert \bfq_{actual} - \bfq_{desired} \Vert$, at \mbox{times $t \approx 4$s and $t \approx 9$s,} where $\bfq_{actual}$ and $\bfq_{desired}$ denote the actual and desired values of the joint angles from the motion plans. Meanwhile, our tracking error stays low without any spikes, highlighting the benefits of having a dynamically feasible trajectory for accurate task executions.

\subsubsection{Reactive Kinodynamic Planning}
To examine the reactiveness of our kinodynamic planning approach, we consider a ``\emph{pick, place, and reset}" loop, where the robot keeps planning its trajectory to perform a ``\emph{pick}" action near a pile of blocks, a ``\emph{place}" action above the basket, and then ``\emph{reset}" to its original configuration. During the experiment, we move a foam obstacle in the environment and verify that our kinodynamic motion planner is able to quickly react to object changes and safely achieve the task as shown in \cref{fig:ur5_reactive_exp}. On average, our kinodynamic planner takes $\sim 90\mu s$ to plan and $\sim 40 \mu s$ to simplify the trajectory (\cref{fig:ur5_reactive_exp_time}), leading to a total planning time of $0.13 ms$. The results illustrate that our approach is fast and suitable for online and reactive planning with dynamic environments, while satisfying the dynamics constraints.

\section{Conclusions}
This paper develops \REVV{\textbf{\FLASK}}, an ultrafast sampling-based kinodynamic planning \REVV{framework}, by solving the two-point boundary value problem and dynamics propagation in closed forms for a common class of differentially flat robot platforms, and performing extremely fast forward kinematics and collision checking via \scSIMD parallelism, available on most consumer CPUs. Our approach is exact, general and \REV{can be applied to any sampling-based planners while offering theoretical guarantees on probabilistic exhaustivity and asymptotic optimality}. It is able to generate a dynamically feasible trajectory in the range of microseconds to milliseconds, which is suitable for online and reactive planning in dynamic environments. Our method outperforms common sampling-based and optimization-based kinodynamic planners as well as time-parameterization of a geometric path in terms of planning times, offering a fast, reliable, and feasible solution for trajectory generation.

Our approach unfolds a promising and exciting research area where it is possible to transform existing sampling-based motion planners \REV{with theoretical guarantees} into fast kinodynamic versions while only requiring general-purpose and widely available CPUs. \REV{As we have closed-form BVP solutions for a large class of nonlinear differentially flat systems, we can largely bypass the propagation-based planners and can even enable roadmap-based kinodynamic planning}. This ability potentially can lead to a wide application of our method on different robots, in different settings, and for various tasks. \REV{Our method can quickly bootstrap optimization-based planners with quality and dynamically feasible initial solutions, especially when their objective function differs from our cost.} \REV{Moreover, the ability to specify a duration for our trajectory offers an exciting potential integration task and motion planning with temporal constraints such as task deadlines.}

\section{Acknowledgement}
We would like to thank J. Arden Knoll, Emiliano Flores, and Yitian Gao for their help with the baselines.
\section{Appendices}
\label{sec:appendix}
\subsection{Derivation of closed-form solutions in Example \ref{example:min_acc_time}}
\label{subsec:min_time_derivation}
The optimal control input \eqref{eq:optimal_control} becomes:
\begin{equation*}
\begin{aligned}
    \bfw_{loc}(t) 
    &= \begin{bmatrix}
    \bf0 & \bfI_n
\end{bmatrix} \left( \bfI_{2n} + \bfA^T (T-t) \right) \bfG^{-1}_T \bfd_T \\
    &=  \scaleMathLine[0.7]{\begin{bmatrix}
    \bf0 & \bfI_n
\end{bmatrix} \begin{bmatrix}
                    \bfI_n & \bf0 \\
                    (T-t)\bfI_n & \bfI_n
                \end{bmatrix} \begin{bmatrix}
                    \frac{12}{T^3}\bfI_n & -\frac{6}{T^2}\bfI_n \\
                    -\frac{6}{T^2}\bfI_n & \frac{4}{T}\bfI_n
                \end{bmatrix} \bfd_T}, \\
    &=\begin{bmatrix}
        -\frac{12\bfI_n}{T^3}  &  \frac{6\bfI_n}{T^2}
    \end{bmatrix} \bfd_T t + \begin{bmatrix}
        \frac{6\bfI_n}{T^2} &  - \frac{2\bfI_n}{T}
    \end{bmatrix} \bfd_T.
\end{aligned}
\end{equation*}
This leads to a minimum-acceleration optimal motion $\bfz_{loc}(t)$: 
\begin{equation*}%\label{eq:optimal_acc_traj_detailed}
\begin{aligned}
        \bfz_{loc}(t) 
& = \scaleMathLine[0.85]{\begin{bmatrix}
                    \frac{t^3}{3}\bfI_n & \frac{t^2}{2}\bfI_n \\
                    \frac{t^2}{2}\bfI_n & t\bfI_n
                \end{bmatrix} \begin{bmatrix}
                    \bfI_n & \bf0 \\
                    (T-t)\bfI_n & \bfI_n
                \end{bmatrix} \begin{bmatrix}
                    \frac{12}{T^3}\bfI_n & -\frac{6}{T^2}\bfI_n \\
                    -\frac{6}{T^2}\bfI_n & \frac{4}{T}\bfI_n
                \end{bmatrix} \bfd_T } \\
               & \qquad \qquad \scaleMathLine[0.27]{+ \begin{bmatrix}
    \bfI_n & t\bfI_n \\ \bf0 & \bfI_n 
\end{bmatrix} \begin{bmatrix}
    \bfy_0 \\ \dot{\bfy}_0
\end{bmatrix}}, \\
& = \scaleMathLine[0.85]{\begin{bmatrix}
    \begin{bmatrix}
        -\frac{2\bfI_n}{T^3}  &  \frac{\bfI_n}{T^2}
    \end{bmatrix} \bfd_T t^3 + \begin{bmatrix}
        \frac{3\bfI_n}{T^2} &  - \frac{\bfI_n}{T}
    \end{bmatrix} \bfd_T t^2 + \dot{\bfy}_0 t + \bfy_0 \\
    \begin{bmatrix}
        -\frac{6\bfI_n}{T^3}  &  \frac{3\bfI_n}{T^2}
    \end{bmatrix} \bfd_T t^2 + \begin{bmatrix}
        \frac{6\bfI_n}{T^2} &  - \frac{2\bfI_n}{T}
    \end{bmatrix} \bfd_T t + \dot{\bfy}_0
\end{bmatrix}.}
\end{aligned}
\end{equation*}
In other words, the optimal flat output trajectory is: 
\begin{equation*} %\label{eq:optimal_acc_traj}
    \bfy_{loc}(t) = \begin{bmatrix}
        -\frac{2\bfI_n}{T^3}  &  \frac{\bfI_n}{T^2}
    \end{bmatrix} \bfd_T t^3 + \begin{bmatrix}
        \frac{3\bfI_n}{T^2} &  - \frac{\bfI_n}{T}
    \end{bmatrix} \bfd_T t^2 + \dot{\bfy}_0 t + \bfy_0.
\end{equation*}

The optimal cost function \eqref{eq:optimal_cost} is expanded as:
\begin{equation*}
\begin{aligned}
\calC_{loc}(T) 
    &= \scaleMathLine[0.85]{\begin{bmatrix}
    \bfy_f - \bfy_0 - T\dot{\bfy}_0 \\ \dot{\bfy}_f - \dot{\bfy}_0
\end{bmatrix}^\top \begin{bmatrix}
                    \frac{12}{T^3}\bfI_n & -\frac{6}{T^2}\bfI_n \\
                    -\frac{6}{T^2}\bfI_n & \frac{4}{T}\bfI_n
                \end{bmatrix} \begin{bmatrix}
    \bfy_f - \bfy_0 - T\dot{\bfy}_0 \\ \dot{\bfy}_f - \dot{\bfy}_0
\end{bmatrix} + \rho T} \\
    &= \scaleMathLine[0.75]{12\Vert \bfy_f - \bfy_0 \Vert^2/T^3 - 12 (\dot{\bfy}_f + \dot{\bfy}_0)^\top (\bfy_f - \bfy_0)/T^2}\\
    & \qquad \qquad + \scaleMathLine[0.56]{4(\Vert \dot{\bfy}_0 \Vert^2 + \dot{\bfy}_0^\top \dot{\bfy}_f + \Vert \dot{\bfy}_f \Vert^2)/T + \rho T}.
\end{aligned}
\end{equation*}

%%%%%%%%%%%%%%%%%%%%%%%%%%%%%%%%%%%%%%%%%%%%%%%%%%%%%%%%%%%%%%%%%%%%%%%%%%%%%%%%

%%%%%%%%%%%%%%%%%%%%%%%%%%%%%%%%%%%%%%%%%%%%%%%%%%%%%%%%%%%%%%%%%%%%%%%%%%%%%%%%
\printbibliography{}

\begin{IEEEbiography}[{\includegraphics[width=1in,height=1.25in,clip,keepaspectratio]{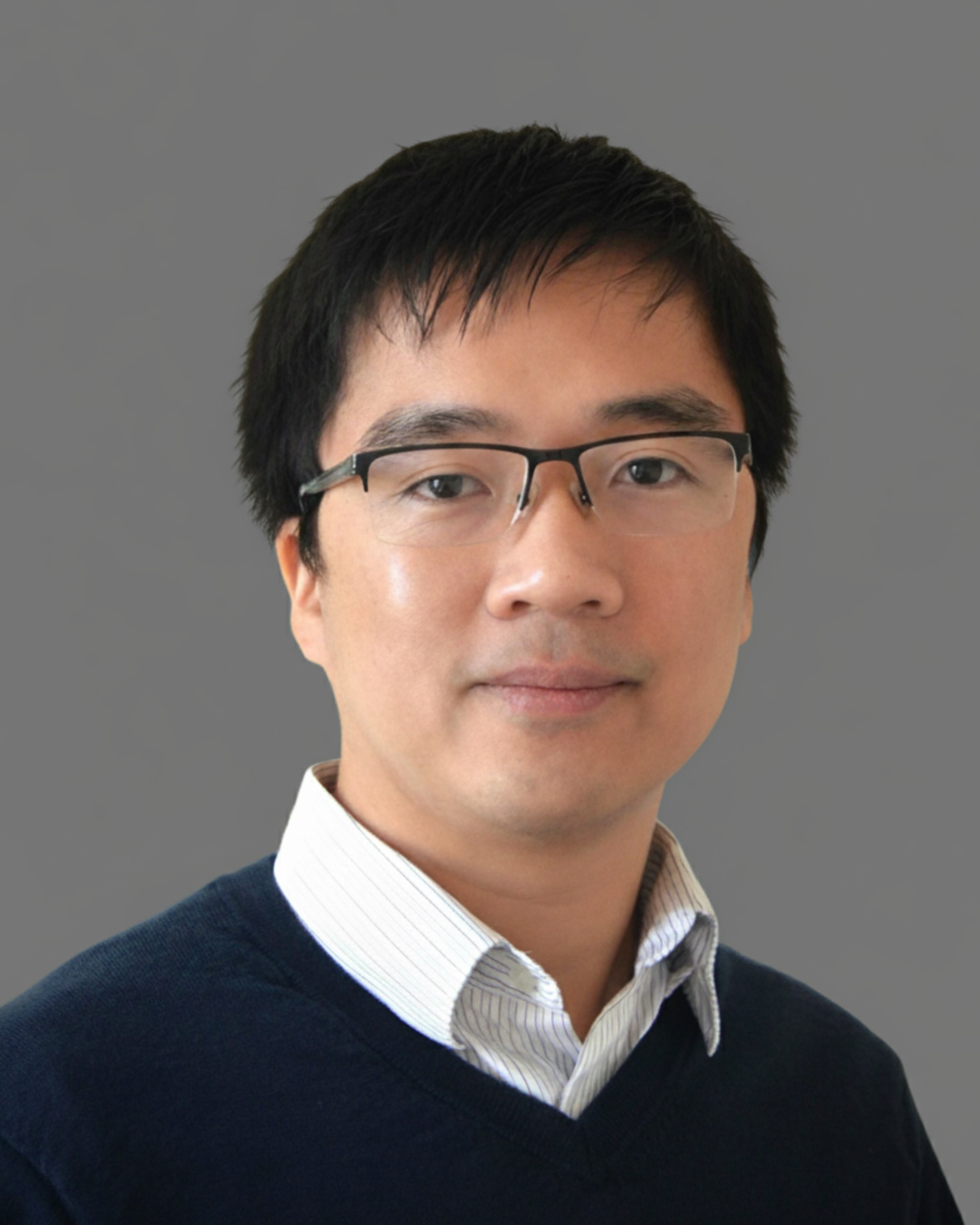}}]{Thai Duong}
	received his Ph.D. degree in Electrical and Computer Engineering from the University of California San Diego, La Jolla, CA, USA, in 2024. 

    He is currently a postdoctoral research associate at Rice University, Houston, TX, USA.
    His research interests include machine learning with applications to robotics, robot dynamics learning, motion planning and control.
\end{IEEEbiography}

\begin{IEEEbiography}[{\includegraphics[width=1in,height=1.25in,clip,keepaspectratio]{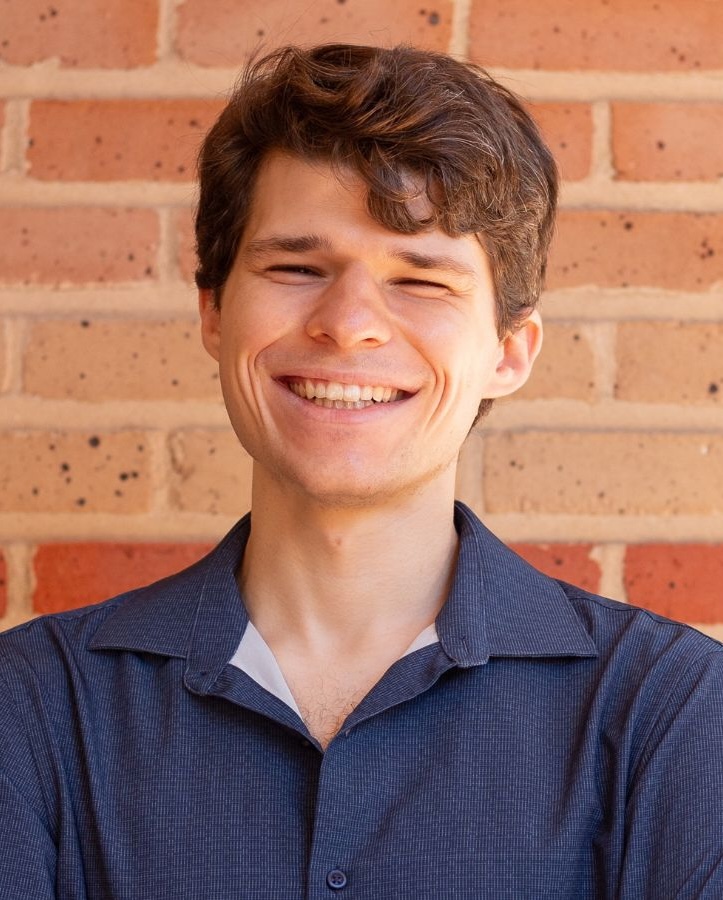}}]{Clayton W. Ramsey} is a 4th-year Ph.D. student under Dr. Lydia E. Kavraki at Rice University. 

He is a NASA NSTGRO Research Fellow, collaborating with the Dexterous Robotics team at NASA Johnson Space Center. His research specializes in hardware acceleration for robot planning, especially for long-horizon planning in dynamic environments.
\end{IEEEbiography}

\begin{IEEEbiography}[{\includegraphics[width=1in,height=1.25in,clip,keepaspectratio]{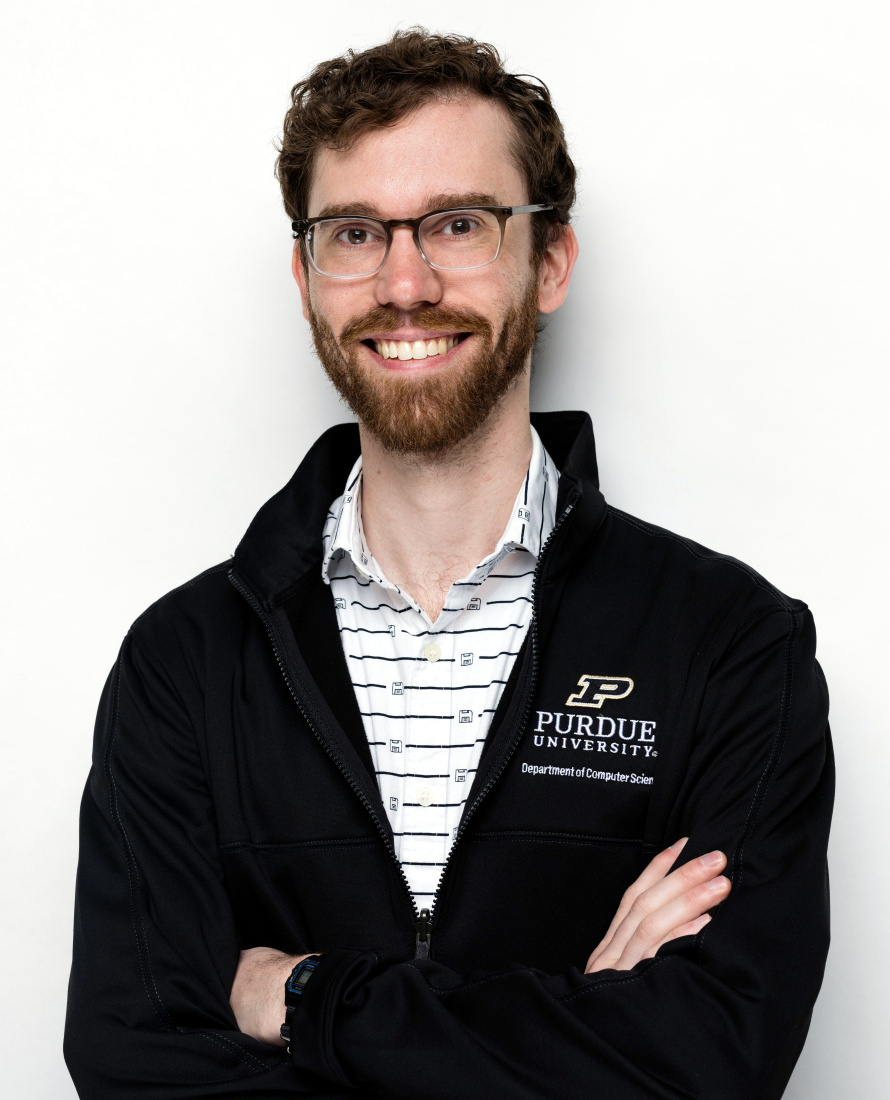}}]{Zachary Kingston} received his Ph.D. degree in Computer Science from Rice University, Houston, TX, USA, in 2021 under the supervision of Dr. Lydia E. Kavraki.

He is currently an Assistant Professor of Computer Science at Purdue University, West Lafayette, IN, USA. His research interests include robot motion planning, hardware-accelerated planning, and planning for systems under constraints.
\end{IEEEbiography}

\begin{IEEEbiography}[{\includegraphics[width=1in,height=1.25in,clip,keepaspectratio]{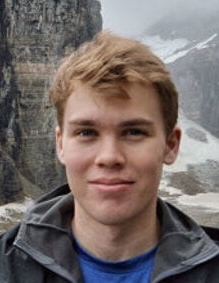}}]{Wil Thomason} received the Ph.D. degree in Computer Science from Cornell University, Ithaca, NY, USA, in 2021.

He was funded by the National Defense Science and Engineering Graduate Fellowship for his Ph.D. He is a research scientist at the Robotics and AI Institute. He was a Postdoctoral Computing Research Association/Computing Community Consortium Computing Innovation Fellow working under~the direction of Dr. Kavraki with the Department of Computer Science, Rice University, Houston, TX, USA.
\end{IEEEbiography}

\begin{IEEEbiography}[{\includegraphics[width=1in,height=1.25in,clip,keepaspectratio]{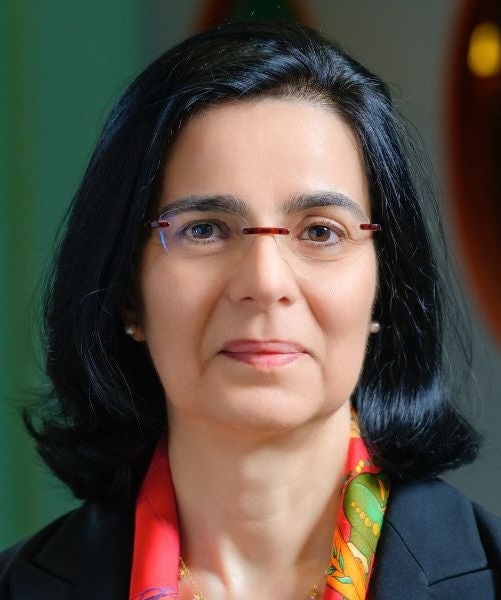}}]{Lydia E. Kavraki} (Fellow, IEEE) received the Ph.D. degree in computer science from Stanford University, Stanford, CA, USA, in 1995.

She is the Kenneth and Audrey Kennedy University Professor of Computing at Rice University, Houston, TX, USA, and Director of the Ken Kennedy Institute. She coauthored \textit{Principles of Robot Motion} (MIT Press, 2005), pioneered sampling-based motion planning, and leads development of the Open Motion Planning Library. Her research interests include motion planning, human-robot collaboration, and robotics/AI methods for computational biomedicine.

Dr. Kavraki is a Fellow of AAAI, AIMBE, ACM, and AAAS, and a Member of the National Academies of Engineering, Sciences, and Medicine.
\end{IEEEbiography}

%%%%%%%%%%%%%%%%%%%%%%%%%%%%%%%%%%%%%%%%%%%%%%%%%%%%%%%%%%%%%%%%%%%%%%%%%%%%%%%%
\end{document}